\newcolumntype{I}{!{\vrule width 3pt}}
\newlength\savedwidth
\newlength\savewidth
\renewcommand{\epsilon}{\varepsilon}
\newcommand{\event}{\mathcal{A}}
\newcommand{\eventb}{\mathcal{B}}
\newcommand{\eps}{\varepsilon}
\def \be{\begin{equs}}
\def \ee{\end{equs}}
\newtheorem{theorem}{Theorem}[section]
\newtheorem{lemma}[theorem]{Lemma}
\newtheorem{definition}[theorem]{Definition}
\newtheorem{corollary}[theorem]{Corollary}
\newtheorem{remark}[theorem]{Remark}
\newcommand{\CS}{\mathsf{CS}}
\newcommand{\DI}{\rho_q}
\newcommand{\MD}{\delta_q}
\newcommand{\eat}[1]{}
\newcommand{\I}{\mathrm{I}}
\newcommand{\R}{\mathbb{R}}
\newcommand{\Exp}{\mathbb{E}}
\newcommand{\calD}{\mathcal{D}}
\newcommand{\calE}{\mathcal{E}}
\newcommand{\calF}{\mathcal{F}}
\newcommand{\calQ}{\mathcal{Q}}
\newcommand{\calX}{\mathcal{X}}
\title{\bf Classification with Fairness Constraints: \\ A Meta-Algorithm  with Provable Guarantees}
\author{L. Elisa Celis, Lingxiao Huang, Vijay Keswani and Nisheeth K. Vishnoi \\ EPFL, Switzerland}
\date{}
\begin{document}

\maketitle

\begin{abstract}
Developing classification algorithms that are fair with respect to sensitive attributes of the data has become an important problem due to the growing deployment of classification algorithms in various social contexts.
Several recent works have focused on fairness with respect to  a specific metric, modeled the corresponding fair classification problem as a constrained optimization problem, and developed tailored algorithms to solve them.
Despite this, there still remain important  metrics for which we do not have fair classifiers and  
many of the aforementioned algorithms do not come with theoretical guarantees; perhaps because the resulting optimization problem is non-convex.
The main contribution of this paper is a new meta-algorithm for classification that takes as input a large class of  fairness constraints, with respect to multiple non-disjoint sensitive attributes, and which comes with provable guarantees.
This is achieved by first developing a meta-algorithm for a large family of classification problems with convex constraints, and then showing that classification problems with general types of fairness constraints can be reduced to those in this family.
We present empirical results that show that our algorithm can achieve near-perfect fairness with respect to various fairness metrics, and that the loss in accuracy due to the imposed fairness constraints is often small.
Overall, this work unifies several prior works on fair classification, presents a practical algorithm with theoretical guarantees, and can handle fairness metrics that were previously not possible.
\end{abstract}

\newpage

\tableofcontents

\newpage
\section{Introduction}
\label{sec:introduction}

Classification algorithms are increasingly being used in many societal contexts such as  criminal recidivism~\cite{northpointe2012compas}, predictive policing~\cite{hvistendahl2016can}, and job screening~\cite{miller2015can}.
There are growing concerns that these algorithms may introduce significant bias with respect to certain sensitive attributes, e.g., against black people while predicting future criminals~\cite{flores2016false,angwin2016machine,berk2009role}, granting loans~\cite{dedman1988color} or NYPD stop-and-frisk~\cite{goel2016precinct}, and against women while recommending jobs~\cite{datta2015automated}.
The US Executive Office \cite{united2014big} also voiced concerns about discrimination in automated decision making, including health care delivery and education.
Further, introducing bias may be illegal due to anti-discrimination laws~\cite{act1964civil,magarey2004sex,barocas2016big}, or can create social imbalance~\cite{executive2016big,acm2017statement}.
Thus, developing classification algorithms that can be fair with respect to sensitive attributes  has become an important problem in machine learning.

In classification, one is given a data vector and the goal is to decide whether it satisfies a certain property.
The algorithm is allowed to learn from a set of labeled data vectors that may be assumed to come from a distribution.
The accuracy of a classifier is measured as the probability that the classifier correctly predicts the label of a data vector drawn from the same distribution.
Each data vector, however, may also have a small number of sensitive attributes such as race, gender, and political opinion, and each setting of a sensitive attribute gives rise to potentially non-disjoint groups of data points.
Since fairness could mean different things in different contexts, a number of metrics have been used to determine how fair a classifier is with respect to a sensitive group when compared to another, e.g., statistical parity~\cite{dwork2012fairness}, equalized odds~\cite{hardt2016equality}, and predictive parity~\cite{dieterich2016compas}.
To ensure fairness across different groups, one idea is to make predictions without the information of sensitive attributes, which avoids disparate treatment~\cite{act1964civil}. However, since the learning data may contain historical biases, classifiers trained on such data may still have indirect discrimination for certain sensitive groups~\cite{pedreshi2008discrimination}.

Several recent works use the sensitive attributes and the desired notion of group fairness to place constraints on the classifier -- formulating it as a constrained optimization problem that maximizes accuracy -- and develop tailored algorithms to find such classifiers, e.g., constrained to statistical parity~\cite{zafar2017fairness,menon2018the,goel2018non} or equalized odds~\cite{hardt2016equality,zafar2017fair,menon2018the}.
However, many of these algorithms are without a provable guarantee, perhaps because the resulting optimization problem turns out to be non-convex; e.g., for statistical parity~\cite{zafar2017fairness,krasanakis2018adaptive} and equalized odds~\cite{zafar2017fair}.

Predictive parity, that measures whether the fractions over the class distribution for the predicted labels are close between different groups,
is important in predicting criminal recidivism~\cite{flores2016false,dieterich2016compas},  stopping-and-frisking pedestrians~\cite{goel2016precinct}, and predicting heart condition~\cite{pleiss2017on}.
Dieterich et al.~\cite{dieterich2016compas} investigated such definitions on the dataset \textbf{COMPAS}~\cite{compas}.
Concretely, they check whether the probability of recidivating (positive class), given a high-risk score of a classifier (positive prediction), is similar for blacks and whites, which avoids racial bias.
Similar concerns apply to  the NYPD stop-and-frisk program~\cite{goel2016precinct}, where pedestrians are stopped on the suspicion of carrying illegal weapon.
It may cause racial disparities by having different weapon discovery rates for different races.
Another motivating example of false discovery parity is mentioned in~\cite{pleiss2017on}.
They consider the classification problem on the dataset \textbf{Heart} in which the goal is to accurately predict whether or not an individual has a heart condition.
Since a false prediction of a heart condition could result in
unnecessary medical attention, it is desirable to reduce the disparity between men and women that avoids unfair cost for a gender group.
Zafar et al.~\cite{zafar2017fair} listed false discovery/false omission parity (which are two types of predictive parity) as two special measures of disparate mistreatment.
They left it as an open problem to extend their algorithm to solve the fair classification problem with false discovery or false omission parity.

We present a new meta-classification algorithm that takes as input a large class of fairness constraints, with respect to multiple non-disjoint sensitive attributes, and which comes with theoretical guarantees.
This is achieved by first developing a meta-algorithm for a family of fair classification problems with convex constraints and subsequently showing that classification problems with more general constraints can be reduced to those in this family.
We also present empirical results that show that our algorithm is practical and, as in prior work, the loss in accuracy due to fairness constraints is small.
Moreover, the empirical results show that our algorithm can handle predictive parity well.
Overall, this work unifies and extends several prior works on fair classification, and presents a practical algorithm with theoretical guarantees.

We develop a general form of constrained optimization problems which encompasses many existing fair classification problems.
Since the constraints of the general problem can be non-convex, we relax the constraints to be linear (Section~\ref{sec:model}) and present an algorithm for the resulting linear constrained optimization problem (Section~\ref{sec:algorithm}).
We reduce the general problem to solving a family of linear constrained optimization problems (Section~\ref{sec:relaxed_algorithm}).
We present experiments on \textbf{Adult, German credit} and \textbf{COMPAS} dataset that show that our algorithm achieves a reasonable tradeoff between fairness and accuracy and can handle more constraints like predictive parity (Section~\ref{sec:experiment}).

Overall, we propose a general framework to handle many existing fairness definitions, instead of designing specified algorithms for different fairness definitions (Table~\ref{tab:result}).
Our framework comes with provable guarantees that the resulting classifier is approximately optimal.
To the best of our knowledge, our framework is the first one that can handle predictive parity with provable guarantees.

\subsection{Related Work}
\label{sec:related}

The most relevant works to ours in technique are~\cite{davies2017algorithmic,kamishima2012fairness,menon2018the}, all of which considered the Bayesian classification model for statistical parity or equalized odds.
They either reduce their problem to unconstrained optimization problem by the Lagrangian principle or can be alternately expressed in that form.
Our work uses similar techniques but in comparison can handle a wider class of fairness metrics.

Another approach is to propose other fairness metrics as a proxy of statistical parity or equalized odds, e.g.,~\cite{zafar2017fair,zafar2017fairness,goel2018non}.
Zafar et al.~\cite{zafar2017fair,zafar2017fairness} proposed a covariance-type constraint for statistical parity and equalized odds.
The third approach post-processes a baseline classifier by shifting the decision boundary (can be different for different groups), e.g.,~\cite{fish2016confidence,hardt2016equality,goh2016satisfying,pleiss2017on, woodworth2017learning,dwork2018decoupled}.
These approaches modify the constrained classification problem.

There are increasingly many works with provable guarantees, including \cite{fish2016confidence,hardt2016equality,pleiss2017on, woodworth2017learning}, that provide different classification algorithms with constraints on statistical parity or equalized odds, and \cite{joseph2016fairness, kearns2017meritocratic} for fairness in multi-armed bandit settings or ranking problems respectively. 
To the best of our knowledge, our algorithm is the first unifying framework for all current \cite{arvindVideo} and potential future fairness metrics, with provable guarantees.

\cite{agarwal2018reductions,quadrianto2017recycling} also provide a general framework to handle multiple fairness constraints. 
Quadrianto and Sharmanska \cite{quadrianto2017recycling} encode fairness constraints as a distance between the distributions for different values of a single binary sensitive attribute, and then use the privileged learning framework to optimize loss with respect to fairness constraints. 
While this results in an interesting heuristic, they do not provide theoretical guarantees for their approach.
Agarwal et al. \cite{agarwal2018reductions} give a method to compute a nearly optimal fair classifier with respect to linear fairness constraints, like demographic parity or equalized odds, by the Lagrangian method. 
However their framework does not support constraints on predictive parity (see Remark~\ref{remark:agarwalPaper}).

Another line of research is to pre-process on the training data and achieve an unbiased dataset for learning, e.g.,~\cite{kamiran2009classifying,luong2011k,kamiran2012data,zemel2013learning,feldman2015certifying,krasanakis2018adaptive}.
This approach is quite different from ours since we focus on learning classifiers and investigating the accuracy-fairness tradeoff from the feeding dataset.

Beyond group fairness, recent works also proposed other fairness definitions concerned in classification.
Dwork et al.~\cite{dwork2012fairness} and Zemel et al.~\cite{zemel2013learning} discussed a notion of \emph{individual fairness} that similar individuals should be treated similarly.
\cite{zafar2017from} defined \emph{preference fairness} based on the concepts of fair division and envy-freeness in economics.
Moreover, Grgi{\'c}-Hla{\v{c}}a et al.~\cite{grgic2018beyond,grgic2018human} discussed \emph{procedural fairness} that investigates which input features are fair to use in the decision process and how including or excluding the features would affect outcomes.
Finally, Chouldechova~\cite{chouldechova2017fair} and Kleinberg et al.~\cite{kleinberg2017inherent} investigated the inherent tradeoff between equalized odds and predictive parity (called well-calibrated in their papers).

\begin{table}[t]
	\centering
	\caption{Capabilities of different frameworks in handling different fairness metrics. 
		The symbol $\surd$ (or $\star$) represents that the corresponding framework can handle (or can be extended to handle) the corresponding fairness metrics respectively.
	} 
	\scriptsize{
		\begin{tabular}{|*{13}{c|}} \hline
			\multicolumn{2}{|c|}{\multirow{2}{*}{}} & \multicolumn{2}{c|}{$q_i(f)$} &  \multirow{2}{*}{L/LF }
			& \multirow{2}{*}{This}
			&  \multirow{2}{*}{\cite{hardt2016equality}}
			&  \multirow{2}{*}{\cite{woodworth2017learning}}
			& \multirow{2}{*}{\cite{zafar2017fairness} }
			& \multirow{2}{*}{\cite{zafar2017fair} }
			& \multirow{2}{*}{\cite{menon2018the} } 
			& \multirow{2}{*}{\cite{goel2018non}} 
			& \multirow{2}{*}{\cite{krasanakis2018adaptive}} \tabularnewline \cline{3-4}
			\multicolumn{2}{|c|}{} & $\calE$ & $\calE'$ & & & & & & & & & \tabularnewline \cline{1-13}
			\multirow{13}{*}{\rotatebox{90}{fairness defn.}} &
			statistical & $f=1$ & $\emptyset$ & $\calQ_{\mathrm{lin} }$& $\surd$ & & &   $\surd$& & $\surd$ & $\surd$ & $\surd$ \tabularnewline \cline{2-13}
			& conditional statistical & $f=1$ & $X\in S$ & $\calQ_{\mathrm{lin} }$& $\surd$ & & &   $\surd$& & $\star$ & $\star$ & \tabularnewline \cline{2-13}
			& false positive & $f=1$ & $Y=0$ &$\calQ_{\mathrm{lin} }$& $\surd$&  $\surd$&  $\surd$& & $\surd$ & $\star$& & $\surd$ \tabularnewline \cline{2-13}
			& false negative & $f=0$ & $Y=1$ &$\calQ_{\mathrm{lin} }$& $\surd$&  $\star$& $\star$& & $\surd$ & $\star$& & $\surd$ \tabularnewline \cline{2-13}
			& true positive & $f=1$ & $Y=1$ &$\calQ_{\mathrm{lin} }$& $\surd$&  $\surd$& $\surd$&  & $\star$ & $\surd$ & &  \tabularnewline \cline{2-13}
			& true negative & $f=0$ & $Y=0$ &$\calQ_{\mathrm{lin} }$& $\surd$&  $\star$& $\star$& & $\star$ & $\star$& & \tabularnewline \cline{2-13}
			& accuracy & $f=Y$ & $\emptyset$ &$\calQ_{\mathrm{lin} }$& $\surd$& & & &  $\surd$  &  $\star$ & & \tabularnewline \cline{2-13}
			& false discovery & $Y = 0$ & $f=1$ & $\calQ_{\mathrm{linf} }$ & $\surd$ & & & & & & & \tabularnewline \cline{2-13}
			& false omission & $Y = 1$ & $f=0$ &$\calQ_{\mathrm{linf} }$ & $\surd$& & &  & & & & \tabularnewline \cline{2-13}
			& positive predictive & $Y = 1$ & $f=1$ & $\calQ_{\mathrm{linf} }$ & $\surd$ & & & & & & & \tabularnewline \cline{2-13}
			& negative predictive & $Y = 0$ & $f=0$ & $\calQ_{\mathrm{linf} }$ & $\surd$ & & & & & & & \tabularnewline \hline
		\end{tabular}
	}
	\label{tab:result}
\end{table}

\section{Background and Notation}
\label{sec:model}

We consider the Bayesian model for classification.
Let $\Im$ denote a joint distribution over the domain $\calD=\calX\times [p_1]\times \cdots \times [p_n]\times\left\{0,1\right\}$ where $\calX$ is the feature space.
Each sample $(X,Z_1,\ldots,Z_n,Y)$ is drawn from $\Im$ where each $Z_i\in [p_i]$ ($i\in [n]$) represents a sensitive attribute, and $Y\in \left\{0,1\right\}$ is the label of $(X,Z_1,\ldots,Z_n)$ that we want to predict.
For the sake of readability, we discuss the case where there is only one sensitive attribute $Z\in \left\{1,2,\ldots,p\right\}$ in the main text.
We defer the case of multiple sensitive attributes to Appendix~\ref{app:multi}.
Fixing different values of $Z$ partitions the domain $\mathcal{D}$ into $p$ groups 
$$G_i:=\left\{(x,i,y)\in\calD\right\}.$$
Let $\calF$ denote the collection of all possible classifiers.
Given a loss function $L(\cdot,\cdot)$, there are two models of binary classification:
\begin{enumerate}
	\item If $Z$ is not used for prediction, then our goal is to learn a classifier $f:\calX\rightarrow \left\{0,1\right\}$ that minimizes $L(f;\Im)$.
	In this model, $\calF= \left\{0,1\right\}^\calX$.
	\item If $Z$ is used for prediction, then our goal is to learn a classifier $f:\calX\times [p]\rightarrow \left\{0,1\right\}$ that minimizes $L(f;\Im)$.
	In this model, $\calF= \left\{0,1\right\}^{\calX\times [p]}$.
\end{enumerate}
Denote $\Pr_{\Im}[\cdot]$ to be the probability with respect to $\Im$.
If $\Im$ is clear from context, we denote $\Pr_{\Im}[\cdot]$ by $\Pr[\cdot]$.
A commonly used loss function is the prediction error, i.e., 
$$L(f;\Im)=\Pr_\Im\left[f\neq Y\right].$$
Here, by abuse of notation, we use $f$ to represent $f(X)$ for the first model and $f(X,Z)$ for the second model.

\begin{remark}
Zafar et al.~\cite{zafar2017fairness,zafar2017fair} studied the first model.
Since the output is a single classifier for all groups, for any $x\in \calX$ and $i,j \in [p]$, the predictions for $(x,i)$ and $(x,j)$ are the same, i.e., disparate treatment does not happen.
The second model is also investigated in ~\cite{hardt2016equality,menon2018the}.
The goal can be regarded as learning a different classifier for each $G_i$. 
Hence, disparate treatment can arise.
\end{remark}

\noindent
Apart from minimizing the loss function, existing fair classification problems also want to achieve similar \emph{group performances} for all $G_i$.
There are many metrics to measure the group performance, including statistical/true positive/accuracy/false discovery rates; see Table~\ref{tab:result} for a summary. 
For example, the statistical rate of $G_i$ is of the form $\Pr_\Im\left[ f=1\mid G_i\right]$, i.e., the probability of an event ($f=1$) conditioned on another event ($G_i$).
Formally, we define group performance as follows.

\begin{definition}[\textbf{Group performance and group performance function}]
	\label{def:group_benefit}
	Given a classifier $f\in \calF$ and $i\in [p]$, we call $q^\Im_i(f)$ the group performance of $G_i$ if $q^\Im_i(f)=\Pr_\Im\left[\calE\mid G_i,\calE' \right]$ for some events $\calE,\calE'$ that might depend on the choice of $f$.
	Define a group performance function $q^\Im:\calF\rightarrow [0,1]^p$ for any classifier $f\in \calF$ as $q^\Im(f)=\left(q^\Im_1(f),\ldots,q^\Im_p(f)\right)$.
\end{definition}

\noindent
When $\Im$ is clear from context, we denote $q^\Im$ by $q$.
Since we need to measure $q^\Im_i(f)$, we assume the existence of an oracle to answer $\Pr_{\Im}[\calE]$ for any event $\calE$ as per the  context.
At a high level, a classifier $f$ is considered to be fair w.r.t. to $q$ if all $q_i(f)\approx q_j(f)$.
Consider the following examples of $q$.

\begin{enumerate}[leftmargin=*]
	\item For accuracy rate where $\calE:=(f=Y)$ and $\calE':=\emptyset$, i.e., $q_i(f)$ is the accuracy of the classifier on group $G_i$, we can rewrite $q_i(f)$ as follows:
	\[
	q_i(f)=\Pr\left[Y=0\mid G_i\right]+\sum_{j\in \left\{0,1\right\}}(-1)^{j-1}\cdot \Pr\left[Y=j\mid G_i\right]\cdot \Pr\left[f=1\mid G_i, Y=j\right],
	\]
	i.e., a linear combination of conditional probabilities $\Pr\left[f=1\mid G_i, \mathcal{E}_j\right]$ where all events $\mathcal{E}_j$ are independent of the choice of $f$.
	\item For false discovery rate where $\calE:=(Y=0)$ and $\calE':=(f=1)$, i.e., $q_i(f)$ is the prediction error on the sub-group of $G_i$ with positive predicted labels, we can rewrite $q_i(f)$ as follows:
	\[
	q_i(f)=\frac{\Pr\left[Y=0, G_i\right]\cdot \Pr\left[f=1\mid  G_i, Y=0\right]}{\Pr\left[G_i\right]\cdot \Pr\left[ f=1\mid G_i\right]},
	\]
	i.e., the fraction of two conditional probabilities $\Pr\left[f=1\mid G_i, Y=0\right]$ and $\Pr\left[f=1\mid G_i\right]$.
\end{enumerate}

\noindent
In both these examples, $q_i(f)$ can be written in terms of probabilities $\Pr\left[f=1\mid G_i, \cdot \right]$ as either a linear combination, or as a quotient of linear combinations.
Below we define a general class of group performance functions that generalizes these two examples.

\begin{definition}[\textbf{Linear-fractional/linear group performance functions}]
	\label{def:frac_lin}
	A group performance function $q$ is called \textbf{linear-fractional} if for any $f\in \calF$ and $i\in [p]$, $q_i(f)$ can be rewritten as 
	\begin{align}
	\label{eq:fractional}
	q_i(f) = \frac{ \alpha^{(i)}_0+ \sum_{j=1}^{k} \alpha^{(i)}_j\cdot \Pr_\Im\left[f=1\mid G_i,\event^{(i)}_j\right]}{\beta^{(i)}_0+ \sum_{j=1}^{l} \beta^{(i)}_j\cdot \Pr_\Im\left[f=1\mid G_i,\eventb^{(i)}_j\right]}
	\end{align}
	for two integers $k,l\geq 0$, events $\event^{(i)}_1, \ldots, \event^{(i)}_k, \eventb^{(i)}_1,\ldots,\eventb^{(i)}_l$ that are independent of the choice of $f$, and parameters $\alpha^{(i)}_0,\ldots,\alpha^{(i)}_k, \beta^{(i)}_0,\ldots,\beta^{(i)}_l\in \R$ that may depend on $\Im$ but are independent of the choice of $f$. 

	Denote $\calQ_{\mathrm{linf} }$ to be the collection of all linear-fractional group performance functions.
	Specifically, if $l=0$ and $\beta^{(i)}_0= 1$ for all $i\in [p]$, $q$ is said to be \textbf{linear}.
	Denote $\calQ_{\mathrm{lin} }\subseteq \calQ_{\mathrm{linf} }$ to be the collection of all linear group performance functions.
\end{definition}

\noindent
In Appendix~\ref{app:fair_notion}, we show that all $q$ in Table~\ref{tab:result} are linear-fractional and many of them are even linear.

Given group performance functions $q$, we can formulate and study fair classification problems.
A classifier $f$ is said to satisfy $\tau$-rule w.r.t. to $q$ if \[
\DI(f):=\min_{i\in [p]}q_i(f)/ \max_{i\in [p]}q_i(f)\geq \tau;
\] 
see~\cite{feldman2015certifying,zafar2017fair,zafar2017fairness,menon2018the}.
If $\tau$ is close to 1, $f$ is considered to be more fair. 
Assume there are $m$ fractional group performance functions $q^{(1)},\ldots,q^{(m)}\in \calQ_{\mathrm{linf} }$ and $L(f;\Im)=\Pr_\Im\left[f\neq Y\right]$.
Given $\tau_1,\ldots,\tau_m\in [0,1]$, our main objective is to solve the following fair classification program induced by $\DI$, which captures existing constrained optimization problems~\cite{kamiran2009classifying,davies2017algorithmic,menon2018the} as special cases.
\begin{equation} \tag{$\rho$-Fair}
\label{eq:progDI}
\begin{split}
& \min_{f\in \calF} \Pr_\Im\left[f\neq Y\right] \\ 
& s.t.,~ \rho_{q^{(i)}}(f)=\min_{j\in [p]}q^{(i)}_j(f)/ \max_{j\in [p]}q^{(i)}_j(f)\geq \tau_i, ~ \forall i\in [m].
\end{split}
\end{equation}

\begin{remark}
	\label{remark:fair}
	Specifically, if $\tau_i = 1$, the program above computes a classifier $f$ with perfect fairness w.r.t. to $q^{(i)}$. 
	This setting is	well studied in the literature~\cite{calders2010three,dwork2012fairness,feldman2015certifying,hardt2016equality,zafar2017fairness,zafar2017fair,zemel2013learning}.
	However,  perfect fairness  is known to have deficiencies~\cite{chouldechova2017fair, friedler2016possibility,hardt2016equality,kleinberg2017inherent} and, hence,
	prior work considers the relaxed fairness metric $\tau$-rule where $\tau_i<1$.

	Another relaxed fairness metric is defined by 
	\[
	\MD(f):=\min_{i\in [p]}q_i(f)-\max_{i\in [p]}q_i(f).
	\]
	Computing a classifier $f$ such that $\MD(f)\geq \tau$ ($\tau\in [-1,0]$) has also been investigated in the literature~\cite{calders2010three,menon2018the}.
	We refer the reader to a survey~\cite{zliobaite2017measuring} for other relaxed fairness metrics, e.g., AUC and correlation.
\end{remark}

\noindent
Computationally, the constraints of \ref{eq:progDI} are non-convex in general.
To handle this problem, we consider  \emph{linear fairness constraints}, which have been considered in other fundamental algorithmic problems including sampling \cite{FatML,celis2017complexity,celis2018fair}, ranking \cite{celis2018ranking}, voting~\cite{celis2018multiwinner}, and personalization \cite{celis2018an}. 
We  introduce the following program as a subroutine for solving \ref{eq:progDI}.

\begin{definition}[\textbf{Classification with fairness constraints}]
	\label{def:classification_fairness}
	Given $\ell^{(i)}_j,u^{(i)}_j\geq 0$ for all $i\in [m]$ and $j\in [p]$, the fairness constraint for $G_j$ and $q^{(i)}$ is 
	\[
	\ell^{(i)}_j\leq q^{(i)}_j(f)\leq u^{(i)}_j.
	\]
	We consider the following classification problem with fairness constraints:
	\begin{align} \tag{Group-Fair}
	\label{eq:progfair}
	\begin{split}
	&	\min_{f\in \calF} \Pr_\Im\left[f\neq Y\right] \\ 
	& s.t.,~ \ell^{(i)}_j\leq q^{(i)}_j(f) \leq u^{(i)}_j, ~ \forall i\in [m], j\in [p].
	\end{split}
	\end{align}
\end{definition}

\noindent
It is not hard to see that for any feasible classifier $f$ of \ref{eq:progfair} and any $i\in [m]$, $f$ satisfies $\frac{\min_{j\in [p]} \ell^{(i)}_j }{ \max_{j\in [p]} u^{(i)}_j}$-rule w.r.t. to $q^{(i)}$. 
Moreover, since parameters $\ell^{(i)}_j,u^{(i)}_j$ can be non-uniform, \ref{eq:progfair} can treat different groups differently.
In this sense, \ref{eq:progfair} is more flexible than \ref{eq:progDI}.

\begin{remark} \label{remark:agarwalPaper}
Agarwal et al. \cite{agarwal2018reductions} provide a framework for the above problem when the constraints are linear.
In particular, their framework supports fairness constraints that are linearly dependent on the conditional moments of the form $\mathbb{E}[g(\cdot, f) \mid \mathcal{E}]$, where $g$ is a function that depends on the classifier $f$ along with features of the element while $\mathcal{E}$ is an event that does not depend on $f$.  
However linear-fractional constraints cannot be directly represented in this form, since here the event we condition on, $\mathcal{E}$, depends on the classifier $f$, which is why their framework does not support constraints like predictive parity.
\end{remark}

\section{Theoretical Results}
\label{sec:theoretical}

In this section, we present an efficient meta-algorithm to approximately solve \ref{eq:progDI} that comes with provable guarantees (Theorem~\ref{thm:quantification}, Section~\ref{sec:relaxed_algorithm}).
To this end, we show that \ref{eq:progDI} can be efficiently reduced to \ref{eq:progfair} (Section \ref{sec:reduction}).
Subsequently, we show that there exists a polynomial time algorithm that computes an approximately optimal classifier for \ref{eq:progfair} (Section~\ref{sec:algorithm}).
For convenience, we only consider $m=1$ in this section, i.e., there is only one group performance function $q$ and we require $\DI(f)\geq \tau$ for some $\tau\in [0,1]$.
The general case of multiple group performance functions is discussed in Appendix~\ref{app:multi}.

\subsection{Reduction from \ref{eq:progDI} to \ref{eq:progfair}}
\label{sec:reduction}

We first show the generality of \ref{eq:progfair}, i.e., approximately solving \ref{eq:progDI} can be reduced to solving a family of \ref{eq:progfair}. 
A $\beta$-approximate algorithm for \ref{eq:progfair} ($\beta\geq 1$) is an efficient algorithm that computes a feasible classifier with prediction error at most $\beta$ times the optimal prediction error of \ref{eq:progfair}.

\begin{theorem}[\textbf{Reduction from \ref{eq:progDI} to \ref{eq:progfair}}]
	\label{thm:progDI}
	Given $\tau\in [0,1]$, let $f^\star_\tau$ denote an optimal fair classifier for \ref{eq:progDI}.
	Given a $\beta$-approximate algorithm $A$ for \ref{eq:progfair} ($\beta\geq 1$) and any $\eps>0$, there exists an algorithm that applies $A$ at most $\lceil \tau/\eps \rceil$ times and computes a classifier $f\in \calF$ such that 
	\begin{enumerate}
		\item $\Pr\left[f\neq Y\right]\leq \beta\cdot\Pr\left[f^\star_\tau\neq Y\right]$; 
		\item $\min_{i\in [p]}q_i(f)\geq \tau\cdot \max_{i\in [p]}q_i(f)-\eps$.
	\end{enumerate}
\end{theorem}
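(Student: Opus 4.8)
The plan is to eliminate the non-convex ratio constraint $\DI(f)=\min_j q_j(f)/\max_j q_j(f)\ge\tau$ by \emph{guessing} the value $M^\star:=\max_{j\in[p]}q_j(f^\star_\tau)$ attained at the optimum. The key observation is that once a target ceiling $M$ is fixed, the two-sided box constraints $\tau M\le q_j(f)\le M$ for all $j\in[p]$ form exactly an instance of \ref{eq:progfair}, and any feasible $f$ automatically satisfies $\DI(f)\ge \tau M/M=\tau$. Thus if we knew $M^\star$ we could solve a single \ref{eq:progfair} instance with $\ell_j=\tau M^\star$ and $u_j=M^\star$; since $f^\star_\tau$ is feasible for it (as $q_j(f^\star_\tau)\in[\tau M^\star,M^\star]$ by the $\tau$-rule), running $A$ would return a classifier of error at most $\beta\Pr[f^\star_\tau\ne Y]$. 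As $M^\star$ is unknown, I would discretize it.

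First I would set up a grid $M_c:=c\eps/\tau$ for $c=1,\dots,\lceil\tau/\eps\rceil$, which covers $(0,1]$ with spacing $\eps/\tau$. For each $c$ I build the \ref{eq:progfair} instance $I_c$ with $u_j=M_c$ and the \emph{relaxed} lower bound $\ell_j=\max\{\tau M_c-\eps,\,0\}$ for all $j$, run $A$ on $I_c$ (ignoring those $I_c$ that admit no feasible classifier), and finally output the returned classifier of smallest prediction error. This uses $A$ at most $\lceil\tau/\eps\rceil$ times.

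For the analysis, let $m^\star:=\min_j q_j(f^\star_\tau)$, so feasibility of $f^\star_\tau$ for \ref{eq:progDI} gives $m^\star\ge\tau M^\star$. Let $c^\star$ be the smallest index with $M_{c^\star}\ge M^\star$; then $M^\star\le M_{c^\star}<M^\star+\eps/\tau$. I would check that $f^\star_\tau$ is feasible for $I_{c^\star}$: the upper bound holds since $q_j(f^\star_\tau)\le M^\star\le M_{c^\star}$, and for the lower bound $\tau M_{c^\star}-\eps<\tau M^\star\le m^\star\le q_j(f^\star_\tau)$, so $\ell_j\le q_j(f^\star_\tau)$ in both cases of the $\max$. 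Hence the optimal error of $I_{c^\star}$ is at most $\Pr[f^\star_\tau\ne Y]$, and the $\beta$-approximation guarantee yields a returned classifier of error at most $\beta\Pr[f^\star_\tau\ne Y]$; the final min-error output therefore satisfies Item 1. For Item 2, every returned classifier $f$ is feasible for some $I_c$, so $q_j(f)\in[\,\max\{\tau M_c-\eps,0\},\,M_c\,]$; thus $\min_j q_j(f)\ge\tau M_c-\eps\ge\tau\max_j q_j(f)-\eps$ when $\tau M_c\ge\eps$, while when $\tau M_c<\eps$ we instead use $\min_j q_j(f)\ge 0>\tau M_c-\eps\ge \tau\max_j q_j(f)-\eps$. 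Either way Item 2 holds, and in particular it holds for the min-error output.

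The main obstacle is precisely why the lower bound must be relaxed to $\tau M_c-\eps$ rather than kept at $\tau M_c$: the set of ceilings $M$ for which $f^\star_\tau$ is box-feasible is $[M^\star,\,m^\star/\tau]$, whose length $m^\star/\tau-M^\star$ can be zero when the $\tau$-rule is tight ($m^\star=\tau M^\star$). A grid of spacing $\eps/\tau$ then generically misses this single point $M^\star$, so without slack no instance $I_c$ would keep $f^\star_\tau$ feasible and the error bound of Item 1 could fail. Subtracting $\eps$ from the lower bound widens the feasibility window enough to contain the nearest grid point, at the cost of the additive $\eps$ weakening in Item 2; balancing these two effects is the crux of the argument.
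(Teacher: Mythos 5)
Your proposal is correct and is essentially the paper's own proof: the instances you construct, with $\ell_j=\max\{\tau M_c-\eps,0\}=(c-1)\eps$ and $u_j=M_c=c\eps/\tau$, coincide exactly with the paper's programs $P_i$ (which use $a_i=(i-1)\eps$, $b_i=i\eps/\tau$), and the output rule and both halves of the analysis match. The only cosmetic difference is that you locate the right instance by bracketing $\max_j q_j(f^\star_\tau)$ on the grid, whereas the paper brackets $\min_j q_j(f^\star_\tau)$ in $[(j-1)\eps,j\eps)$; both yield the same feasibility conclusion.
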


\begin{proof}
	Let $T:=\lceil \tau/\eps \rceil $. 
	For each $i\in [T]$, denote $a_i := (i-1)\cdot \eps$ and $b_i := i\cdot \eps/\tau$.
	For each $i\in [T]$, we construct a \ref{eq:progfair} program $P_i$  with $\ell_j=a_i$ and $u_j=b_i$ for all $j\in [p]$.
	Then we apply $A$ to compute $f_i\in \calF$ as a solution of $P_i$.
	Among all $f_i$, we output $f$ such that $\Pr\left[f\neq Y\right]$ is minimized.
	Next, we verify that $f$ satisfies the conditions in the theorem.

	Note that $a_i\geq \tau\cdot b_i-\eps$ for each $i\in [T]$. 
	We have 
	\[
	\min_{i\in [p]}q_i(f)\geq \tau\cdot \max_{i\in [p]}q_i(f)-\eps.
	\]
	On the other hand, assume that 
	$$(j-1)\cdot \eps\leq \min_{i\in [p]}q_i(f^\star_\tau) < j\cdot \eps$$ for some $j\in [T]$.
	Since $f^\star_\tau$ is a feasible solution of \ref{eq:progDI}, we have 
	\[
	\max_{i\in [p]}q_i(f^\star_\tau)\leq \frac{\min_{i\in [p]}q_i(f^\star_\tau)}{\tau}< j\cdot \eps/\tau.
	\]
	Hence, $f^\star_\tau$ is a feasible solution of Program $P_j$.
	Finally, by the definitions of $A$ and $f$, we have 
	\[
	\Pr\left[f\neq Y\right]\leq \Pr\left[f_j\neq Y\right]\leq \beta\cdot \Pr\left[f^\star_\tau\neq Y\right].
	\]
\end{proof}

\noindent
The above theorem can be generalized to any loss function instead of the prediction error.
The above reduction still holds for the $m>1$ case. 
The only difference is that we need to apply algorithm $A$ around $\eps^{-m}$ times.
This enables us to simultaneously handle a constant number of fairness requirements; see Appendix~\ref{app:multi} for details.

\subsection{Algorithm for \ref{eq:progfair}}
\label{sec:algorithm}

In this section, we propose an algorithm for \ref{eq:progfair}.
We only state the main result here and defer the details to Sectioin~\ref{app:algorithm}.
For concreteness, we first consider the case that $\calF=\left\{0,1\right\}^{\calX}$ and $q\in \calQ_{\mathrm{lin} }$. 
By Definition~\ref{def:frac_lin}, assume that 
\[
q_i(f)=\alpha^{(i)}_0+ \sum_{j=1}^{k} \alpha^{(i)}_j\cdot \Pr\left[f=1\mid G_i,\event^{(i)}_j\right]
\]
for $f\in \calF$ and $i\in [p]$.

Without fairness constraints, we can prove that \[
f^\star:=\I\left[\Pr\left[Y=1\mid X=x\right]-0.5>0\right]
\] 
is an optimal classifier minimizing the prediction error $\Pr\left[f\neq Y\right]$. 
(Here $I(\cdot)$ is the indictor function.)
But such $f^\star$ might not satisfy all the fairness constraints as we want.
Hence, we introduce a regularization parameter $\lambda\in \R^p$ and consider the following fairness-aware classification problem
\begin{align}
\label{eq:fairness_aware}
f_\lambda^\star:=\arg\min_{f\in \calF} \Pr\left[f\neq Y\right]-\sum_{i\in [p]} \lambda_i\cdot q_i(f).
\end{align}

\noindent
Now we can ``control'' $q_i(f_\lambda^\star)$ by adjusting $\lambda$.
Intuitively, increasing $\lambda_i$ leads to an increase in $q_i(f_\lambda^\star)$.
By selecting suitable $\lambda$, we can expect that $f^\star_\lambda$ satisfies all fairness constraints.
As we then show there exists some $\lambda\in \R^p$ such that \ref{eq:progfair} is equivalent to \eqref{eq:fairness_aware}, by the Lagrangian principle.
Moreover, $f_\lambda^\star$ can be shown to be an instance-dependent threshold function with the threshold
\begin{align}
\label{eq:s_lambda}
\begin{split}
s_\lambda(x):= \Pr\left[Y=1\mid X=x\right]-0.5+\sum_{i\in [p]}\lambda_i\cdot \psi_i(x),
\end{split}
\end{align}
where 
\[
\psi_i(x)=\sum_{j=1}^{k} \frac{\alpha^{(i)}_j}{\Pr\left[G_i,\event^{(i)}_j \right]}\cdot \Pr\left[G_i,\event^{(i)}_j\mid X=x\right].
\]
Observe that the term $\Pr\left[Y=1\mid X=x\right]-0.5$ is exactly the threshold for the unconstrained optimal classifier $f^\star$, and the remaining term $\sum_{i\in [p]}\lambda_i\cdot \psi_i(x)$ can be regarded as a threshold correction induced by $\lambda$.
Such an approach is also used in other contexts to reduce constrained optimization problems to unconstrained ones, see e.g.,~\cite{maji2011biased,mahoney2012a}.

For a number $t\in \R$, we define $t_+:=\max\left\{0,t\right\}$. 
We summarize the main theorem as follows.

\begin{theorem}[\textbf{Solution characterization and computation for $q\in \calQ_{\mathrm{lin} }$}]
	\label{thm:not_linear}
	Given any parameters $\ell, u\in [0,1]^p$, there exist optimal Lagrangian parameters $\lambda^\star\in \R^p$ such that $\I[s_{\lambda^\star}(X)>0]$ is an optimal fair classifier for \ref{eq:progfair}.
	Moreover, $\lambda^\star$ can be computed in polynomial time as a solution to the following convex program:
	\begin{align}
	\label{eq:main_lambda}
	\lambda^\star = \arg\min_{\lambda\in \R^p} \Exp_X\left[ (s_\lambda(X))_+ \right] + \sum_{i\in [p]} \left(\alpha^{(i)}_0-u_i\right)\lambda_i +\sum_{i\in [p]} \left(u_i-\ell_i\right)\cdot (\lambda_i)_+.
	\end{align}
\end{theorem}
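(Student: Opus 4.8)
The plan is to treat \ref{eq:progfair} as an infinite-dimensional linear program over randomized classifiers and to extract the stated threshold rule from its Lagrangian dual. First I would relax the domain from $\calF=\{0,1\}^\calX$ to randomized classifiers $f\colon\calX\to[0,1]$, reading $f(x)$ as the probability of predicting $1$ on $X=x$. Writing $\eta(x):=\Pr[Y=1\mid X=x]$, the objective becomes $\Pr[f\ne Y]=\Exp_X[\eta(X)]+\Exp_X[f(X)(1-2\eta(X))]$, and since $q\in\calQ_{\mathrm{lin}}$ each constraint functional is affine, $q_i(f)=\alpha^{(i)}_0+\Exp_X[f(X)\,\psi_i(X)]$, using Definition~\ref{def:frac_lin} together with the identity $\Pr[f=1\mid G_i,\event^{(i)}_j]=\Exp_X[f(X)\Pr[G_i,\event^{(i)}_j\mid X]]/\Pr[G_i,\event^{(i)}_j]$. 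Thus the relaxed problem is a convex program with linear objective and linear (box) constraints whose optimum lower-bounds that of \ref{eq:progfair}.

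Next I would form the Lagrangian by attaching multipliers $\mu_i\ge 0$ to $q_i(f)\le u_i$ and $\nu_i\ge 0$ to $\ell_i\le q_i(f)$, and set $\lambda_i:=\nu_i-\mu_i$. Collecting terms gives
\[
\calL(f,\mu,\nu)=\Pr[f\ne Y]-\sum_{i\in[p]}\lambda_i q_i(f)-\sum_{i}\mu_i u_i+\sum_i\nu_i\ell_i,
\]
so the $f$-dependent part is exactly the fairness-aware objective \eqref{eq:fairness_aware}. Since $\calL$ is separable across $x$, minimizing over $f(x)\in[0,1]$ is a pointwise linear minimization whose coefficient is, up to the normalization absorbed into $\lambda$, the threshold $s_\lambda(x)$ of \eqref{eq:s_lambda}; the minimizer is the threshold classifier $\I[s_\lambda(X)>0]$, recovering the claimed form. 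This identifies the inner value $\min_f\calL$ with $\Exp_X[\eta(X)]-\sum_i\lambda_i\alpha^{(i)}_0-\Exp_X[(s_\lambda(X))_+]$ plus the $\mu,\nu$ terms.

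To obtain \eqref{eq:main_lambda} I would then eliminate $(\mu_i,\nu_i)$: for each $i$, minimizing $\mu_i u_i-\nu_i\ell_i$ over $\mu_i,\nu_i\ge 0$ subject to $\nu_i-\mu_i=\lambda_i$ (using $u_i\ge\ell_i\ge 0$) is attained at $(\mu_i,\nu_i)=(0,\lambda_i)$ when $\lambda_i\ge0$ and at $(-\lambda_i,0)$ otherwise, with value $-\lambda_i u_i+(u_i-\ell_i)(\lambda_i)_+$. Substituting, the negated dual function equals, up to an additive constant, exactly the objective in \eqref{eq:main_lambda}; it is convex in $\lambda$ because $\Exp_X[(s_\lambda(X))_+]$ and $(\lambda_i)_+$ are convex, and it is coercive under feasibility, so an optimal $\lambda^\star$ exists. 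Invoking strong duality for this linear program (via a Slater/feasibility constraint qualification) I would argue there is no duality gap, so $\I[s_{\lambda^\star}(X)>0]$ attains the relaxed optimum and, by complementary slackness, is primal-feasible; being $\{0,1\}$-valued it is feasible and optimal for \ref{eq:progfair}. Polynomial-time computability of $\lambda^\star$ follows since \eqref{eq:main_lambda} is a convex program over $\R^p$ whose value and subgradients are evaluable through the assumed probability oracle.

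The main obstacle is the duality/rounding step: establishing zero duality gap in this infinite-dimensional setting and, crucially, guaranteeing that the deterministic rule $\I[s_{\lambda^\star}>0]$, rather than a randomized classifier, is \emph{exactly} feasible. The difficulty concentrates on the tie set $\{x:s_{\lambda^\star}(x)=0\}$, where the Lagrangian is indifferent to $f(x)$: if this set has positive measure the constraints may only be met by randomizing there, so I would either impose a genericity assumption ensuring $\Pr[s_{\lambda^\star}(X)=0]=0$ or handle ties by a boundary-randomization/perturbation argument and show it does not affect optimality. Verifying the constraint qualification needed for strong duality (strict feasibility with some slack) is the other technical point to nail down.
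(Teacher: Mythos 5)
Your proposal follows essentially the same route as the paper's own proof in Sections~\ref{sec:unconstrained} and~\ref{sec:computation}: relax to randomized classifiers, linearize the objective and the constraints (Lemmas~\ref{lm:costrisk} and~\ref{lm:form}), invoke LP strong duality to obtain the pointwise threshold rule $\I[s_{\lambda}(X)>0]$, and eliminate the nonnegative multipliers by the same sign-based case analysis to arrive at the convex program \eqref{eq:main_lambda}, solved by (stochastic) subgradient descent through the probability oracle. The two technical caveats you flag --- the constraint qualification needed for strong duality and the possibility that the tie set $\{x: s_{\lambda^\star}(x)=0\}$ has positive measure, forcing randomization to meet the constraints exactly --- are likewise left implicit in the paper (which only footnotes feasibility of $K$), so your treatment is, if anything, more careful on those points.
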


\noindent
The proof of this theorem reduces \ref{eq:progfair} to an unconstrained optimization problem by the Lagrangian principle (Appendix~\ref{sec:unconstrained}). 
Then we derive~\eqref{eq:main_lambda} as the dual program to \ref{eq:progfair} and show that $\lambda^\star$ is an optimal solution to~\eqref{eq:main_lambda} (Appendix~\ref{sec:computation}).
Since $u_i-\ell_i\geq 0$, Program \eqref{eq:main_lambda} is convex and hence we can apply standard convex optimization algorithms, e.g., the stochastic subgradient method~\cite{boyd2008stochastic}.
Consequently, Theorem~\ref{thm:not_linear} leads to an algorihm \ref{eq:progfair}$\left(\Im,q^{\Im}, \left\{\ell_i\right\},\left\{u_i\right\}\right)$ that computes an optimal fair classifier for \ref{eq:progfair}.
Theorem~\ref{thm:not_linear} can also be directly extended to $\calF=\left\{0,1\right\}^{\calX\times [p]}$ by replacing $X$ to $(X,Z)$ everywhere.
Our algorithm is similar to~\cite[Algorithm 1]{menon2018the}, though they focus on the statistical rate and true positive rate.
The paper~\cite{menon2018the} analyzed the characterization but did not show how to compute the optimal Lagrangian parameters.
Our approach can be naturally extended to their setting and to compute the optimal Lagrangian parameters in their framework; see Appendix~\ref{app:another_algorithm} for details.

\begin{remark}
	Theorem~\ref{thm:not_linear} can be generalized to $q\in \calQ_{\mathrm{linf} }$. 
	The key observation is that we can rewrite the fairness constraint $\ell_i\leq q_i(f)$ as
	\[
	\alpha^{(i)}_0+ \sum_{j=1}^{k} \alpha^{(i)}_j\cdot \Pr_\Im\left[f=1\mid G_i,\event^{(i)}_j\right]\geq \ell_i\cdot \left(\beta^{(i)}_0+ \sum_{j=1}^{l} \beta^{(i)}_j\cdot \Pr_\Im\left[f=1\mid G_i,\eventb^{(i)}_j\right]\right).
	\] 
	By rearranging, the above inequality is expressible as a linear constraint $a^\top f+b\leq 0$.
	This also holds for $q_i(f)\leq u_i$, which implies that \ref{eq:progfair} is a linear program of $f$.
	Hence, introducing fairness constraints can handle predictive parity with $q\in \calQ_{\mathrm{linf} }$, but the prior work can not -- due to the fact that the constraint $q_i(f)\geq \tau\cdot q_j(f)$ may not be convex in general.
\end{remark}

\subsection{Algorithm for \ref{eq:progDI}}
\label{sec:relaxed_algorithm}

We proceed to designing an algorithm that handles the fairness metric $\DI$. 
In real-world settings, instead of knowing $\Im$, we only have $N$ samples $\left\{(x_i,z_i,y_i)\right\}_{i\in [N]}$ drawn from $\Im$.
To handle this, we use the idea inspired by~\cite{narasimhan2014on,menon2018the}: estimate $\Im$ by $\widehat{\Im}$, e.g., via Gaussian Naive Bayes or logistic regression on samples, and then compute a classifier based on $\widehat{\Im}$ by solving a family of \ref{eq:progfair} programs as stated in Theorem~\ref{thm:progDI}; see Algorithm~\ref{alg:meta_mult}.

\paragraph{Analyzing Algorithm~\ref{alg:meta_mult}.}

Intuitively, if $\widehat{\Im}$ is close to $\Im$, then the quality of $f$ in both accuracy and fairness should be comparable to an optimal fair classifier for \ref{eq:progDI} under $\Im$.
Define 
\[
\kappa:=2 \max_{i\in [p], f\in\calF}\left|q^{\widehat{\Im}}_i(f)-q^\Im_i(f) \right|
\] 
as the error introduced in $q^\Im$ when replacing $\Im$ by $\widehat{\Im}$.
Let $d_{TV}(\Im, \widehat{\Im})$ denote the total variation distance between $\widehat{\Im}$ and $\Im$.

\newlength{\textfloatsepsave} 
\setlength{\textfloatsep}{0pt}{
	
	\IncMargin{1em}
	\begin{algorithm}[htb]
		\caption{\textrm{A meta-algorithm for \ref{eq:progDI} }}
		\label{alg:meta_mult}
		\SetKwInOut{Input}{Input}
		\SetKwInOut{Output}{Output}
		
		\Indm
		\Input{Samples $\left\{(x_i,z_i,y_i)\right\}_{i\in [N]}$ from distribution $\Im$, a linear-fractional group performance function $q^\Im\in \calQ_{\mathrm{linf} }$, a fairness parameter $\tau\in [0,1]$ and an error parameter $\eps\in [0,1]$.}
		\Output{A classifier $f\in \calF$. } 
		\Indp
		\BlankLine
		Compute an estimated distribution $\widehat{\Im}$ (e.g., via Gaussian Naive Bayes) on $\left\{(x_i,z_i,y_i)\right\}_{i\in [N]}$. \\
		$T\leftarrow \lceil \tau/\eps \rceil$. 
		For each $i\in [T]$, $a_i \leftarrow (i-1)\cdot \eps$ and $b_i \leftarrow i\cdot \eps/\tau$. \\
		For each $i\in [T]$, let $f_i\leftarrow$ \ref{eq:progfair}$\left(\widehat{\Im},q^{\widehat{\Im} },\left\{\ell_j=a_i\right\}_{j\in [p]}, \left\{u_j=b_i\right\}_{j\in [p]}\right)$.  \\
		Return $f\leftarrow \arg\min_{f_i} \Pr_{\widehat{\Im} }\left[f_i \neq Y\right]$.
	\end{algorithm}
	\DecMargin{1em}
	
}

\begin{theorem}[\textbf{Quantification of the output classifier}]
	\label{thm:quantification}
	Let $f^\star$ be a fair classifier minimizing the prediction error $\Pr_\Im\left[f\neq Y\right]$ subject to the relaxed $\tau$-rule: 
	\[
	\min_{i\in [p]} q^\Im_i(f)\geq \tau\cdot \max_{i\in [p]} q^\Im_i(f)+\kappa.
	\]
	Then Algorithm~\ref{alg:meta_mult} outputs a classifier $f$ such that 
	\begin{enumerate}
		\item $\Pr_\Im\left[f\neq Y\right]\leq \Pr_\Im\left[ f^\star\neq Y\right]+2\cdot d_{TV}(\widehat{\Im},\Im) $; 
		\item $\min_{i\in [p]} q^\Im_i(f)\geq \tau \cdot \max_{i\in [p]} q^\Im_i(f)-\eps-\kappa$.
	\end{enumerate}
\end{theorem}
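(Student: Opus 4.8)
The plan is to observe that Algorithm~\ref{alg:meta_mult} is exactly the reduction of Theorem~\ref{thm:progDI}, run on the estimated distribution $\widehat{\Im}$ with the exact \ref{eq:progfair}-solver of Theorem~\ref{thm:not_linear} (which attains $\beta=1$), and then to transport its guarantees from $\widehat{\Im}$ back to $\Im$ using the two error quantities $\kappa$ and $d_{TV}(\widehat{\Im},\Im)$. First I would invoke Theorem~\ref{thm:progDI} with $\beta=1$ applied verbatim to $\widehat{\Im}$: since the grid $a_i,b_i$ and the inner \ref{eq:progfair} calls in Algorithm~\ref{alg:meta_mult} match the construction in that proof, the returned $f$ satisfies the fairness bound $\min_i q^{\widehat{\Im}}_i(f)\ge \tau\max_i q^{\widehat{\Im}}_i(f)-\eps$ together with the accuracy bound $\Pr_{\widehat{\Im}}[f\neq Y]\le \Pr_{\widehat{\Im}}[\widehat f^\star_\tau\neq Y]$, where $\widehat f^\star_\tau$ denotes an optimal classifier for \ref{eq:progDI} under $\widehat{\Im}$.

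The second step transfers fairness to $\Im$. By the definition of $\kappa$ we have $|q^{\widehat{\Im}}_i(g)-q^\Im_i(g)|\le \kappa/2$ for every $i\in[p]$ and every classifier $g$, which lets me replace the $\min$ and $\max$ over $\widehat{\Im}$ by their $\Im$-counterparts at an additive cost of $\kappa/2$ each. Chaining these two transfers with the $\widehat{\Im}$-fairness bound, and using $\tau\le 1$ to fold the resulting $\tau\kappa/2+\kappa/2$ into a single $\kappa$, yields exactly condition (2), namely $\min_i q^\Im_i(f)\ge \tau\max_i q^\Im_i(f)-\eps-\kappa$.

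For the accuracy bound I first argue that the comparison classifier $f^\star$ is feasible for \ref{eq:progDI} under $\widehat{\Im}$. This is precisely where the extra $+\kappa$ slack in the constraint defining $f^\star$ is used: starting from $\min_i q^\Im_i(f^\star)\ge \tau\max_i q^\Im_i(f^\star)+\kappa$ and again paying at most $\kappa/2$ per $\min/\max$ transfer, the total transfer error is at most $\tau\kappa/2+\kappa/2\le\kappa$ (using $\tau\le1$), so the $+\kappa$ slack is exactly enough to preserve $\min_i q^{\widehat{\Im}}_i(f^\star)\ge \tau\max_i q^{\widehat{\Im}}_i(f^\star)$. Hence $f^\star$ is feasible under $\widehat{\Im}$, so by optimality of $\widehat f^\star_\tau$ we get $\Pr_{\widehat{\Im}}[\widehat f^\star_\tau\neq Y]\le \Pr_{\widehat{\Im}}[f^\star\neq Y]$, and combining with the first step, $\Pr_{\widehat{\Im}}[f\neq Y]\le \Pr_{\widehat{\Im}}[f^\star\neq Y]$. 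Finally, since $\{g\neq Y\}$ is an event, the total variation distance gives $|\Pr_{\widehat{\Im}}[g\neq Y]-\Pr_\Im[g\neq Y]|\le d_{TV}(\widehat{\Im},\Im)$; applying this to both $f$ and $f^\star$ contributes $2\,d_{TV}(\widehat{\Im},\Im)$ and produces condition (1).

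I expect the only delicate part to be the $\kappa$-bookkeeping: keeping track of the direction in which each inequality $|q^{\widehat{\Im}}_i(g)-q^\Im_i(g)|\le\kappa/2$ is applied, and checking that $\tau\le1$ is exactly what collapses the $\tau\kappa/2+\kappa/2$ terms to $\kappa$ in both the fairness-transfer step and the feasibility-of-$f^\star$ step. Everything else is a mechanical transfer between $\Im$ and $\widehat{\Im}$.
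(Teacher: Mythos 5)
Your proposal is correct and follows essentially the same route as the paper's proof: establish feasibility of $f^\star$ under $\widehat{\Im}$ using the $+\kappa$ slack (with the same $(1+\tau)\cdot\kappa/2\leq\kappa$ bookkeeping), invoke Theorem~\ref{thm:progDI} with $\beta=1$ on $\widehat{\Im}$, and transfer the fairness and accuracy guarantees back to $\Im$ at costs $\kappa$ and $2\,d_{TV}(\widehat{\Im},\Im)$ respectively. The only cosmetic difference is that you chain through the intermediate optimum $\widehat f^\star_\tau$ under $\widehat{\Im}$, whereas the paper applies the reduction's guarantee directly against the feasible $f^\star$; the two are interchangeable.
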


\noindent
The key is to show 
$f^\star$ is feasible for \ref{eq:progDI} under $\widehat{\Im}$ 
and then prove by Theorem~\ref{thm:progDI} that 
\begin{itemize}
\item $\Pr_{\widehat{\Im}}\left[f\neq Y\right]\leq \Pr_{\widehat{\Im}}\left[f^\star\neq Y\right]$;
\item $\min_{i\in [p]} q^{\widehat{\Im}}_i(f)\geq \tau \cdot \max_{i\in [p]} q^\Im_i(f)-\eps$.
\end{itemize}
To account for the error when going from $\widehat{\Im}$ to $\Im$, the terms $2\cdot d_{TV}(\widehat{\Im},\Im)$ and $\kappa$ are introduced.

Theorem~\ref{thm:quantification} quantifies the loss we incur if the estimated distribution $\hat{\Im}$ is not a good fit for the samples. Note that $f^\star$ is only an approximately optimal fair classifier for \ref{eq:progDI} due to the additional error $\kappa$.
Since we do not have access to $\Im$ (only to $\widehat{\Im}$), we cannot compare the output $f$ to the optimal solution of \ref{eq:progDI}, but only to $f^\star$.
If the number of samples $N$ is large, we can expect that $\widehat{\Im}$ and $\Im$ are close, and hence $\kappa, d_{TV}(\widehat{\Im},\Im)$ are small.
Then the performance of $f$ is close to $f^\star$ over $\Im$.
Specifically, if $\widehat{\Im}=\Im$, we have $\kappa=d_{TV}(\widehat{\Im},\Im)=0$.
Then the output $f$ satisfies the properties of Theorem~\ref{thm:progDI} with $\beta=1$, which implies that $f$ is an approximately optimal fair classifier for \ref{eq:progDI}.

\begin{proof} (of Theorem~\ref{thm:quantification})
	Our proof are divided into two parts.
	We first prove that $\rho_{q^\{\widehat{\Im}\}}(f^\star)\geq \tau$, i.e., $\min_{i\in [p]} q^{\widehat{\Im} }_i(f^\star)\geq \tau \cdot \min_{i\in [p]} q^{\widehat{\Im} }_i(f^\star)$.
	Then based on this claim, we show how to prove the theorem.
	W.l.o.g., we consider $q^\Im_1(f^\star)$ and $q^\Im_2(f^\star)$.
	By the definition of $f^\star$, we have 
	\begin{align}
	\label{eq:1}
	q^\Im_1(f)\geq \tau\cdot q^\Im_{2}(f)+ \kappa.
	\end{align}
	By the definition of $\kappa$ and the fact that $\tau\leq 1$, we have
	\begin{eqnarray}
	\label{eq:3}
	\begin{split}
	& q^\Im_1(f)-\tau\cdot q^\Im_{2}(f) \\
	\leq & q^{\widehat{\Im}}_1(f)+\max_{i\in [p], f\in \calF}\left|q^{\widehat{\Im}}_i(f)-q^\Im_i(f) \right|-\tau\cdot \left(q^{\widehat{\Im}}_{2}(f)-\max_{i\in [p], f\in \calF}\left|q^{\widehat{\Im}}_i(f)-q^\Im_i(f) \right|\right) \\
	= & q^{\widehat{\Im}}_1(f)-\tau \cdot q^{\widehat{\Im}}_{2}(f) + (1+\tau)\cdot \max_{i\in [p], f\in \calF}\left|q^{\widehat{\Im}}_i(f)-q^\Im_i(f) \right| \\
	\leq & q^{\widehat{\Im}}_1(f)-\tau \cdot q^{\widehat{\Im}}_{2}(f) +\kappa.
	\end{split}
	\end{eqnarray}
	Combining Inequalities~\eqref{eq:1} and~\eqref{eq:3}, we obtain the following
	\[
	q^{\widehat{\Im} }_1(f^\star) \geq \tau\cdot q^{\widehat{\Im} }_2(f^\star).
	\]
	Then by symmetry, we have $\min_{i\in [p]} q^{\widehat{\Im} }_i(f)\geq \tau \cdot \min_{i\in [p]} q^{\widehat{\Im} }_i(f)$.

	By this claim, we are ready to prove the theorem.
	By Theorem~\ref{thm:progDI}, the output $f$ satisfies the following:
	\begin{align} 
	\label{eq:4}
	\Pr_{\widehat{\Im} }\left[f\neq Y\right]\leq \Pr_{\widehat{\Im} }\left[ f^\star\neq Y\right],
	\end{align}
	and
	\begin{align}
	\label{eq:5}
	\min_{i\in [p]} q^{\widehat{\Im} }_i(f)\geq \tau \cdot \min_{i\in [p]} q^{\widehat{\Im} }_i(f)-\eps.
	\end{align}
	Similar to Inequality~\eqref{eq:3}, we have the following 
	\begin{align*}
	\min_{i\in [p]} q^{\Im}_i(f)- \tau \cdot \min_{i\in [p]} q^{\Im}_i(f) 
	\geq \min_{i\in [p]} q^{\widehat{\Im} }_i(f)- \tau \cdot \min_{i\in [p]} q^{\widehat{\Im} }_i(f) -\kappa.
	\end{align*}
	Combining the above inequality with Inequality~\eqref{eq:5}, we have
	\[
	\min_{i\in [p]} q^\Im_i(f)\geq \tau \cdot \min_{i\in [p]} q^\Im_i(f)-\eps- \kappa.
	\]
	It remains to prove that $\Pr_\Im\left[f\neq Y\right]\leq \Pr_\Im\left[ f^\star\neq Y\right]+2\cdot d_{TV}(\widehat{\Im},\Im)$.
	By Assumption (3), we have
	\begin{eqnarray*}
		\begin{split}
			\Pr_\Im\left[f\neq Y\right] =& \sum_{(x,z,y): f=y } \Pr_\Im\left[X=x,Z=z,Y=y\right]& \\
			\in &\sum_{(x,z,y): f=y } \Pr_{\widehat{\Im} }\left[X=x,Z=z,Y=y\right]\pm d_{TV}(\widehat{\Im},\Im)& (\text{Assumption (3)}) \\
			=& \Pr_{\widehat{\Im} }\left[f\neq Y\right] \pm d_{TV}(\widehat{\Im},\Im).
		\end{split}
	\end{eqnarray*}
	Similarly, we have $\Pr_\Im\left[f^\star\neq Y\right]\in \Pr_{\widehat{\Im} }\left[f^\star\neq Y\right]+ d_{TV}(\widehat{\Im},\Im)$.
	Combining with Inequality~\eqref{eq:4}, we complete the proof.
\end{proof}

\begin{remark}
	\label{remark:delta}
	For the fairness metric $\MD$ (Remark~\ref{remark:fair}), we can also design an algorithm similar to Algorithm~\ref{alg:meta_mult}.
	We only need to modify Line 2 by $L :=\lceil \frac{1+\tau}{\eps} \rceil$ (recall $\tau\in [-1,0]$), $a_i := (i-1)\cdot \eps$ and $b_i := i \cdot \eps-\tau$.
	The quantification of the output c$f$ is similar to Theorem~\ref{thm:quantification}.
	The main differences are 
	\[
	f^\star := \arg\min_{f\in \calF: \MD(f)\geq \tau+\kappa}\Pr_\Im\left[f\neq Y\right],
	\] 
	and the output $f$ satisfies that 
	\begin{enumerate}
		\item $\Pr_\Im\left[f\neq Y\right]\leq \Pr_\Im\left[ f^\star\neq Y\right]+2\cdot d_{TV}(\Im, \widehat{\Im})$; 
		\item $\MD(f)\geq \tau-\eps-\kappa$.
	\end{enumerate}
	The details are discussed in Appendix~\ref{app:delta}.
\end{remark}

\section{Details of Section~\ref{sec:algorithm}: Algorithm for \ref{eq:progfair}}
\label{app:algorithm}

In this section, we fulfill the details of Section~\ref{sec:algorithm} by proposing an algorithm that computes an optimal fair classifier for \ref{eq:progfair}.

In the following, we first prove Theorem~\ref{thm:not_linear} in the case that $\calF=\left\{0,1\right\}^\calX$ and $q\in \calQ_{\mathrm{lin} }$, by giving the characterization result (Theorem~\ref{thm:attribute}) and the computation result  (Theorem~\ref{thm:computation}, Lemma~\ref{lm:not_sensitive_find_lambda}). 
Then we propose Algorithm~\ref{alg:plugin} for \ref{eq:progfair}.
In Sections~\ref{sec:algorithm_provide} and~\ref{sec:algorithm_fractional}, we also discuss how to extend Algorithm~\ref{alg:plugin} to $\calF=\left\{0,1\right\}^{\calX\times [p]}$ and $q\in \calQ_{\mathrm{linf} }$.

\vspace{-.1in}
\subsection{Characterization Result in Theorem~\ref{thm:not_linear}}
\label{sec:unconstrained}

We first show the characterization of an optimal solution in Theorem~\ref{thm:not_linear}.
The proof idea is to reduce \ref{eq:progfair} to an unconstrained optimization problem by Lagrangian principle.
By Definition~\ref{def:frac_lin}, we assume 
\[
q_i(f)=\alpha^{(i)}_0+ \sum_{j=1}^{k} \alpha^{(i)}_j\cdot \Pr\left[f=1\mid G_i,\event^{(i)}_j\right]
\]
for any $f\in \calF$ and $i\in [p]$.
For simplicity, we denote $\pi := \Pr\left[Y=1\right]$ to be the underlying positive probability, and $\eta(x) := \Pr\left[Y=1\mid X=x\right]$ to be the positive probability conditioned on $X=x$.
For any $x\in \calX$, $i\in [p]$ and $j\in [k]$, we also denote 
\[
\pi^{(i)}_j := \Pr\left[G_i,\event^{(i)}_j \right], \quad \eta^{(i)}_j(x) := \Pr\left[G_i,\event^{(i)}_j\mid X=x\right].
\]
Then we can rewrite~\eqref{eq:s_lambda} by the following:
\begin{align}
\label{eq:s_lambda_cost}
s_\lambda(x):= \eta(x)-0.5+\sum_{i\in [p]}\lambda_i\cdot \left(\sum_{j=1}^{k} \frac{\alpha^{(i)}_j}{\pi^{(i)}_j}\cdot \eta^{(i)}_j(x)\right).
\end{align}
The following theorem shows that an optimal fair classifier is an instance-dependent threshold function based on $s_\lambda$.

\begin{theorem}[\textbf{Solution characterization}]
	\label{thm:attribute}
	% Suppose $\calF=\left\{0,1\right\}^\calX$.
	%
	Given any parameters $\ell_i,u_i\in [0,1]$ $(i\in [p])$, there exists $\lambda^\star\in \R^p$ such that $\I[s_{\lambda^\star}(x)>0]$ is an optimal fair classifier for \ref{eq:progfair}.
\end{theorem}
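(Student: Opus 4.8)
The plan is to recognize \ref{eq:progfair} as an (infinite-dimensional) linear program in $f$ and to extract the threshold classifier from its Lagrangian dual. First I would relax the integrality constraint $f\in\{0,1\}^\calX$ to $f\in[0,1]^\calX$ and rewrite both the objective and the constraints as affine functionals of $f$. Using $\Pr[f\neq Y]=\pi+\Exp_X[f(X)(1-2\eta(X))]$ together with
\[
q_i(f)=\alpha^{(i)}_0+\Exp_X\left[f(X)\,\psi_i(X)\right],\qquad \psi_i(x)=\sum_{j=1}^{k}\frac{\alpha^{(i)}_j}{\pi^{(i)}_j}\,\eta^{(i)}_j(x),
\]
the program becomes: minimize an affine functional of $f$ over the convex set $[0,1]^\calX$ subject to the $2p$ affine inequalities $\ell_i\le q_i(f)\le u_i$. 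This is precisely the linear structure that makes Lagrangian duality exact.

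Second, I would dualize the fairness constraints. Introducing multipliers $\mu_i\ge 0$ for the lower bounds $q_i(f)\ge\ell_i$ and $\nu_i\ge 0$ for the upper bounds $q_i(f)\le u_i$, and setting $\lambda_i:=\mu_i-\nu_i$, the $f$-dependent part of the Lagrangian $\calL(f,\mu,\nu)$ equals, up to an additive constant,
\[
\Pr[f\neq Y]-\sum_{i\in[p]}\lambda_i\,q_i(f)=\mathrm{const}+\Exp_X\Big[f(X)\big(1-2\eta(X)-\textstyle\sum_{i}\lambda_i\psi_i(X)\big)\Big],
\]
which is exactly the fairness-aware objective \eqref{eq:fairness_aware}. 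After absorbing a factor of $2$ into $\lambda$, the coefficient of $f(x)$ equals $-2\,s_\lambda(x)$ with $s_\lambda$ as in \eqref{eq:s_lambda_cost}. Since $\calL$ is linear in $f$ and separable over $x$, its pointwise minimizer over $f(x)\in[0,1]$ is $f(x)=\I[s_\lambda(x)>0]$, namely $f(x)=1$ exactly where the coefficient $-2\,s_\lambda(x)$ is negative. This yields the claimed threshold form.

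Third, I would invoke strong duality to produce the optimal $\lambda^\star$. Because the relaxed program is a finitely-constrained linear program that is feasible (by hypothesis) and has bounded objective ($\Pr[f\neq Y]\in[0,1]$), strong duality holds; I would justify it either by a Slater-type argument or, more robustly, by a Sion minimax theorem over the set $\{f:\calX\to[0,1]\}$, which is weak-$*$ compact and convex. This supplies optimal multipliers $\mu^\star,\nu^\star\ge 0$, hence $\lambda^\star:=\mu^\star-\nu^\star$, for which any primal optimum $f^\star$ must minimize $\calL(\cdot,\mu^\star,\nu^\star)$. By the pointwise computation, $f^\star$ then agrees with $\I[s_{\lambda^\star}>0]$ everywhere off the tie set $\{x:s_{\lambda^\star}(x)=0\}$.

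The main obstacle is the tie set. On $\{s_{\lambda^\star}=0\}$ the Lagrangian is insensitive to the value of $f$, so the threshold classifier $\I[s_{\lambda^\star}>0]$ attains the optimal Lagrangian value, but it is not immediate that this particular choice (which sets $f=0$ there) remains feasible for \ref{eq:progfair}, since a tight constraint might demand $f>0$ on that set. I expect to resolve this as in prior work: either assume the distribution of $s_{\lambda^\star}(X)$ is atomless, so the tie set has measure zero and the boundary value is irrelevant, or exploit complementary slackness together with the freedom on the tie set to adjust $f$ so as to meet the active constraints while preserving the optimal value (at the cost of randomization at the boundary). Establishing feasibility of the threshold form — and hence that it is a genuine optimum rather than merely a minimizer of the Lagrangian — is the crux of the argument.
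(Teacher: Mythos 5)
Your proposal follows essentially the same route as the paper's proof: relax to randomized classifiers in $[0,1]^\calX$, linearize the objective and the constraints (Lemmas~\ref{lm:costrisk} and~\ref{lm:form}), invoke strong LP duality to obtain optimal multipliers $\lambda^\star_i=(\zeta^\star_i-\nu^\star_i)/2$, and read off the pointwise minimizer $\I[s_{\lambda^\star}(x)>0]$ of the Lagrangian $\pi-2\,\Exp_X\left[s_{\lambda^\star}(X)f(X)\right]$. The tie-set concern you raise is genuine but is equally left implicit in the paper's own proof, which simply asserts that the threshold classifier is an optimal (hence feasible) solution of the relaxed program; your proposed fixes (atomlessness of $s_{\lambda^\star}(X)$, or using complementary slackness and the freedom on $\{x:s_{\lambda^\star}(x)=0\}$ to restore feasibility) are the standard ways to close that step.
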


\noindent
For analysis, we introduce randomized classifiers $f:\calX\rightarrow [0,1]$, in which $f$ predicts 1 with probability $f(x)$ for any $x\in \calX$.
Observe that this is a natural generalization of deterministic classifiers.
For preparation, we first rewrite the objective function $\Pr\left[f\neq Y\right]$ and the term $q_i(f)$ as a linear function of $f(X)$.

\begin{lemma}[Lemma 9 of \cite{menon2018the}]
	\label{lm:costrisk}
	For any classifier $f\in [0,1]^\calX$, 
	\[
	\Pr\left[f\neq Y\right]=\pi+\Exp_X\left[\left(1-2\eta(X)\right)\cdot f(X)\right].
	\]
\end{lemma}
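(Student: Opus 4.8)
The plan is to compute the error probability by conditioning on the feature value $X=x$ and then averaging over $X$. For a fixed $x$, the randomized classifier outputs $1$ with probability $f(x)$, independently of the label $Y$ given $X=x$; this conditional independence is the only structural fact the argument needs. First I would split the misclassification event according to the true label: a mistake occurs either when $Y=1$ and the classifier outputs $0$ (probability $1-f(x)$ of the output), or when $Y=0$ and it outputs $1$ (probability $f(x)$). Conditioning on $X=x$ thus gives
\begin{align*}
\Pr\left[f\neq Y\mid X=x\right] &= \Pr\left[Y=1\mid X=x\right]\cdot(1-f(x)) + \Pr\left[Y=0\mid X=x\right]\cdot f(x) \\
&= \eta(x)\left(1-f(x)\right) + \left(1-\eta(x)\right)f(x).
\end{align*}

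Next I would simplify the right-hand side algebraically to $\eta(x) + \left(1-2\eta(x)\right)f(x)$, which isolates the part depending on $f$. Taking the expectation over $X$ and using linearity of expectation yields
\[
\Pr\left[f\neq Y\right] = \Exp_X\left[\eta(X)\right] + \Exp_X\left[\left(1-2\eta(X)\right)\cdot f(X)\right].
\]
Finally I would apply the law of total probability to identify $\Exp_X\left[\eta(X)\right]=\Exp_X\left[\Pr\left[Y=1\mid X\right]\right]=\Pr\left[Y=1\right]=\pi$, which produces the claimed identity.

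There is no genuine obstacle here: the statement is a direct calculation, and the same argument applies verbatim to deterministic classifiers (the special case $f(x)\in\left\{0,1\right\}$). The only point meriting a word of care is the passage from the conditional to the unconditional error, where one must invoke the independence of the classifier's internal randomness from $Y$ given $X$, so that the output equals $1$ with probability $f(x)$ regardless of the value of $Y$; every other step is routine.
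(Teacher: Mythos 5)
Your proof is correct: conditioning on $X=x$, using that the classifier's internal randomness is independent of $Y$ given $X$, simplifying to $\eta(x)+(1-2\eta(x))f(x)$, and identifying $\Exp_X[\eta(X)]=\pi$ is exactly the standard derivation. The paper does not include its own proof (it imports the statement as Lemma 9 of the cited reference), and your argument is the one that reference uses, so there is nothing to add.
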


\noindent
In the following lemma, we rewrite the term $q_i(f)$. 
For a randomized classifier $f: \calX\rightarrow [0,1]$ and any $i\in [p]$, we define 
\[
q_i(f):= \alpha^{(i)}_0+ \sum_{j=1}^{k} \alpha^{(i)}_j\cdot \Exp_X\left[f(X)\mid G_i,\event^{(i)}_j\right],
\]
which is a natural generalization of Definition~\ref{def:frac_lin} for $q\in \calQ_{\mathrm{lin} }$.

\begin{lemma}
	\label{lm:form}
	For any $f\in [0,1]^\calX$ and $i\in [p]$, 
	\[
	q_i(f) = \alpha^{(i)}_0+ \sum_{j=1}^{k}\frac{\alpha^{(i)}_j}{\pi^{(i)}_j}\cdot \Exp_X\left[\eta^{(i)}_j(X)\cdot f(X)\right].
	\]
\end{lemma}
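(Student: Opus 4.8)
The plan is to prove the stated identity term by term, reducing everything to a single application of the law of total expectation. It suffices to establish, for each fixed $i \in [p]$ and each $j \in [k]$, the single-term identity
\[
\Exp_X\left[f(X) \mid G_i, \event^{(i)}_j\right] = \frac{1}{\pi^{(i)}_j}\cdot \Exp_X\left[\eta^{(i)}_j(X)\cdot f(X)\right];
\]
once this is in hand, I would multiply by $\alpha^{(i)}_j$, sum over $j \in [k]$, and add the constant $\alpha^{(i)}_0$ to recover exactly the two displayed expressions for $q_i(f)$, thereby matching the generalized definition of $q_i(f)$ with the claimed formula.

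To prove the single-term identity, first I would unfold the conditional expectation by its definition as a ratio,
\[
\Exp_X\left[f(X) \mid G_i, \event^{(i)}_j\right] = \frac{\Exp\left[f(X)\cdot \I[G_i, \event^{(i)}_j]\right]}{\Pr\left[G_i, \event^{(i)}_j\right]},
\]
where the denominator is exactly $\pi^{(i)}_j$ by definition. For the numerator I would condition on $X$ and apply the tower property: since $f(X)$ is a function of $X$ alone, it is constant given $X = x$ and can be pulled outside the inner conditional expectation, giving
\[
\Exp\left[f(X)\cdot \I[G_i,\event^{(i)}_j]\right] = \Exp_X\left[f(X)\cdot \Pr\left[G_i,\event^{(i)}_j \mid X\right]\right] = \Exp_X\left[f(X)\cdot \eta^{(i)}_j(X)\right],
\]
the last step being the definition of $\eta^{(i)}_j$. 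Dividing by $\pi^{(i)}_j$ then yields the single-term identity.

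The only point that requires care — and it is more conceptual than computational — is the treatment of the randomized classifier: because $f$ predicts $1$ with probability $f(x)$ using fresh randomness that, given $X = x$, is independent of $(Z,Y)$, the prediction decouples from the conditioning events $G_i \cap \event^{(i)}_j$ once $X$ is fixed. This independence is precisely what licenses pulling $f(X)$ through the conditional expectation given $X$. Beyond this observation the argument is a routine manipulation and I expect no real obstacle; moreover, the events $\event^{(i)}_j$ being independent of the choice of $f$ (Definition~\ref{def:frac_lin}) guarantees that $\eta^{(i)}_j$ and $\pi^{(i)}_j$ do not themselves depend on $f$, so the resulting expression is genuinely linear in $f$.
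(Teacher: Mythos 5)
Your proposal is correct and follows essentially the same route as the paper: the paper expands $\Exp_X\bigl[f(X)\mid G_i,\event^{(i)}_j\bigr]$ as an integral and applies Bayes' rule to rewrite $\Pr\left[X=x\mid G_i,\event^{(i)}_j\right]$ as $\Pr\left[X=x\right]\cdot\eta^{(i)}_j(x)/\pi^{(i)}_j$, which is exactly your ratio-plus-tower-property computation in integral form. Your additional remark about the classifier's fresh randomness being independent of $(Z,Y)$ given $X$ is a fair point of care that the paper leaves implicit, but it does not change the argument.
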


\begin{proof}
	We have the following equality:
	\begin{align*}
	\begin{split}
	q_i(f) &  = \alpha^{(i)}_0+ \sum_{j=1}^{k} \alpha^{(i)}_j\cdot \Exp_X\left[f(X)\mid G_i,\event^{(i)}_j\right] \\
	& =  \alpha^{(i)}_0+ \sum_{j=1}^{k} \alpha^{(i)}_j\cdot \int_{x\in \calX} \Pr\left[X=x \mid G_i,\event^{(i)}_j\right] \cdot f(x) \\
	& =  \alpha^{(i)}_0+ \sum_{j=1}^{k} \alpha^{(i)}_j\cdot \int_{x\in \calX} \frac{\Pr\left[X=x\right]\cdot \Pr\left[G_i,\event^{(i)}_j\mid X=x\right]}{\Pr\left[G_i,\event^{(i)}_j\right]}\cdot f(x) \\
	& =  \alpha^{(i)}_0+ \sum_{j=1}^{k} \frac{\alpha^{(i)}_j}{\pi^{(i)}_j}\cdot \int_{x\in \calX} \Pr\left[X=x\right]\cdot \eta^{(i)}_j(x)\cdot f(x) \\
	& =   \alpha^{(i)}_0+ \sum_{j=1}^{k} \frac{\alpha^{(i)}_j}{\pi^{(i)}_j}\cdot \Exp_X\left[\eta^{(i)}_j(X)\cdot f(X)\right].
	\end{split}
	\end{align*}
\end{proof}

\noindent
Now we are ready to prove Theorem~\ref{thm:attribute}.
The main idea is to apply Lagrangian principle.

\begin{proof} (of Theorem~\ref{thm:attribute})	
	Let $f^\star$ denote an optimal solution of \ref{eq:progfair}.
	Denote $K:=\left\{f\in [0,1]^\calX: \ell_i\leq q_i(f)\leq u_i, \forall i\in [p]\right\}$.
	Let $g^\star$ denote an optimal solution of the following:
	\begin{align} 
	\label{eq:progfair_relaxcs}
	&\min_{g\in K} \Pr\left[g\neq Y\right]
	\end{align}
	By the definition of $K$, we know that $\Pr\left[g^\star\neq Y\right]\leq \Pr\left[f^\star\neq Y\right]$.
	By Lemmas~\ref{lm:costrisk} and~\ref{lm:form}, Program \eqref{eq:progfair_relaxcs} is a linear programming of $g(X)$.
	Then by strong duality for linear programs, \footnote{This implicitly assumes feasibility of the primal problem, i.e., the convex set $K$ is nonempty.} we have
	\begin{align}
	\label{program}
	\min_{g\in K} \Pr\left[g\neq Y\right] = & \max_{\nu, \zeta\in \R_{\geq 0}^p} \min_{g\in [0,1]^\calX} \Pr\left[g\neq Y\right] + \sum_{i\in [p]} \nu_i\cdot\left(q_i(g)-u_i\right)  -\sum_{i\in [p]} \zeta_i\cdot\left(q_i(g)-\ell_i\right).
	\end{align}
	Fix $\nu$ and $\zeta$.
	To solve the inner optimizer, it is equivalent to solve the following program:
	\begin{eqnarray*}
		\begin{split}
			& \min_{g\in [0,1]^\calX} \Pr\left[g\neq Y\right] + \sum_{i\in [p]} \nu_i\cdot \left( q_i(g)-\alpha^{(i)}_0 \right) - \sum_{i\in [p]} \zeta_i\cdot \left(q_i(g)-\alpha^{(i)}_0\right) & \\
			= & \min_{g\in [0,1]^\calX} \Pr\left[g\neq Y\right] + \sum_{i\in [p]} (\nu_i-\zeta_i)\cdot \left( \sum_{j=1}^{k}\frac{\alpha^{(i)}_j}{\pi^{(i)}_j}\cdot \Exp_X\left[\eta^{(i)}_j(X)\cdot g(X)\right]\right) & (\text{Lemma~\ref{lm:form} }) \\
			= & \min_{g\in [0,1]^\calX} \Pr\left[g\neq Y\right]-2\sum_{i\in [p]} \lambda_i\cdot \left( \sum_{j=1}^{k}\frac{\alpha^{(i)}_j}{\pi^{(i)}_j}\cdot \Exp_X\left[\eta^{(i)}_j(X)\cdot g(X)\right]\right) & (\text{letting }\lambda_i=\frac{\zeta_i-\nu_i}{2}).
		\end{split}
	\end{eqnarray*}
	Hence, there exists an optimal Lagrangian parameter $\lambda^\star\in \R^p$ such that 
	\begin{align*}
	g^\star = \arg\min_{g\in [0,1]^\calX} \Pr\left[g\neq Y\right]-2\sum_{i\in [p]} \lambda^\star_i\cdot \left( \sum_{j=1}^{k}\frac{\alpha^{(i)}_j}{\pi^{(i)}_j}\cdot \Exp_X\left[\eta^{(i)}_j(X)\cdot g(X)\right]\right).
	\end{align*}
	By Lemma~\ref{lm:costrisk}, we have
	\begin{eqnarray}
	\label{eq:opt}
	\begin{split}
	& \Pr\left[g\neq Y\right]-2\sum_{i\in [p]} \lambda^\star_i\cdot \left( \sum_{j=1}^{k}\frac{\alpha^{(i)}_j}{\pi^{(i)}_j}\cdot \Exp_X\left[\eta^{(i)}_j(X)\cdot g(X)\right]\right) & \\
	= & \pi + \Exp_X\left[\left(1-2\eta(X)\right)\cdot g(X)\right] &\\ 
	&-2\sum_{i\in [p]} \lambda^\star_i\cdot \left( \sum_{j=1}^{k}\frac{\alpha^{(i)}_j}{\pi^{(i)}_j}\cdot \Exp_X\left[\eta^{(i)}_j(X)\cdot g(X)\right]\right) & (\text{Lemma~\ref{lm:costrisk}})\\
	= & \pi + \Exp_X\left[\left(1-2\eta(X)-2\sum_{i\in [p]} \lambda^\star_i\cdot\left( \sum_{j=1}^{k}\frac{\alpha^{(i)}_j}{\pi^{(i)}_j}\right) \right)\cdot g(X)\right] &\\
	= & \pi -2\cdot \Exp_X\left[s_{\lambda^\star}(X)\cdot g(X)\right]. & (\text{Eq.~\eqref{eq:s_lambda_cost}})
	\end{split}
	\end{eqnarray}
	Therefore, $g^\star(x)=\I[s^\star(x)>0]$ is an optimal solution of Program~\eqref{eq:progfair_relaxcs}. 
	Recall that $\Pr\left[g^\star\neq Y\right]\leq \Pr\left[f^\star\neq Y\right]$ and $\I[s^\star(x)>0]$ is a deterministic classifier.
	Thus, $\I[s^\star(X)>0]$ is also an optimal fair classifier for \ref{eq:progfair}. 
	This completes the proof.
\end{proof}

\subsection{Computation Result in Theorem~\ref{thm:not_linear}}
\label{sec:computation}

We then discuss how to efficiently compute an optimal solution in Theorem~\ref{thm:not_linear}.
By Theorems~\ref{thm:attribute}, it remains to show how to compute the optimal Lagrangian parameters $\lambda^\star$.
The main idea is applying the explicit formulation of $g^\star=\I[s^\star(X)>0]$ to Program \eqref{program}.
Then computing the optimal Lagrangian parameters is equivalent to solving an unconstrained optimization problem.
We have the following theorem.

\begin{theorem}[\textbf{Solution computation}]
	\label{thm:computation}
	Given any parameters $\ell_i,u_i\in [0,1]$ $(i\in [p])$, there exists a polynomial-time algorithm that computes the optimal Lagrangian parameter $\lambda^\star$ as stated in Theorem~\ref{thm:attribute}.
\end{theorem}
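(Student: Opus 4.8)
The plan is to make the dual program \eqref{program} from the proof of Theorem~\ref{thm:attribute} fully explicit in the Lagrangian variables and to recognize the resulting expression as the convex program \eqref{eq:main_lambda}. First I would substitute the closed form of the inner minimizer back into the dual. For fixed $\nu,\zeta\in\R_{\geq 0}^p$, set $\lambda_i=(\zeta_i-\nu_i)/2$; then by the chain of equalities \eqref{eq:opt} the part of the inner objective that depends on $g$ equals $\pi-2\Exp_X[s_\lambda(X)\cdot g(X)]$, up to the constants $-2\sum_{i}\lambda_i\alpha^{(i)}_0-\sum_i\nu_i u_i+\sum_i\zeta_i\ell_i$. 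Minimizing over $g\in[0,1]^\calX$ is then immediate, since the only $g$-dependent term is $-2\Exp_X[s_\lambda(X)g(X)]$: the minimizer is $g=\I[s_\lambda(X)>0]$ and the inner optimum equals $\pi-2\Exp_X[(s_\lambda(X))_+]$ plus those constants. This gives an explicit dual value $D(\nu,\zeta)$.

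Next I would eliminate the redundancy between $\nu_i$ and $\zeta_i$. Since $D$ depends on $(\nu_i,\zeta_i)$ only through $\lambda_i$ (in the $(s_\lambda)_+$ and $\alpha^{(i)}_0$ terms) and through the linear piece $-\nu_i u_i+\zeta_i\ell_i$, for each $i$ I maximize $-\nu_i u_i+\zeta_i\ell_i$ over $\nu_i,\zeta_i\geq 0$ subject to $\zeta_i-\nu_i=2\lambda_i$. Because $u_i\geq\ell_i$, the coefficient $\ell_i-u_i$ of $\nu_i$ is nonpositive, so the optimum is attained at $\nu_i=\max(0,-2\lambda_i)$, $\zeta_i=\max(0,2\lambda_i)$, and equals $2\lambda_i u_i+2(\ell_i-u_i)(\lambda_i)_+$. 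Collecting terms converts $\max_{\nu,\zeta\geq 0}D(\nu,\zeta)$ into the maximization over $\lambda\in\R^p$ of $\pi-2\bigl(\Exp_X[(s_\lambda(X))_+]+\sum_i(\alpha^{(i)}_0-u_i)\lambda_i+\sum_i(u_i-\ell_i)(\lambda_i)_+\bigr)$, i.e.\ exactly \eqref{eq:main_lambda} after discarding the constant $\pi$ and the harmless factor $2$. Strong duality (already invoked in Theorem~\ref{thm:attribute}) then guarantees that any minimizer $\lambda^\star$ of \eqref{eq:main_lambda} is an optimal Lagrangian parameter, so $\I[s_{\lambda^\star}(X)>0]$ is optimal for \ref{eq:progfair}.

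Finally I would verify convexity and invoke a convex solver. For each fixed $x$, $s_\lambda(x)$ is affine in $\lambda$, hence $(s_\lambda(x))_+$ is convex in $\lambda$ and so is $\Exp_X[(s_\lambda(X))_+]$; the term $\sum_i(\alpha^{(i)}_0-u_i)\lambda_i$ is linear; and $\sum_i(u_i-\ell_i)(\lambda_i)_+$ is convex precisely because $u_i-\ell_i\geq 0$. A subgradient in coordinate $i$ is $\Exp_X[\psi_i(X)\,\I[s_\lambda(X)>0]]+(\alpha^{(i)}_0-u_i)+(u_i-\ell_i)\,\I[\lambda_i>0]$, where $\psi_i$ is the coefficient of $\lambda_i$ in $s_\lambda$; each expectation is computable through the assumed probability oracle, so the stochastic subgradient method computes an (approximately) optimal $\lambda^\star$ in polynomial time.

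The main obstacle I expect is the $(\nu_i,\zeta_i)\to\lambda_i$ reduction: one must argue carefully that jointly maximizing $-\nu_i u_i+\zeta_i\ell_i$ over the two nonnegative variables while holding their scaled difference fixed at $\lambda_i$ produces exactly the single convex penalty $(u_i-\ell_i)(\lambda_i)_+$, and that this is precisely where the hypothesis $u_i\geq\ell_i$ enters to keep the final program convex. A secondary point is confirming that the minimizer may be taken in a bounded region so that ``polynomial time'' is justified; this follows from coercivity of the objective once the $(\lambda_i)_+$ penalties and oracle-estimable subgradients are in place, but it should be stated explicitly.
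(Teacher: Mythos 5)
Your proposal is correct and follows essentially the same route as the paper: write the strong-duality dual of \ref{eq:progfair}, substitute the threshold minimizer $\I[s_\lambda(X)>0]$ to get $\pi-2\Exp_X[(s_\lambda(X))_+]$, collapse $(\nu_i,\zeta_i)$ to $\lambda_i=(\zeta_i-\nu_i)/2$ via the per-coordinate optimization that uses $u_i\geq\ell_i$ to produce the convex penalty $(u_i-\ell_i)(\lambda_i)_+$, and then solve the resulting convex program \eqref{eq:main_lambda} by (stochastic) subgradient descent. Your closing caveat about bounding $\|\lambda^\star\|_2$ to justify the polynomial-time claim is apt — the paper handles this only by explicit assumption rather than by a coercivity argument.
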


\noindent
Theorem~\ref{thm:computation} is a corollary of the following lemma.

\begin{lemma}
	\label{lm:not_sensitive_find_lambda}
	In Theorem~\ref{thm:attribute}, the optimal Lagrangian parameter $\lambda^\star$ is the solution of the following optimization program:
	\begin{align}\tag{OPT-Lambda}
	\label{eq:lambda}
	\lambda^\star = \arg\min_{\lambda\in \R^p} \Exp_X\left[ (s_\lambda(X))_+ \right] + \sum_{i\in [p]} \left(\alpha^{(i)}_0-u_i\right)\lambda_i +\sum_{i\in [p]} \left(u_i-\ell_i\right)\cdot (\lambda_i)_+.
	\end{align}
\end{lemma}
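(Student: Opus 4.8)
The plan is to revisit the strong-duality identity~\eqref{program} established in the proof of Theorem~\ref{thm:attribute}, but to now retain the $g$-independent terms $-\sum_{i\in[p]}\nu_i u_i + \sum_{i\in[p]}\zeta_i \ell_i$ that were dropped there: those constants do not affect the minimizing $g$ (which is why they could be ignored when only the \emph{form} $g^\star=\I[s_\lambda>0]$ was sought), but they do determine the dual \emph{value}. First I would substitute the explicit inner minimizer into~\eqref{program}. Writing $q_i(g)=\alpha^{(i)}_0+\sum_{j=1}^k \frac{\alpha^{(i)}_j}{\pi^{(i)}_j}\Exp_X[\eta^{(i)}_j(X)g(X)]$ and setting $\lambda_i=(\zeta_i-\nu_i)/2$ as before, the algebraic identity behind~\eqref{eq:opt} gives that the inner minimum over $g\in[0,1]^\calX$ equals
\[
\pi - 2\,\Exp_X\left[(s_\lambda(X))_+\right] - 2\sum_{i\in[p]}\lambda_i\,\alpha^{(i)}_0 - \sum_{i\in[p]}\nu_i u_i + \sum_{i\in[p]}\zeta_i \ell_i,
\]
attained at $g^\star=\I[s_\lambda>0]$, since $\min_g -2\Exp_X[s_\lambda(X)g(X)] = -2\Exp_X[(s_\lambda(X))_+]$.

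Next I would eliminate the dual variables $(\nu,\zeta)\in\R_{\geq 0}^p$ in favor of $\lambda\in\R^p$. Because every $\lambda\in\R^p$ is realized by some feasible pair (take any $\nu,\zeta\geq 0$ with $\zeta_i-\nu_i=2\lambda_i$), I can partition the feasible set by the value of $\lambda$ and maximize within each fiber. For fixed $\lambda_i$ the only remaining $(\nu,\zeta)$-dependent piece is $-\nu_i u_i+\zeta_i\ell_i$; substituting $\zeta_i=\nu_i+2\lambda_i$ turns this into a one-dimensional linear program in $\nu_i$ with slope $\ell_i-u_i\leq 0$ (this is exactly where the hypothesis $\ell_i\leq u_i$ is used), so its maximum is attained at the smallest feasible value $\nu_i=\max\{0,-2\lambda_i\}$. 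A short case split on the sign of $\lambda_i$ then shows the maximum equals $2\lambda_i u_i - 2(u_i-\ell_i)(\lambda_i)_+$ in both cases.

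Plugging this back, the dual reduces to
\[
\max_{\lambda\in\R^p}\;\pi - 2\left(\Exp_X\left[(s_\lambda(X))_+\right] + \sum_{i\in[p]}\left(\alpha^{(i)}_0-u_i\right)\lambda_i + \sum_{i\in[p]}\left(u_i-\ell_i\right)(\lambda_i)_+\right),
\]
and discarding the constant $\pi$ together with the factor $-2$ converts this maximization into precisely the minimization~\eqref{eq:lambda}. Finally, the $\lambda^\star$ produced in Theorem~\ref{thm:attribute} is exactly an optimal dual variable attaining this value — it is the $\lambda$ for which $g^\star=\I[s_{\lambda^\star}>0]$ recovers the primal optimum by strong duality — hence $\lambda^\star$ is a minimizer of~\eqref{eq:lambda}, which proves the lemma.

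I expect the main obstacle to be the bookkeeping of the second step: one must track both nonnegativity constraints $\nu_i\geq 0$ and $\zeta_i=\nu_i+2\lambda_i\geq 0$, invoke $\ell_i\leq u_i$ to guarantee the inner linear program is bounded above and attains its optimum at the boundary, and verify that the two sign cases assemble into the single clean expression $2\lambda_i u_i-2(u_i-\ell_i)(\lambda_i)_+$ responsible for the $(\lambda_i)_+$ term. A secondary point worth stating explicitly is that the minimizer of~\eqref{eq:lambda} coincides with the optimal dual $\lambda^\star$ of Theorem~\ref{thm:attribute}; this is immediate from strong duality but must be noted in order to conclude that $\I[s_{\lambda^\star}(X)>0]$ is the desired optimal fair classifier.
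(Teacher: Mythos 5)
Your proposal is correct and follows essentially the same route as the paper's proof: both start from the strong-duality identity~\eqref{program}, reparametrize via $\lambda_i=(\zeta_i-\nu_i)/2$, eliminate $(\nu,\zeta)$ by a sign-of-$\lambda_i$ case analysis using $\ell_i\leq u_i$ (your one-dimensional LP in $\nu_i$ with optimum at $\nu_i=\max\{0,-2\lambda_i\}$ is exactly the paper's two-case bound on $\zeta_i\ell_i-\nu_i u_i$), and substitute $g^\star=\I[s_\lambda>0]$ via~\eqref{eq:opt} to obtain the negative of twice the objective of~\eqref{eq:lambda}. The only difference is the order in which you perform the inner substitution and the fiber maximization, which does not change the argument.
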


\begin{proof}	
	Denote $K:=\left\{f\in [0,1]^\calX: \ell_i\leq q_i(f)\leq u_i, \forall i\in [p]\right\}$.
	By the proof of Theorem~\ref{thm:attribute}, we have
	\begin{align*}
	&\min_{f\in K} \Pr\left[f\neq Y\right] \\
	= & \max_{\nu, \zeta\in \R_{\geq 0}^p} \min_{f\in [0,1]^\calX} \Pr\left[f\neq Y\right] + \sum_{i\in [p]} \nu_i\cdot\left(q_i(f)-u_i\right) +\sum_{i\in [p]} \zeta_i\cdot\left(-q_i(f)+\ell_i\right) \\
	= & \max_{\nu, \zeta\in \R_{\geq 0}^p} \min_{f\in [0,1]^\calX} \Pr\left[f\neq Y\right]- \sum_{i\in [p]} (\zeta_i-\nu_i)\cdot q_i(f) +\sum_{i\in [p]} \left(\zeta_i \ell_i-\nu_i u_i\right).
	\end{align*}
	Let $\lambda_i=\frac{\zeta_i-\nu_i}{2}$ for all $i\in [p]$. 
	Considering the outer optimization,	we discuss the following cases.
	\begin{enumerate}
		\item If $\lambda_i\geq 0$, then $\zeta_i\geq \nu_i$.
		We have 
		\[
		\zeta_i \ell_i-\nu_i u_i = (2\lambda_i+\nu_i) \ell_i - \nu_i u_i =  2\lambda_i \ell_i +\nu_i(\ell_i-u_i)\stackrel{\ell_i\leq u_i}{\leq} 2\lambda_i \ell_i.
		\]
		The equality only holds if $\nu_i =0$.
		\item If $\lambda_i< 0$, then $\zeta_i< \nu_i$.
		We have 
		\[
		\zeta_i \ell_i-\nu_i u_i = \zeta_i \ell_i - (\zeta_i-2\lambda_i) u_i = 2\lambda_i u_i +\zeta_i(\ell_i-u_i)\stackrel{\ell_i\leq u_i}{\leq} 2\lambda_i u_i.
		\]
		The equality only holds if $\zeta_i =0$.
	\end{enumerate}
	By the above argument, we have
	\begin{align*}
	&\min_{f\in K} \Pr\left[f\neq Y\right] & \\
	= & \max_{\nu, \zeta\in \R_{\geq 0}^p} \min_{f\in [0,1]^\calX} \Pr\left[f\neq Y\right]- \sum_{i\in [p]} (\zeta_i-\nu_i)\cdot q_i(f) +\sum_{i\in [p]} \left(\zeta_i \ell_i-\nu_i u_i\right) & \\
	= & \max_{\lambda\in \R^p} \min_{f\in [0,1]^\calX} \Pr\left[f\neq Y\right]-2\sum_{i\in [p]} \lambda_i\cdot \left(\alpha^{(i)}_0+ \sum_{j=1}^{k}\frac{\alpha^{(i)}_j}{\pi^{(i)}_j}\cdot \Exp_X\left[\eta^{(i)}_j(X)\cdot f(X)\right]\right) &\\
	& +2\sum_{i\in [p]} \lambda_i\cdot \left( \I[\lambda_i\geq 0]\cdot \ell_i + \I[\lambda_i<0]\cdot u_i \right) & \\
	= & \max_{\lambda\in \R^p} \min_{f\in [0,1]^\calX}  \pi - 2\cdot \Exp_X\left[ s_\lambda(X)\cdot f(X) \right]-2\alpha^{(i)}_0\cdot \sum_{i\in [p]} \lambda_i &\\
	&+2\sum_{i\in [p]} \lambda_i\cdot \left( \I[\lambda_i\geq 0]\cdot \ell_i + \I[\lambda_i<0]\cdot u_i \right) & (\text{Eq.~\eqref{eq:opt} }) \\
	=& \max_{\lambda\in \R^p} \pi - 2\cdot \Exp_X\left[ (s_\lambda(X))_+ \right]-2\alpha^{(i)}_0\cdot \sum_{i\in [p]} \lambda_i +2\sum_{i\in [p]} \left(\ell_i-u_i\right)\cdot (\lambda_i)_+ + u_i \lambda_i & \\ &  (\text{letting $f(X)=\I[s^\star(X)>0]$}) & \\
	= & \max_{\lambda\in \R^p} \pi - 2\cdot \Exp_X\left[ (s_\lambda(X))_+ \right]+ 2\sum_{i\in [p]} \left(u_i-\alpha^{(i)}_0\right)\lambda_i +2\sum_{i\in [p]} \left(\ell_i-u_i\right)\cdot (\lambda_i)_+. &
	\end{align*}
	Hence, the optimal Lagrangian parameter $\lambda^\star$ is exactly the solution of \ref{eq:lambda}, which completes the proof.
\end{proof}

\paragraph{Solving \ref{eq:lambda}.}
To complete the proof of Theorem~\ref{thm:computation}, it remains to show how to compute $\lambda^\star$, i.e., solving the optimization problem~\ref{eq:lambda}. 
Define a function $\phi:\R^p\rightarrow \R$ by
\[
\phi(\lambda)=\Exp_X\left[ (s_\lambda(X))_+ \right] + \sum_{i\in [p]} \left(\alpha^{(i)}_0-u_i\right)\lambda_i +\sum_{i\in [p]} \left(u_i-\ell_i\right)\cdot (\lambda_i)_+.
\]
Note that the first term $\Exp_X\left[ (s_\lambda(X))_+ \right]$ can be rewritten as 
\[
\Exp_X\left[ (s_\lambda(X))_+ \right] = \int_{X\sim \Im} \left(\langle a(X), \lambda \rangle +b(X) \right)_+
\]
for some function $a: \calX\rightarrow \R_{\geq 0}$ and $b: \calX\rightarrow \R$.
Hence, $\Exp_X\left[ (s_\lambda(X))_+ \right]$ is a convex function of $\lambda$.
On the other hand, since $u_i-\ell_i\geq 0$, $\left(u_i-\ell_i\right)\cdot (\lambda_i)_+$ is also a convex function of $\lambda_i$.
Overall, $\phi(\lambda)$ is a convex function of $\lambda$.
Then, a natural idea is to apply standard convex optimization algorithms for solving \ref{eq:lambda}. 

If $\calX$ is finite, we can rewrite $\phi(\lambda)$ as a piecewise linear functions explicitly or apply subgradient descent.
However, $\calX$ can be infinite. 
In this case, we apply the stochastic subgradient method~\cite{boyd2008stochastic}.
We make a wild assumption that $\|\lambda^\star\|_2$ is bounded and $\min_{i\in [p],j\in [k]}\Pr\left[G_i,\event^{(i)}_j\right]$ is a constant away from 0. 

We first consider the subgradient of $\phi(\lambda)$.
Define $g:\R^p\rightarrow \R^p$ as follows: for any $\lambda\in \R^p$ and $i\in [p]$,
\begin{align*}
\label{eq:subgradient}
\begin{split}
g_i(\lambda):= \int_{x\in \calX} \I\left[s_{\lambda}(x) \geq 0\right]\cdot \sum_{j=1}^{k} \alpha^{(i)}_j\cdot \frac{\Pr\left[X=x\right]\cdot \eta^{(i)}_j(x)}{\pi^{(i)}_j} + \alpha^{(i)}_0-u_i + \left(u_i-\ell_i\right)\cdot \I\left[\lambda_i>0\right].
\end{split}
\end{align*}
It is not hard to check that $g(\lambda)$ is a subgradient of $\phi(\lambda)$.
However, since $\calX$ is infinite, we can not compute the subgradient $g(\lambda)$ directly.
Hence, we apply the stochastic subgradient method~\cite{boyd2008stochastic}. 
We first show how to construct an unbiased estimation of the subgradient $g(\lambda)$.
Given a $\lambda\in \R^p$, we draw a sample $(\dot{x},\dot{z},\dot{y})\sim \Im$.
\footnote{Here, we assume the existence of a sample oracle for $\Im$.}
Then we estimate $g(\lambda)$ by a stochastic vector $\tilde{g}(\lambda)$ where for each $i\in [p]$, 
\[
\tilde{g}_i(\lambda):= \I\left[s_{\lambda}(\dot{x}) \geq 0\right] \cdot \sum_{j=1}^{k} \frac{\alpha^{(i)}_j\cdot  \eta^{(i)}_j(\dot{x})}{\pi^{(i)}_j} + \alpha^{(i)}_0-u_i + \left(u_i-\ell_i\right)\cdot \I\left[\lambda_i>0\right].
\]
Note that $\Exp\left[\tilde{g}(\lambda)\right]=g(\lambda)$ which implies that $\tilde{g}$ is an unbiased estimation of $g$.
Our update rule is as follows:
\begin{enumerate}
	\item Initially, let $\lambda^{(0)}:=0$.
	\item Assume at iteration $t$, we have a point $\lambda^{(t)}\in \R^p$. 
	Let $\lambda^{(t+1)}:= \lambda^{(t)}-c_t \tilde{g}(\lambda^{(t)})$ where $c_t > 0$ is the $t$-th step size. 
\end{enumerate}

\noindent
Let $G:= \sup_{\lambda} \Exp\left[\|\tilde{g}(\lambda)\|_2^2\right]$ denote the supremum of the variance of $\tilde{g}(\lambda)$.
By~\cite[Section 3]{boyd2008stochastic}, we have
\[
\min_{t\in [T]} \Exp\left[\phi(\lambda^{(t)})\right] - \phi(\lambda^\star) \leq \frac{\|\lambda^\star\|_2^2+G\cdot \sum_{t\in [T]}c^2_t}{\sum_{t\in [T]}c_t}.
\]
By setting $T:=\frac{4G\cdot \|\lambda^\star\|_2^2}{\eps^2}$ and $c_t:=\sqrt{\frac{G\cdot \|\lambda^\star\|_2^2}{T}}$ ($\eps>0$), we have $\min_{t\in [T]} \Exp\left[\phi(\lambda^{(t)})\right]\leq \phi(\lambda^\star)+\eps$. 
Hence, our update rule of $\lambda$ converges if $G$ is bounded.
Next, we give an upper bound of $G$ by the following lemma.

\begin{lemma}
	\label{lm:subgradient}
	$G\leq 2\sum_{i\in [p]} \left(1+ |\alpha^{(i)}_0|\right)^2+\frac{2k^2p\cdot\left\|\alpha\right\|_{\infty}^2}{\min_{i\in [p],j\in [k]}\Pr\left[G_i,\event^{(i)}_j\right]}$
\end{lemma}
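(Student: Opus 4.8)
The plan is to bound $G = \sup_\lambda \Exp[\|\tilde g(\lambda)\|_2^2]$ by bounding each coordinate $\tilde g_i(\lambda)$ in absolute value and then summing. Recall that
\[
\tilde g_i(\lambda) = \I\left[s_\lambda(\dot x)\geq 0\right]\cdot \sum_{j=1}^{k}\frac{\alpha^{(i)}_j\cdot \eta^{(i)}_j(\dot x)}{\pi^{(i)}_j} + \alpha^{(i)}_0 - u_i + \left(u_i-\ell_i\right)\cdot\I\left[\lambda_i>0\right].
\]
First I would split $\tilde g_i(\lambda)$ into two groups: the ``stochastic'' term $R_i := \I[s_\lambda(\dot x)\geq 0]\cdot\sum_{j=1}^k \alpha^{(i)}_j\eta^{(i)}_j(\dot x)/\pi^{(i)}_j$, and the ``deterministic'' remainder $D_i := \alpha^{(i)}_0 - u_i + (u_i-\ell_i)\I[\lambda_i>0]$. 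Using the elementary inequality $(a+b)^2\leq 2a^2+2b^2$ gives $\tilde g_i(\lambda)^2 \leq 2R_i^2 + 2D_i^2$, so $\|\tilde g(\lambda)\|_2^2 \leq 2\sum_i R_i^2 + 2\sum_i D_i^2$, and it suffices to bound each sum.

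For the deterministic part, since $\ell_i,u_i\in[0,1]$ and the indicator is at most $1$, we have $|D_i| = |\alpha^{(i)}_0 + (u_i-\ell_i)\I[\lambda_i>0] - u_i| \leq |\alpha^{(i)}_0| + |u_i-\ell_i| + |u_i|$; a cleaner route is to observe that the offset contributed by $-u_i + (u_i-\ell_i)\I[\lambda_i>0]$ lies between $-u_i$ and $-\ell_i$, hence is bounded in absolute value by $1$, so $|D_i|\leq 1 + |\alpha^{(i)}_0|$ and $\sum_i D_i^2 \leq \sum_i (1+|\alpha^{(i)}_0|)^2$, which matches the first term of the claimed bound. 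For the stochastic part, the indicator is at most $1$, so $|R_i| \leq \sum_{j=1}^k |\alpha^{(i)}_j|\cdot \eta^{(i)}_j(\dot x)/\pi^{(i)}_j$. Writing $\|\alpha\|_\infty := \max_{i,j}|\alpha^{(i)}_j|$ and using $\pi^{(i)}_j \geq \min_{i,j}\Pr[G_i,\event^{(i)}_j] =: \pi_{\min}$, together with the Cauchy--Schwarz bound $\big(\sum_{j=1}^k c_j\big)^2 \leq k\sum_{j=1}^k c_j^2$, I would get
\[
R_i^2 \leq \frac{\|\alpha\|_\infty^2}{\pi_{\min}^2}\Big(\sum_{j=1}^k \eta^{(i)}_j(\dot x)\Big)^2 \leq \frac{k\,\|\alpha\|_\infty^2}{\pi_{\min}^2}\sum_{j=1}^k \eta^{(i)}_j(\dot x)^2.
\]
Summing over $i\in[p]$ and taking expectation over $\dot x\sim\Im$ leaves the quantity $\Exp[\eta^{(i)}_j(\dot X)^2]$, which I would bound by $\Exp[\eta^{(i)}_j(\dot X)] = \pi^{(i)}_j$ (since $\eta^{(i)}_j\in[0,1]$ so its square is at most itself). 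This collapses one factor of $1/\pi_{\min}$ against the $\pi^{(i)}_j$, yielding $\Exp[\sum_i R_i^2] \leq k^2 p\,\|\alpha\|_\infty^2/\pi_{\min}$.

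Combining the two bounds via the factor $2$ from the split gives exactly
\[
G \leq 2\sum_{i\in[p]}\left(1+|\alpha^{(i)}_0|\right)^2 + \frac{2k^2 p\,\|\alpha\|_\infty^2}{\min_{i\in[p],j\in[k]}\Pr\left[G_i,\event^{(i)}_j\right]},
\]
as claimed. The main obstacle is the stochastic term: one must be careful to first pull the indicator out (bounding it by $1$), then apply Cauchy--Schwarz to turn $(\sum_j)^2$ into $k\sum_j(\cdot)^2$ so that the eventual expectation is manageable, and finally exploit the crucial cancellation $\Exp[\eta^{(i)}_j(\dot X)^2]\leq \Exp[\eta^{(i)}_j(\dot X)] = \pi^{(i)}_j$ to reduce the power of $1/\pi_{\min}$ from two to one — without this last observation one would obtain a weaker bound with $\pi_{\min}^2$ in the denominator. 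Everything else is routine arithmetic once the split and these two inequalities are in place.
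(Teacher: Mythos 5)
Your overall architecture matches the paper's proof: the split of $\tilde g_i(\lambda)$ into the stochastic term $R_i$ and the deterministic offset $D_i$, the bound $\|\tilde g(\lambda)\|_2^2\le 2\sum_i R_i^2+2\sum_i D_i^2$, the estimate $|D_i|\le 1+|\alpha^{(i)}_0|$ (which correctly uses $\ell_i,u_i\in[0,1]$), the Cauchy--Schwarz step $(\sum_j c_j)^2\le k\sum_j c_j^2$, and the key observation $\Exp[\eta^{(i)}_j(X)^2]\le\Exp[\eta^{(i)}_j(X)]=\pi^{(i)}_j$ are all exactly the paper's ingredients. However, there is a genuine error in the \emph{order} in which you apply the bounds on the stochastic term. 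You replace every $\pi^{(i)}_j$ in the denominator by $\pi_{\min}$ before taking the expectation, arriving at
\[
\Exp\Big[\sum_{i\in[p]} R_i^2\Big]\;\le\;\frac{k\|\alpha\|_\infty^2}{\pi_{\min}^2}\sum_{i\in[p]}\sum_{j=1}^k \pi^{(i)}_j,
\]
and then assert that one factor of $1/\pi_{\min}$ ``collapses against the $\pi^{(i)}_j$.'' That collapse would require $\pi^{(i)}_j\le\pi_{\min}$, but the inequality goes the other way ($\pi^{(i)}_j\ge\pi_{\min}$ by definition of the minimum), so from your display the best available conclusion is $k^2p\|\alpha\|_\infty^2/\pi_{\min}^2$, which is strictly weaker than the lemma (and the loss is not cosmetic: since the $G_i$ partition the domain, $\pi_{\min}\le 1/p$, so $1/\pi_{\min}^2\ge p/\pi_{\min}$ and the extra factor is not absorbed by the $p$ already present).

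The fix is to keep the matching denominator until after the expectation, which is what the paper does: bound
\[
\Exp\left[\frac{\eta^{(i)}_j(X)^2}{(\pi^{(i)}_j)^2}\right]\;\le\;\frac{\Exp\left[\eta^{(i)}_j(X)\right]}{(\pi^{(i)}_j)^2}\;=\;\frac{1}{\pi^{(i)}_j}\;\le\;\frac{1}{\pi_{\min}},
\]
and only then sum over $i\in[p]$ and $j\in[k]$ to get $kp/\pi_{\min}$, which combined with the Cauchy--Schwarz factor $k\|\alpha\|_\infty^2$ yields the claimed $k^2p\|\alpha\|_\infty^2/\pi_{\min}$. With that reordering your argument is complete and coincides with the paper's.
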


We defer the proof to Appendix~\ref{app:missing}. By the above lemma, since $\min_{i\in [p],j\in [k]}\Pr\left[G_i,\event^{(i)}_j\right]$ is a constant away from 0, $G$ is upper bounded and hence our update rule converges.
Combining Theorems~\ref{thm:attribute} and~\ref{thm:computation}, we directly obtain Theorem~\ref{thm:not_linear}.

\subsection{Algorithm for Theorem~\ref{thm:not_linear}.}
Now we are ready to propose an algorithm for Theorem~\ref{thm:not_linear}; see Algorithm~\ref{alg:plugin}.
The main idea is to compute the optimal Lagrangian parameter $\lambda^\star$ by Lemma~\ref{lm:not_sensitive_find_lambda} and then output the classifier by the formulation given in Theorem~\ref{thm:attribute}.
Note that our algorithm is similar to~\cite[Algorithm 1]{menon2018the}.
However, Menon and Williamson~\cite{menon2018the} did not show how to compute the optimal Lagrangian parameters.

\IncMargin{1em}
\begin{algorithm}
	\caption[The LOF caption]{\ref{eq:progfair}($\Im,q^{\Im}, \left\{\ell_i\right\},\left\{u_i\right\}$)}
	\label{alg:plugin}
	\SetKwInOut{Input}{Input}
	\SetKwInOut{Output}{Output}
	
	\Indm
	\Input{A distribution $\Im$ over $\calX\times [p] \times \left\{0,1\right\}$, a linear group performance function $q^\Im\in \calQ_{\mathrm{lin} } $ and fairness parameters $\ell_i,u_i\in [0,1]$ for each $i\in [p]$.}
	\Output{A classifier $f:\calX\rightarrow \left\{0,1\right\}$. } 
	\Indp
	\BlankLine
	For any $x\in \calX$, $i\in [p]$ and $j\in [k]$, $\eta(x) \leftarrow \Pr_\Im\left[Y=1\mid X=x\right]$ and $\eta^{(i)}_j(x) \leftarrow \Pr_\Im\left[G_i,\event^{(i)}_j\mid X=x\right]$. \\
	Compute $\lambda^\star$ to be the optimal solution of \ref{eq:lambda}. \\
	Compute $s_{\lambda^\star}: x\rightarrow \eta(x)-0.5+\sum_{i\in [p]}\lambda^\star_i\cdot \left(\sum_{j=1}^{k} \frac{\alpha^{(i)}_j}{\pi^{(i)}_j}\cdot \eta^{(i)}_j(x)\right)$. \\
	Return $f: x\rightarrow \I[s_{\lambda^\star}(x)>0]$.
\end{algorithm}
\DecMargin{1em}

\begin{remark}
	Observe that we do not need the full information of distribution $\Im$ in Algorithm~\ref{alg:plugin}.
	In fact, we only need the following information: $\Pr_{\Im}\left[X\right]$, $\pi^{(i)}_j :=\Pr_{\Im}\left[G_i,\event^{(i)}_j \right]$, $\eta(X) :=\Pr_{\Im}\left[Y=1\mid X\right]$ and $\eta^{(i)}_j(X) := \Pr_{\Im}\left[G_i,\event^{(i)}_j\mid X\right]$.
	This observation is useful when the underlying distribution $\Im$ is unknown, since we only need to estimate the above information instead of estimating the full distribution $\Im$.
\end{remark}

\subsection{Extension to $\calF=\left\{0,1\right\}^{\calX\times [p]}$} 
\label{sec:algorithm_provide}

In this section, we discuss how to extend Algorithm~\ref{alg:plugin} to the case that the sensitive attribute is used for prediction, i.e., $\calF=\left\{0,1\right\}^{\calX\times [p]}$.
We summarize the differences as follows.

\paragraph{Characterization.} 
Similarly, we denote $\pi := \Pr\left[Y=1\right]$ and $\eta(x,i) := \Pr\left[Y=1\mid X=x, G_i\right]$ for $i\in [p]$ and $x\in \calX$.
For any $x\in \calX$, $i,i'\in [p]$ and $j\in [k]$, we also denote 
\[
\pi^{(i)}_j := \Pr\left[G_i,\event^{(i)}_j \right], \quad \eta^{(i)}_j(x,i') := \Pr\left[G_i,\event^{(i)}_j\mid X=x,G_{i'}\right].
\]
By the definition of $G_i,\event^{(i)}_j$, we know that $\eta^{(i)}_j(x,i') =0$ if $i\neq i'$.
Pluging in $\widehat{x}:=(x,i)$ to Theorem~\ref{thm:attribute}, we directly have the following corollary.

\begin{corollary}[\textbf{Solution characterization for $\calF=\left\{0,1\right\}^{\calX\times [p]}$}]
	\label{cor:attribute_provide}
	Suppose $\calF=\left\{0,1\right\}^{\calX\times [p]}$.
	Given any parameters $\ell_i,u_i\in [0,1]$ $(i\in [p])$,
	there exists $\lambda^\star\in \R^p$ such that $\I[s_{\lambda^\star}(x,i)>0]$ is an optimal solution of \ref{eq:progfair}, where \[
	s_{\lambda^\star}(x,i)=\eta(x,i)-0.5+\lambda^\star_i\cdot \left(\sum_{j=1}^{k} \frac{\alpha^{(i)}_j}{\pi^{(i)}_j}\cdot \eta^{(i)}_j(x,i)\right).
	\]
\end{corollary}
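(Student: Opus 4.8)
The plan is to reduce the claimed statement, Corollary~\ref{cor:attribute_provide}, directly to the already-proven Theorem~\ref{thm:attribute} by the substitution $\widehat{x}:=(x,i)$, treating the second model $\calF=\left\{0,1\right\}^{\calX\times [p]}$ as a classification problem over the enlarged feature space $\widehat{\calX}:=\calX\times [p]$. The key conceptual point is that a classifier $f:\calX\times[p]\rightarrow\{0,1\}$ is nothing but a deterministic classifier on $\widehat{\calX}$, and every probability appearing in Theorem~\ref{thm:attribute} has a natural analogue once $X$ is replaced by $\widehat{X}=(X,Z)$. Concretely, I would verify that $\Pr[f\neq Y]$ and each $q_i(f)$ can be rewritten in terms of $\eta(x,i):=\Pr[Y=1\mid X=x,G_i]$ and $\eta^{(i)}_j(x,i'):=\Pr[G_i,\event^{(i)}_j\mid X=x,G_{i'}]$, exactly mirroring Lemmas~\ref{lm:costrisk} and~\ref{lm:form} with $\widehat{X}$ in place of $X$. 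Applying Theorem~\ref{thm:attribute} verbatim to this relabeled problem then yields an optimal threshold classifier $\I[s_{\lambda^\star}(\widehat{x})>0]$, and it only remains to identify the resulting threshold $s_{\lambda^\star}(x,i)$ with the expression stated in the corollary.

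The step that carries the real content is the simplification of the sum $\sum_{i'\in[p]}\lambda^\star_{i'}\cdot\bigl(\sum_{j=1}^{k}\frac{\alpha^{(i')}_j}{\pi^{(i')}_j}\cdot\eta^{(i')}_j(x,i)\bigr)$ down to the single term indexed by $i'=i$. This is exactly where the observation $\eta^{(i')}_j(x,i')=0$ for $i'\neq i$ (stated just above the corollary) does the work: by definition of $G_{i'}$ and $\event^{(i')}_j$, conditioning on $G_i$ forces the event $G_{i'}$ to have zero probability whenever $i'\neq i$, so all cross terms vanish and the threshold formula in Theorem~\ref{thm:attribute} collapses to $s_{\lambda^\star}(x,i)=\eta(x,i)-0.5+\lambda^\star_i\cdot\bigl(\sum_{j=1}^{k}\frac{\alpha^{(i)}_j}{\pi^{(i)}_j}\cdot\eta^{(i)}_j(x,i)\bigr)$. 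I would state this vanishing explicitly as the crux of the argument and note that it is what makes the threshold group-separable: the decision on $(x,i)$ depends only on the $i$-th Lagrangian coordinate $\lambda^\star_i$, reflecting that the second model effectively learns an independent classifier per group $G_i$.

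The main (and essentially only) obstacle is bookkeeping rather than mathematical depth: one must check that the randomized-classifier relaxation, the linearity of $\Pr[\widehat{f}\neq Y]$ and $q_i(\widehat{f})$ in $\widehat{f}(\widehat{X})$, and the strong-duality argument all transfer unchanged to the product feature space $\widehat{\calX}$. Since $[p]$ is finite, integration over $\widehat{\calX}$ splits as $\int_{\calX}\sum_{i\in[p]}$, so no measure-theoretic subtlety arises beyond what Theorem~\ref{thm:attribute} already handles. Because every ingredient is already established for arbitrary feature spaces, the corollary genuinely follows by ``plugging in $\widehat{x}:=(x,i)$,'' and the proof is short: invoke Theorem~\ref{thm:attribute} on $\widehat{\calX}$, then apply $\eta^{(i')}_j(x,i')=0$ for $i'\neq i$ to reduce the threshold to the claimed per-group form.
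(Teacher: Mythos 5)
Your proposal is correct and follows essentially the same route as the paper: the authors likewise obtain the corollary by plugging $\widehat{x}:=(x,i)$ into Theorem~\ref{thm:attribute} over the enlarged feature space $\calX\times[p]$ and then using the observation that $\eta^{(i')}_j(x,i)=0$ for $i'\neq i$ to collapse the threshold sum to the single term indexed by $i$. Your write-up simply makes explicit the bookkeeping (transfer of Lemmas~\ref{lm:costrisk} and~\ref{lm:form} and the duality argument) that the paper leaves implicit.
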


\paragraph{Computation.} We still need to show how to compute the optimal Lagrangian parameters $\lambda^\star$.
Similar to Lemma~\ref{lm:not_sensitive_find_lambda}, we have the following lemma which shows that $\lambda^\star$ is the optimal solution of some convex program.

\begin{lemma}
	\label{lm:sensitive_find_lambda}
	In Corollary~\ref{cor:attribute_provide}, the optimal Lagrangian parameter $\lambda^\star$ is the solution of the following program:
	\begin{align*}
	\label{eq:lambda_provide}
	\lambda^\star = \arg\min_{\lambda\in \R^p} \Exp_{X,Z}\left[ \left(s_\lambda(X,Z)\right)_+ \right] + \sum_{i\in [p]} \left(\alpha^{(i)}_0-u_i\right)\lambda_i +\sum_{i\in [p]} \left(u_i-\ell_i\right)\cdot (\lambda_i)_+ ,
	\end{align*}
	where $s_\lambda(x,i)=\eta(x,i)-0.5+\lambda_i\cdot \left(\sum_{j=1}^{k} \frac{\alpha^{(i)}_j}{\pi^{(i)}_j}\cdot \eta^{(i)}_j(x,i)\right)$.
\end{lemma}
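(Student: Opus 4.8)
The plan is to repeat the proof of Lemma~\ref{lm:not_sensitive_find_lambda} essentially verbatim, with the single feature $X$ replaced by the extended feature $(X,Z)$ throughout, and to exploit the vanishing identity $\eta^{(i)}_j(x,i')=0$ for $i'\neq i$ to collapse the Lagrangian term into the single-multiplier form stated. First I would record the two structural lemmas in the extended setting. By the same marginalization computation as in Lemma~\ref{lm:form}, for any randomized $f\in[0,1]^{\calX\times[p]}$ one has
\[
q_i(f)=\alpha^{(i)}_0+\sum_{j=1}^{k}\frac{\alpha^{(i)}_j}{\pi^{(i)}_j}\cdot\Exp_{X,Z}\left[\eta^{(i)}_j(X,Z)\cdot f(X,Z)\right],
\]
where the only subtlety is that, since $\eta^{(i)}_j(x,i')=0$ whenever $i'\neq i$, the product $\eta^{(i)}_j(X,Z)\cdot f(X,Z)$ is supported on $Z=i$ and hence records only the values $f(x,i)$ on group $G_i$; likewise the analogue of Lemma~\ref{lm:costrisk} gives $\Pr[f\neq Y]=\pi+\Exp_{X,Z}[(1-2\eta(X,Z))f(X,Z)]$. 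Consequently the relaxed program $\min_{f\in K}\Pr[f\neq Y]$ over $K=\left\{f\in[0,1]^{\calX\times[p]}:\ell_i\leq q_i(f)\leq u_i\right\}$ is a linear program in $f$, so strong duality applies exactly as in Program~\eqref{program}.

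Second, I would run the identical dual manipulation. Writing the Lagrangian with multipliers $\nu,\zeta\in\R_{\geq0}^p$ and substituting $\lambda_i=(\zeta_i-\nu_i)/2$, the sign-based case analysis of Lemma~\ref{lm:not_sensitive_find_lambda} is unchanged: optimizing the constant terms $\zeta_i\ell_i-\nu_i u_i$ over $(\nu_i,\zeta_i)\geq 0$ consistent with a fixed $\lambda_i$ yields $2\lambda_i u_i+2(\ell_i-u_i)(\lambda_i)_+$ (taking $\nu_i=0$ when $\lambda_i\geq0$ and $\zeta_i=0$ when $\lambda_i<0$, using $\ell_i\leq u_i$). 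For the inner minimization over $f\in[0,1]^{\calX\times[p]}$ I would invoke the extended characterization of Corollary~\ref{cor:attribute_provide}: the objective equals $\pi-2\Exp_{X,Z}[s_\lambda(X,Z)\cdot f(X,Z)]$ and is minimized pointwise by $f(x,i)=\I[s_\lambda(x,i)>0]$, giving $\pi-2\Exp_{X,Z}[(s_\lambda(X,Z))_+]$. Assembling the pieces,
\[
\min_{f\in K}\Pr[f\neq Y]=\max_{\lambda\in\R^p}\ \pi-2\Exp_{X,Z}\left[(s_\lambda(X,Z))_+\right]+2\sum_{i\in[p]}\left(u_i-\alpha^{(i)}_0\right)\lambda_i+2\sum_{i\in[p]}\left(\ell_i-u_i\right)(\lambda_i)_+,
\]
so the maximizer $\lambda^\star$ is exactly the minimizer of the stated convex program.

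The one place that genuinely differs from Lemma~\ref{lm:not_sensitive_find_lambda}, and which I expect to be the only real obstacle, is justifying why $s_\lambda(x,i)$ here carries a single multiplier $\lambda_i$ rather than the full sum over all $i'\in[p]$ appearing in the non-sensitive case. This is precisely the payoff of the identity $\eta^{(i)}_j(x,i')=0$ for $i'\neq i$: plugging $\widehat{x}=(x,i)$ into the threshold of Theorem~\ref{thm:attribute} makes every cross term with $i'\neq i$ vanish, so the inner minimization decouples across groups and the coordinatewise threshold $\I[s_\lambda(x,i)>0]$ is optimal with $s_\lambda(x,i)=\eta(x,i)-0.5+\lambda_i\left(\sum_{j=1}^k\frac{\alpha^{(i)}_j}{\pi^{(i)}_j}\eta^{(i)}_j(x,i)\right)$. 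Once this decoupling is made explicit, every remaining step is the verbatim $X\mapsto(X,Z)$ translation of the earlier argument, and the claim follows.
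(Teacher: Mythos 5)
Your proposal is correct and follows exactly the route the paper intends: the paper gives no separate proof of Lemma~\ref{lm:sensitive_find_lambda}, deferring to the argument of Lemma~\ref{lm:not_sensitive_find_lambda} with $X$ replaced by $(X,Z)$, which is precisely what you carry out. Your explicit justification of the collapse of the Lagrangian sum to a single multiplier $\lambda_i$ via the identity $\eta^{(i)}_j(x,i')=0$ for $i'\neq i$ is the one detail the paper leaves implicit, and you handle it correctly.
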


\begin{remark}
	Similarly, we do not need the full information of distribution $\Im$.
	In fact, we only need to have the following information for computing an optimal fair classifier: $\Pr_{\Im}\left[X\right]$, $\pi^{(i)}_j :=\Pr_{\Im}\left[G_i,\event^{(i)}_j \right]$, $\eta(X,Z) :=\Pr_{\Im}\left[Y=1\mid X,Z\right]$ and $\eta^{(i)}_j(X,Z) := \Pr_{\Im}\left[G_i,\event^{(i)}_j\mid X,Z\right]$.
\end{remark}

\subsection{Generalization to $q\in \calQ_{\mathrm{linf} }$}
\label{sec:algorithm_fractional}

In this section, we consider how to generalize Algorithm~\ref{alg:plugin} to $q\in \calQ_{\mathrm{linf} }$. %
By~\eqref{eq:fractional}, we assume for any $f\in \calF$ and $i\in [p]$,
\[
q_i(f) = \frac{ \alpha^{(i)}_0+ \sum_{j=1}^{k} \alpha^{(i)}_j\cdot \Pr\left[f=1\mid G_i,\event^{(i)}_j\right]}{\beta^{(i)}_0+ \sum_{j=1}^{l} \beta^{(i)}_j\cdot \Pr\left[f=1\mid G_i,\eventb^{(i)}_j\right]},
\]
where
\[
\alpha^{(i)}_0+ \sum_{j=1}^{k} \alpha^{(i)}_j\cdot \Pr_\Im\left[f=1\mid G_i,\event^{(i)}_j\right], \beta^{(i)}_0+ \sum_{j=1}^{l} \beta^{(i)}_j\cdot \Pr_\Im\left[f=1\mid G_i,\eventb^{(i)}_j\right]\geq 0.
\]
Then by simple calculation, the fairness constraint $\ell_i\leq q_i(f)\leq u_i$ is equivalent to the following constraints:
\begin{align*}
& \alpha^{(i)}_0-\ell_i \beta^{(i)}_0 + \sum_{j=1}^{k} \alpha^{(i)}_j\cdot \Pr\left[f=1\mid G_i,\event^{(i)}_j\right] -\ell_i \sum_{j=1}^{l} \beta^{(i)}_j\cdot \Pr\left[f=1\mid G_i,\eventb^{(i)}_j\right] \geq 0, ~\text{and} \\
& -\alpha^{(i)}_0  +u_i\beta^{(i)}_0- \sum_{j=1}^{k} \alpha^{(i)}_j\cdot \Pr\left[f=1\mid G_i,\event^{(i)}_j\right] +u_i \sum_{j=1}^{l} \beta^{(i)}_j\cdot \Pr\left[f=1\mid G_i,\eventb^{(i)}_j\right] \geq 0.
\end{align*}

\noindent
According to above constraints, we construct two linear group benefit functions $q^{(1)}$ and $q^{(2)}$: for any $f\in \calF$ and $i\in [p]$, denote
\begin{align*}
& q^{(1)}_i(f):=\alpha^{(i)}_0-\ell_i \beta^{(i)}_0 + \sum_{j=1}^{k} \alpha^{(i)}_j\cdot \Pr\left[f=1\mid G_i,\event^{(i)}_j\right] -\ell_i \sum_{j=1}^{l} \beta^{(i)}_j\cdot \Pr\left[f=1\mid G_i,\eventb^{(i)}_j\right], ~\text{and} \\
& q^{(2)}_i(f):= -\alpha^{(i)}_0  +u_i\beta^{(i)}_0- \sum_{j=1}^{k} \alpha^{(i)}_j\cdot \Pr\left[f=1\mid G_i,\event^{(i)}_j\right] +u_i \sum_{j=1}^{l} \beta^{(i)}_j\cdot \Pr\left[f=1\mid G_i,\eventb^{(i)}_j\right].
\end{align*}
Then, \ref{eq:progfair} w.r.t. to $q$ is equivalent to the following program:

\begin{align*} 
\begin{split}
\min_{f\in \calF} &~\Pr_{\Im}\left[f\neq Y\right] \\ 
s.t., & ~q^{(1)}_i(f) \geq 0, ~ \forall i\in [p] \\
& ~q^{(2)}_i(f) \geq 0, ~ \forall i\in [p].
\end{split}
\end{align*}

\noindent
The above program is similar to the case of linear group benefit functions.
The only difference is that for each $i\in [p]$, we have two linear constraints now.
However, it will only introduce double Lagrangian parameters to handle all linear constraints.
By Lagrangian principle, we can obtain Theorem~\ref{thm:fractional}.

We again define some notations for simplicity.
For any $x\in \calX$, denote $\eta(x):= \Pr\left[Y=1 \mid X=x\right]$.
For any $x\in \calX$, $i\in [p]$ and $j\in [k]$, denote 
\[
\eta^{(i)}_j(x):=\Pr\left[G_i,\event^{(i)}_j\mid X=x\right], \quad \pi^{(i)}_j:= \Pr\left[G_i,\event^{(i)}_j\right].
\]
For any $x\in \calX$, $i\in [p]$ and $j\in [l]$, denote  
\[
\xi^{(i)}_j(x):=\Pr\left[G_i,\eventb^{(i)}_j\mid X=x\right],\quad\omega^{(i)}_j:= \Pr\left[G_i,\eventb^{(i)}_j\right].
\]
For any $\nu,\zeta\in \R_{\geq 0}^p$, we define a function $s_{\nu,\zeta}: \calX\rightarrow \R$ by
\begin{align*}
s_{\nu,\zeta}(x)=&\eta(x)-0.5+\sum_{i\in [p]}\nu_i\cdot \left(\sum_{j\in [k]}\frac{\alpha^{(i)}_j}{\pi^{(i)}_j}\cdot \eta^{(i)}_j(x) - \ell_i\sum_{j\in [l]}\frac{ \beta^{(i)}_j}{\omega^{(i)}_j}\cdot \xi^{(i)}_j(x)\right) \\
&+ \zeta_i\cdot \left(-\sum_{j\in [k]}\frac{\alpha^{(i)}_j}{\pi^{(i)}_j}\cdot \eta^{(i)}_j(x) + u_i\sum_{j\in [l]}\frac{ \beta^{(i)}_j}{\omega^{(i)}_j}\cdot \xi^{(i)}_j(x)\right).
\end{align*}
Similar to~\eqref{eq:s_lambda}, $s_{\nu,\zeta}(x)$ can be regarded as the optimal threshold for the following fairness-aware classification problem:
\[
\min_{f\in \calF} \Pr\left[f\neq Y\right]-\sum_{i\in [p]} \nu_i\cdot q^{(1)}_i(f)-\sum_{i\in [p]} \zeta_i\cdot q^{(2)}_i(f).
\] 
Then we have the following theorem.

\begin{theorem}[\textbf{Solution characterization and computation for $q\in \calQ_{\mathrm{linf} }$}]
	\label{thm:fractional}
	Suppose $\calF=\left\{0,1\right\}^\calX$ and $q\in \calQ_{\mathrm{linf} }$.
	Given any parameters $\ell_i,u_i\in [0,1]$ $(i\in [p])$, there exists $\nu^\star, \zeta^\star\in \R_{\geq 0}^p$ such that
	$
	\I[s_{\nu^\star,\zeta^\star}(x)>0]
	$
	is an optimal fair classifier for \ref{eq:progfair}. 
	Moreover, we can compute the optimal Lagrangian parameters $\nu^\star$ and $\zeta^\star$ in polynomial time as a solution of the following convex program:
	\begin{align*}
	(\nu^\star, \zeta^\star) = \arg\min_{\nu,\zeta \in \R_{\geq 0}^p} \Exp_X\left[ (s_{\nu,\zeta}(X)_+ \right] + \sum_{i\in [p]} \nu_i \cdot \left(\alpha^{(i)}_0-\ell_i\beta^{(i)}_0\right)+\sum_{i\in [p]} \zeta_i\cdot \left(-\alpha^{(i)}_0+ u_i\beta^{(i)}_0\right).
	\end{align*}
\end{theorem}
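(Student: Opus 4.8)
The plan is to mirror the two-part argument used for the linear case---Theorem~\ref{thm:attribute} for the characterization and Lemma~\ref{lm:not_sensitive_find_lambda} for the computation---now applied to the pair of linear constraints $q^{(1)}_i(f)\geq 0$ and $q^{(2)}_i(f)\geq 0$ derived just above the statement. The one genuinely new structural feature is that these two constraints involve \emph{different} affine functionals of $f$ (one coming from the lower bound $\ell_i$, one from the upper bound $u_i$), so their multipliers $\nu_i$ and $\zeta_i$ cannot be merged into a single signed multiplier $\lambda_i$ as in the linear case; instead both remain in $\R_{\geq 0}$, which is exactly why the dual is optimized over $\R_{\geq 0}^p\times\R_{\geq 0}^p$ rather than over all of $\R^p$.

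First I would relax to randomized classifiers $f:\calX\to[0,1]$. By Lemma~\ref{lm:costrisk} the objective $\Pr[f\neq Y]=\pi+\Exp_X[(1-2\eta(X))f(X)]$ is affine in $f$, and by the computation in Lemma~\ref{lm:form}, applied separately to the events $\event^{(i)}_j$ (with weights $\alpha^{(i)}_j/\pi^{(i)}_j$) and to the events $\eventb^{(i)}_j$ (with weights $\beta^{(i)}_j/\omega^{(i)}_j$), each of $q^{(1)}_i(f)$ and $q^{(2)}_i(f)$ is affine in $f$ as well. Hence the relaxed problem is a linear program over the convex feasible set $K:=\{f\in[0,1]^\calX: q^{(1)}_i(f)\geq 0,\ q^{(2)}_i(f)\geq 0,\ \forall i\in[p]\}$, and its optimum lower-bounds that of \ref{eq:progfair}.

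Next I would invoke strong LP duality, introducing $\nu_i\geq 0$ for each $q^{(1)}_i(f)\geq 0$ and $\zeta_i\geq 0$ for each $q^{(2)}_i(f)\geq 0$. For fixed $(\nu,\zeta)$ the Lagrangian $\Pr[f\neq Y]-\sum_i\nu_i q^{(1)}_i(f)-\sum_i\zeta_i q^{(2)}_i(f)$ separates the additive constants $\alpha^{(i)}_0-\ell_i\beta^{(i)}_0$ and $-\alpha^{(i)}_0+u_i\beta^{(i)}_0$ from a term linear in $f(X)$; collecting the coefficient of $f(X)$ and comparing with the definition of $s_{\nu,\zeta}$ shows this coefficient equals $-2\,s_{\nu,\zeta}(X)$, up to the factor-of-two rescaling of the multipliers used in the linear case (cf.\ Eq.~\eqref{eq:opt}). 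The inner minimization over $f\in[0,1]^\calX$ is therefore achieved pointwise by the threshold rule $f(x)=\I[s_{\nu,\zeta}(x)>0]$, whose value is $-2\,(s_{\nu,\zeta}(X))_+$. Since this minimizer is deterministic and the relaxation lower-bounds \ref{eq:progfair}, the classifier $\I[s_{\nu^\star,\zeta^\star}(x)>0]$ at the optimal dual pair is an optimal fair classifier, giving the characterization.

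Finally, for the computation I would substitute this explicit minimizer back into the Lagrangian: the $f$-dependent term collapses to $-2\,\Exp_X[(s_{\nu,\zeta}(X))_+]$ and the leftover constants assemble into $\sum_i\nu_i(\alpha^{(i)}_0-\ell_i\beta^{(i)}_0)+\sum_i\zeta_i(-\alpha^{(i)}_0+u_i\beta^{(i)}_0)$, so that negating the resulting concave maximization over $(\nu,\zeta)\in\R_{\geq 0}^p\times\R_{\geq 0}^p$ and dividing out the factor of two yields exactly the stated convex program. Convexity holds because $s_{\nu,\zeta}$ is affine in $(\nu,\zeta)$, so $(s_{\nu,\zeta}(X))_+$ is convex in $(\nu,\zeta)$ and expectation preserves convexity, while the remaining terms are linear; polynomial-time solvability then follows from the stochastic subgradient method exactly as in the analysis solving \ref{eq:lambda}, the only changes being that we carry two multiplier vectors, use the additional quantities $\xi^{(i)}_j,\omega^{(i)}_j$, and project onto $\R_{\geq 0}^p\times\R_{\geq 0}^p$ after each step. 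The main obstacle is the bookkeeping in the third step---correctly matching the constant terms and the factor of two so that the inner minimizer is precisely $\I[s_{\nu,\zeta}>0]$; one must also track the denominator-positivity assumption $\beta^{(i)}_0+\sum_j\beta^{(i)}_j\Pr[f=1\mid G_i,\eventb^{(i)}_j]\geq 0$, which justifies clearing denominators without flipping inequalities when passing from $\ell_i\leq q_i(f)\leq u_i$ to $q^{(1)}_i,q^{(2)}_i\geq 0$.
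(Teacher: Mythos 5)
Your proposal is correct and follows exactly the route the paper intends: the paper omits the proof of Theorem~\ref{thm:fractional}, stating only that it is ``similar to Theorem~\ref{thm:not_linear},'' and your argument is precisely that adaptation --- reduce to the two linear constraints $q^{(1)}_i(f)\geq 0$, $q^{(2)}_i(f)\geq 0$, dualize with separate nonnegative multipliers, solve the inner problem by the pointwise threshold rule, and substitute back to obtain the stated convex dual. Your observation that the two multipliers cannot be merged into a single signed $\lambda_i$ (explaining both the domain $\R_{\geq 0}^p\times\R_{\geq 0}^p$ and the absence of $(\cdot)_+$ terms in the dual objective), and your tracking of the factor-of-two rescaling and the denominator-positivity assumption, correctly fill in the details the paper leaves implicit.
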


\noindent
We omit the proof which is similar to Theorem~\ref{thm:not_linear}.
By the above theorem, there also exists a natural algorithm for $q\in \calQ_{\mathrm{linf} }$: firstly compute the optimal Lagrangian parameters $\nu^\star$ and $\zeta^\star$, then output an optimal fair classifier 	$
\I[s_{\nu^\star,\zeta^\star}(x)>0]
$.

\section{Empirical Results}
\label{sec:experiment}

\paragraph{Datasets.} 
We consider the following datasets in our experiments
\begin{itemize}
\item \textbf{Adult} income dataset~\cite{Dua:2017}, which records the demographics of 45222 individuals,
along with a binary label indicating whether the income of an individual is greater than 50k USD. 
We take gender to be the sensitive attribute, which is binary in the dataset.
\item \textbf{German} dataset \cite{Dua:2017}, records the attributes corresponding to around 1000 individuals with a label indicating positive or negative credit risk. 
Here again, we take gender to be the sensitive attribute, which is binary in the dataset.
\item \textbf{COMPAS} dataset \cite{compas}, compiled by Propublica, is a list of demographic data of criminal offenders along with a risk score. 
We refer the reader to \cite{larson2016we} for more details on how the data was analysed and compiled.
We take race to be the sensitive attribute, and for simplicity consider only those elements with race attribute either black or white.
\end{itemize}

\paragraph{Metrics.} 
Let $D$ denote the empirical distribution over the testing set. 
Given a group performance function $q$, we denote $\gamma_q$ to be the fairness metric $\rho_{q}$ under the empirical distribution $D$. 
For instance, given a classifier $f$, 
\[
\gamma_{\rm sr}(f) := \min_{i\in [p]}\Pr_{D} \left[f = 1 \mid Z = i\right]/\max_{i\in [p]}\Pr_{D} \left[f = 1 \mid Z = i\right].
\]

\paragraph{Algorithms and Benchmarks.} 

We consider three versions of Algorithm~\ref{alg:meta_mult}:
\begin{enumerate}
	\item Subject to $\tau_{\rm sr}$-fair (Algo~\ref{alg:meta_mult}-SR);
	\item Subject to $\tau_{\rm fdr}$-fair (Algo~\ref{alg:meta_mult}-FDR);
	\item Subject to $\tau_{\rm sr}$-fair and $\tau_{\rm fdr}$-fair (Algo~\ref{alg:meta_mult}-SR+FDR),
\end{enumerate}
and use three algorithms from the literature for comparison: 
\begin{enumerate}
	\item \textbf{COV} developed in~\cite{zafar2017fairness} with the goal of controlling the ratio $\gamma_{\rm sr}$; 
	\item \textbf{SHIFT} developed in \cite{hardt2016equality} designed to constrain the false positive parity and false negative parity; 
	\item \textbf{FPR-COV} and \textbf{FNR-COV} presented in~\cite{zafar2017fair}, designed to control the ratios $\gamma_{\rm fpr}$ and $\gamma_{\rm fnr}$ respectively.
\end{enumerate}
%

%\vspace{-0.3cm}
\paragraph{Experimental Setup.} 

We perform five repetitions, in which we divide the dataset uniformly at random into training (70$\%$) and testing (30$\%$) sets and report the average statistics of the above algorithms.
In  Algorithm~\ref{alg:meta_mult}, we set the error parameter $\eps$ to  $0.01$, and fit the estimated distribution $\widehat{\Im}$ in Line 1 using Gaussian Naive Bayes using SciPy~\cite{scipy}. 
For each dataset we run Algo~\ref{alg:meta_mult}-FDR for $\tau \in \{0.1, 0.2, \ldots, 1.0\}$, and plot the resulting $\gamma_{\rm sr}$ and accuracy. 

\subsection{Empirical Results}

Fig.~\ref{fig:sr} summarizes the tradeoff between mean value of accuracy and observed fairness $\gamma_{\rm sr}$. 
The points represents the mean value of $\gamma_{\rm sr}$ and accuracy, with the error bars representing the standard deviation.
We observe that Algo~\ref{alg:meta_mult}-SR can achieve higher $\gamma_{\rm sr}$ than than other methods. 
However, this gain in fairness comes at a loss; accuracy is decreasing in $\gamma_{\rm sr}$ for Algo~\ref{alg:meta_mult}-SR (albeit always above $75\%$). 
Even for lower values of $\gamma_{\rm sr}$, the accuracy is worse than that of \textbf{COV} and \textbf{SHIFT}; as suggested by Theorem~\ref{thm:quantification} , this is likely due to the fact that we use a very simplistic model for the empirical distribution $\Im$ -- we expect this to improve if we were to tune the fit. 
Similarly, Fig.~\ref{fig:fdr} summarizes the tradeoff between accuracy and $\gamma_{\rm fdr}$.
Here we observe that Algo~\ref{alg:meta_mult}-FDR achieves both high accuracy and high $\gamma_{\rm fdr}$, as does \textbf{FPR-COV}, while the other methods are worse with respect to fairness and/or accuracy.
We think that all the algorithms perform well with respect to output fairness $\gamma_{\rm fdr}$ is likely because the unconstrained optimal classifier for Algorithm~\ref{alg:meta_mult} achieves $\gamma_{\rm fdr}=0.84$ (see Table~\ref{tab:SR_FDR}), i.e., the \textbf{Adult} dataset is already nearly unbiased for gender with respect to FDR.
%
%Empirically, we find that $\gamma_{\rm sr}$ ($\gamma_{\rm fdr}$) is always within $90\%$ ($99\%$) of $\tau_{\rm sr}$ ($\tau_{\rm fdr}$); i.e., the observed fairness is close to the target constraint.
Empirically, we find that the observed fairness is almost always close to the target constraint. 
The output fairness and accuracy of the classifier against the input measure $\tau$ is depicted in Fig.~\ref{fig:sr_tau} and Fig.~\ref{fig:fdr_tau}. 
We plot all points from all the training/test splits in these figures.\footnote{Note that the trade-offs in these figures appear non-monotone because they represent the average results for all five training-test splits of the dataset. Within each partition, they are monotone.}

We also examine the performance of our methods and the baselines with respect to other fairness metrics $\gamma_q$ and report their mean and standard deviation. 
For Algo~\ref{alg:meta_mult}-SR, Algo~\ref{alg:meta_mult}-SR+FDR and COV, we consider only classifiers corresponding to $\gamma_{\rm sr} \geq 0.8$, while for Algo~\ref{alg:meta_mult}-FDR, FPR-COV, FNR-COV and SHIFT, we choose the classifier corresponding to $\gamma_{\rm fdr} \geq 0.8$.
Different methods are better at optimizing different fairness metrics -- the key difference is that Algo~\ref{alg:meta_mult} can optimize different metrics depending on the given parameters, whereas other methods do not have this flexibility; e.g., here we constrain fairness with respect to SR  and FDR (for which the maximal values of $\gamma_{\rm sr}$ and $\gamma_{\rm fdr}$ are attained), but we could instead constrain with respect to any other $q$ if desired.
Interestingly, although Algo~\ref{alg:meta_mult}-SR and Algo~\ref{alg:meta_mult}-FDR do not achieve the highest accuracy overall, both have significantly higher accuracy parity than other methods ($\gamma_{\rm ar}\approx 0.9$).
Furthermore, we can consider multiple fairness constraints simultaneously; Algo~\ref{alg:meta_mult}-SR+FDR can achieve both $\gamma_{\rm sr} > 0.7$ and $\gamma_{\rm fdr} > 0.7$, while remaining methods can not ($\gamma_{\rm sr}<0.45$ or $\gamma_{\rm fdr}<0.55$).
Unfortunately, this does come at a loss of accuracy, likely due to the difficulty of simultaneously achieving accuracy and multiple fairness metrics~\cite{chouldechova2017fair,kleinberg2017inherent}.

Empirical analysis of other datasets (COMPAS \cite{compas} and German dataset \cite{Dua:2017}) are presented in Appendix~\ref{sec:other_experiments}. The primary observations with respect to these datasets are presented below.

The performance of Algo~\ref{alg:meta_mult}-FDR with respect to other algorithms on German dataset is depicted in Figures \ref{fig:german_fdr} and \ref{fig:german_fdr_tau} . 
From Fig~\ref{fig:german_fdr_tau}, we observe that the classifier is able to satisfy the input fairness constraint every time, i.e., for all values of input $\tau_{\rm fdr}$, the observed fairness of the classifier, $\gamma_{\rm fdr}$, is greater than or almost equal to $\tau_{\rm fdr}$.
Furthermore, as shown in Fig~\ref{fig:german_fdr}, the maximum  $\gamma_{\rm fdr}$ value achieved by Algo~\ref{alg:meta_mult}-FDR is around 0.99, while amongst other algorithms, the maximum achieved is around 0.85. 
Similarly for Algo~\ref{alg:meta_mult}-SR, whose results are presented in Figures \ref{fig:german_sr} and \ref{fig:german_sr_tau}, we see that for almost all values of input $\tau_{\rm sr}$, we satisfy the input fairness constraint (except when $\tau_{\rm sr}$ is almost 1, in which case observed $\gamma_{\rm sr}$ is close to 0.98).

Figures \ref{fig:compass_fdr} and \ref{fig:compass_fdr_tau} show how the Algo~\ref{alg:meta_mult}-FDR fares with respect to other algorithms on COMPAS dataset. 
In general the output classifier is not able to achieve very high output fairness. 
Algo~\ref{alg:meta_mult}-FDR achieves a maximum $\gamma_{\rm fdr}$ of around 0.80, while \textbf{SHIFT} is able to $\gamma_{\rm fdr}$ value as high as 0.98.
We believe that this is because the empirical distribution considered for the algorithm (multivariate Gaussian) is not a good fit for the data given, and correspondingly as predicted by Theorem~\ref{thm:quantification}, we incur a loss in the output fairness. 
A similar explanation can be considered for the performance of Algo~\ref{alg:meta_mult}-SR, presented in Figures \ref{fig:compass_sr} and \ref{fig:compass_sr_tau}.

\setlength\tabcolsep{1em}
\begin{table}[t]
	\centering
	\caption{Performance (mean and std.) of different methods with respect to accuracy and fairness metrics $\gamma_q$ in Table~\ref{tab:result}. 
		We also offer the performance of an unconstrained optimal classifier for Algorithm~\ref{alg:meta_mult} for comparison. \vspace{0.3cm}
	} 
	
	\scriptsize{
		\begin{tabular}{|m{1.6cm}<{\centering}|m{0.40cm}<{\centering}|m{0.40cm}<{\centering}|m{0.40cm}<{\centering}|m{0.40cm}<{\centering}|m{0.40cm}<{\centering}|m{0.40cm}<{\centering}|m{0.40cm}<{\centering}|m{0.40cm}<{\centering}|m{0.40cm}<{\centering}|m{0.40cm}<{\centering}|m{0.40cm}<{\centering}|m{0.40cm}<{\centering}|} \hline
			\multicolumn{12}{|c|}{This paper}  \tabularnewline \hline
			& Acc. & $\gamma_{\rm sr}$ & $\gamma_{\rm fpr}$ & $\gamma_{\rm fnr}$ & $\gamma_{\rm tpr}$ & $\gamma_{\rm tnr}$ & $\gamma_{\rm ar}$ & $\gamma_{\rm fdr}$ & $\gamma_{\rm for}$ & $\gamma_{\rm ppr}$ & $\gamma_{\rm npr}$  \tabularnewline \hline
			Unconstrained  & 0.83 (0.00) & 0.33 (0.03) & 0.30 (0.02) & \textbf{0.87} (0.05) & 0.86 (0.06) & 0.94 (0.00) & 0.86 (0.01) & \textbf{0.84} (0.07) & 0.34 (0.03) & \textbf{0.93} (0.03) & 0.87 (0.01) \tabularnewline \hline
			Algo~\ref{alg:meta_mult}-SR & 0.77 (0.01) & \textbf{0.89} (0.05) & 0.51 (0.04) & 0.55 (0.10) & 0.81 (0.03) & 0.82 (0.02) & \textbf{0.90} (0.02) & 0.46 (0.03) & 0.21 (0.04) & 0.39 (0.04) & 0.88 (0.00)\tabularnewline \hline
			Algo~\ref{alg:meta_mult}-FDR  & 0.83 (0.00) & 0.32 (0.04) & 0.27 (0.05) & 0.78 (0.07) & 0.86 (0.06) & 0.88 (0.01) & \textbf{0.89} (0.05) & \textbf{0.85} (0.03) & 0.36 (0.03) & \textbf{0.93} (0.04) & 0.89 (0.00)  \tabularnewline \hline
			Algo~\ref{alg:meta_mult}-SR+FDR & 0.44 (0.13) & \textbf{0.84} (0.04) & \textbf{0.83} (0.09) & 0.21 (0.27) & \textbf{0.96} (0.01) & 0.36 (0.37) & 0.48 (0.26) & 0.70 (0.04) & 0.15 (0.16) & 0.34 (0.06) & \textbf{0.95} (0.03)\tabularnewline  \hline
		\end{tabular}
		
		\vspace{0.2cm}			
		
		\begin{tabular}{|m{1.6cm}<{\centering}|m{0.40cm}<{\centering}|m{0.40cm}<{\centering}|m{0.40cm}<{\centering}|m{0.40cm}<{\centering}|m{0.40cm}<{\centering}|m{0.40cm}<{\centering}|m{0.40cm}<{\centering}|m{0.40cm}<{\centering}|m{0.40cm}<{\centering}|m{0.40cm}<{\centering}|m{0.40cm}<{\centering}|m{0.40cm}<{\centering}|} 		\multicolumn{12}{|c|}{Baselines}  \tabularnewline \hline
			& Acc. & $\gamma_{\rm sr}$ & $\gamma_{\rm fpr}$ & $\gamma_{\rm fnr}$ & $\gamma_{\rm tpr}$ & $\gamma_{\rm tnr}$ & $\gamma_{\rm ar}$ & $\gamma_{\rm fdr}$ & $\gamma_{\rm for}$ & $\gamma_{\rm ppr}$ & $\gamma_{\rm npr}$  \tabularnewline \hline
			\textbf{COV}~\cite{zafar2017fairness}  & 0.79 (0.28) & 0.83 (0.01) & 0.63 (0.06) & 0.27 (0.19) & 0.76 (0.07) & 0.79 (0.10) & 0.81 (0.06) & 0.55 (0.12) & 0.10 (0.05) & 0.44 (0.11) & 0.86 (0.02) \tabularnewline \hline
			\textbf{FPR-COV}~\cite{zafar2017fair} & \textbf{0.85} (0.01) & 0.41 (0.07)&0.39 (0.08) & \textbf{0.87} (0.10) & \textbf{0.91} (0.07) & 0.94 (0.01) &\textbf{0.88} (0.01) & 0.80 (0.08) &0.29 (0.05) & 0.91 (0.04)& 0.87 (0.02) \tabularnewline \hline
			\textbf{FNR-COV}~\cite{zafar2017fair} & \textbf{0.85} (0.01)& 0.22 (0.05) & 0.14 (0.04)& 0.61 (0.09) & 0.67 (0.10) & 0.89 (0.01) & \textbf{0.88} (0.04) & 0.80 (0.05) & \textbf{0.50} (0.05) & 0.92 (0.02) & 0.91 (0.01)\tabularnewline \hline
			\textbf{SHIFT}~\cite{hardt2016equality}  & 0.81 (0.01) & 0.50 (0.11) & 0.40 (0.16) & \textbf{0.90} (0.06) & 0.84 (0.09) & \textbf{0.98} (0.00) & 0.83 (0.01) & \textbf{0.84} (0.06) & 0.31 (0.02) & \textbf{0.96} (0.02) & 0.82 (0.01)  \tabularnewline \hline
		\end{tabular}
	}
	
	\label{tab:SR_FDR}
	\vspace{0.15in}
\end{table}

\begin{figure}
		\centering
%		\begin{minipage}{.47\textwidth}
		\includegraphics[width=1\linewidth]{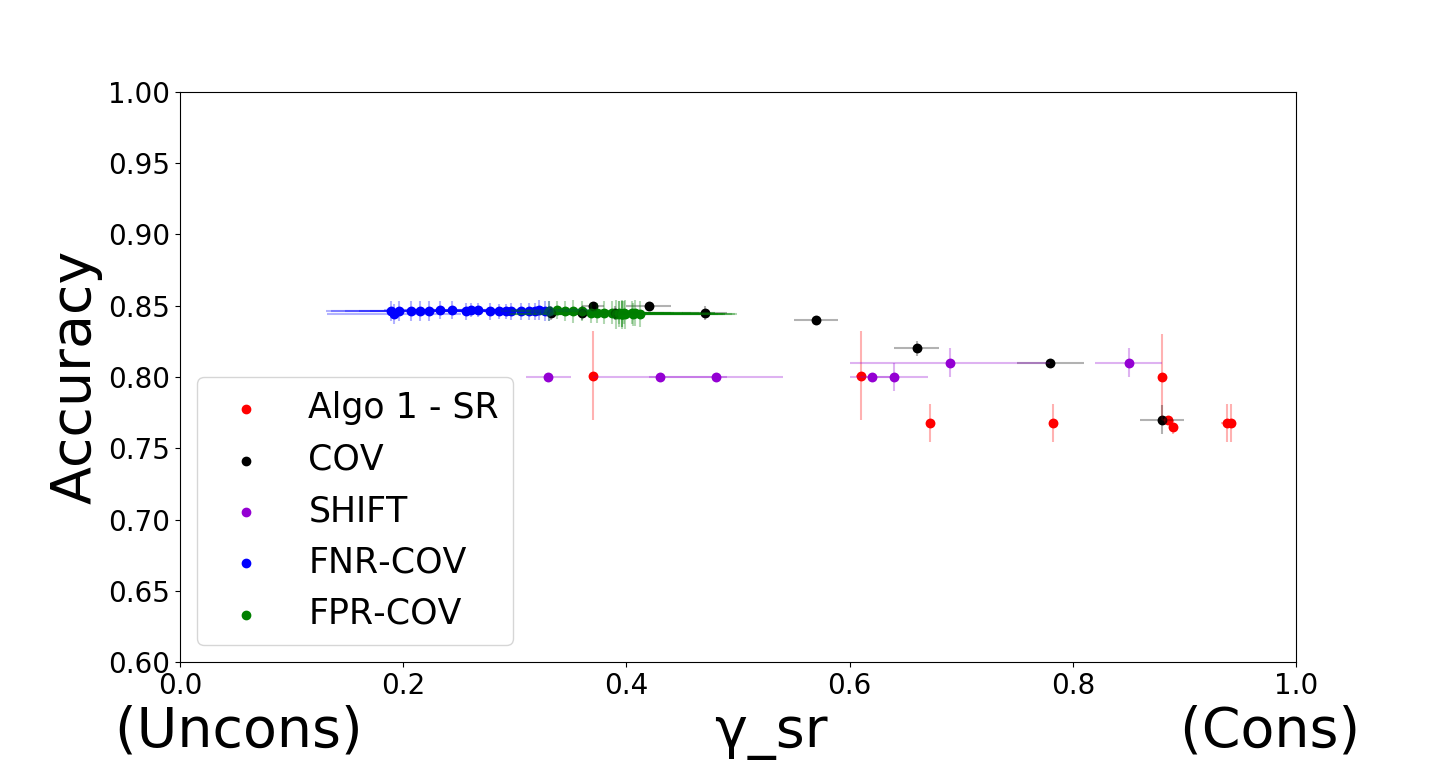}
		\caption{Acc. vs. $\gamma_{\rm sr}$. Algo~\ref{alg:meta_mult}-SR can achieve better fairness with respect to SR than any other method, albeit at a loss to accuracy.}
		\label{fig:sr}
\end{figure}
%	\end{minipage}\hspace{0.05cm}
	\begin{figure}
	%	\centering
			\includegraphics[width=1\linewidth]{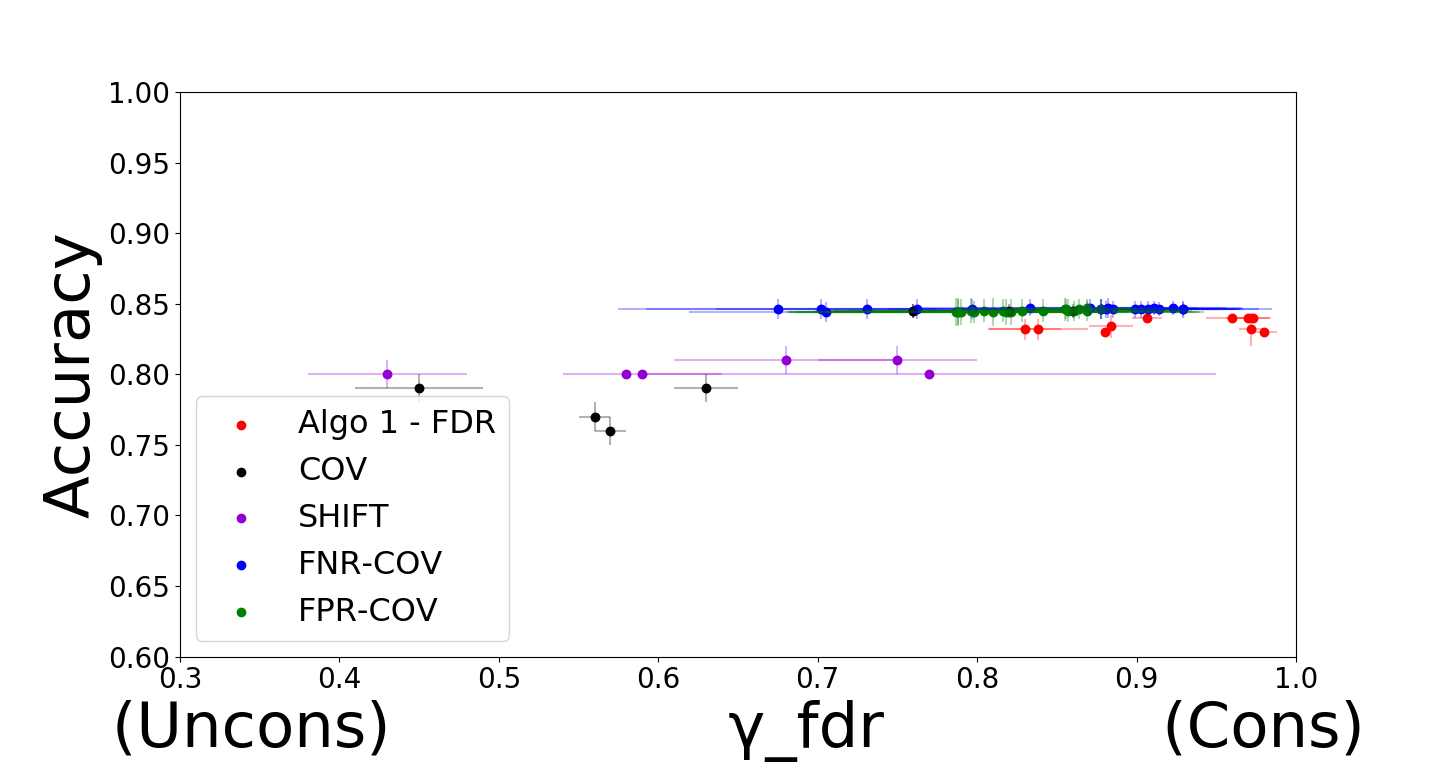}
		\caption{Acc. vs. $\gamma_{\rm fdr}$. Algo~\ref{alg:meta_mult}-FDR Algo~\ref{alg:meta_mult}-FDR achieves better fairness with respect to FDR and is indistinguishable with respect to accuracy.}
		\label{fig:fdr}
%		\end{minipage}
\end{figure}

\begin{figure}
		\centering
%		\begin{minipage}{.47\textwidth}
		\includegraphics[width=1\linewidth]{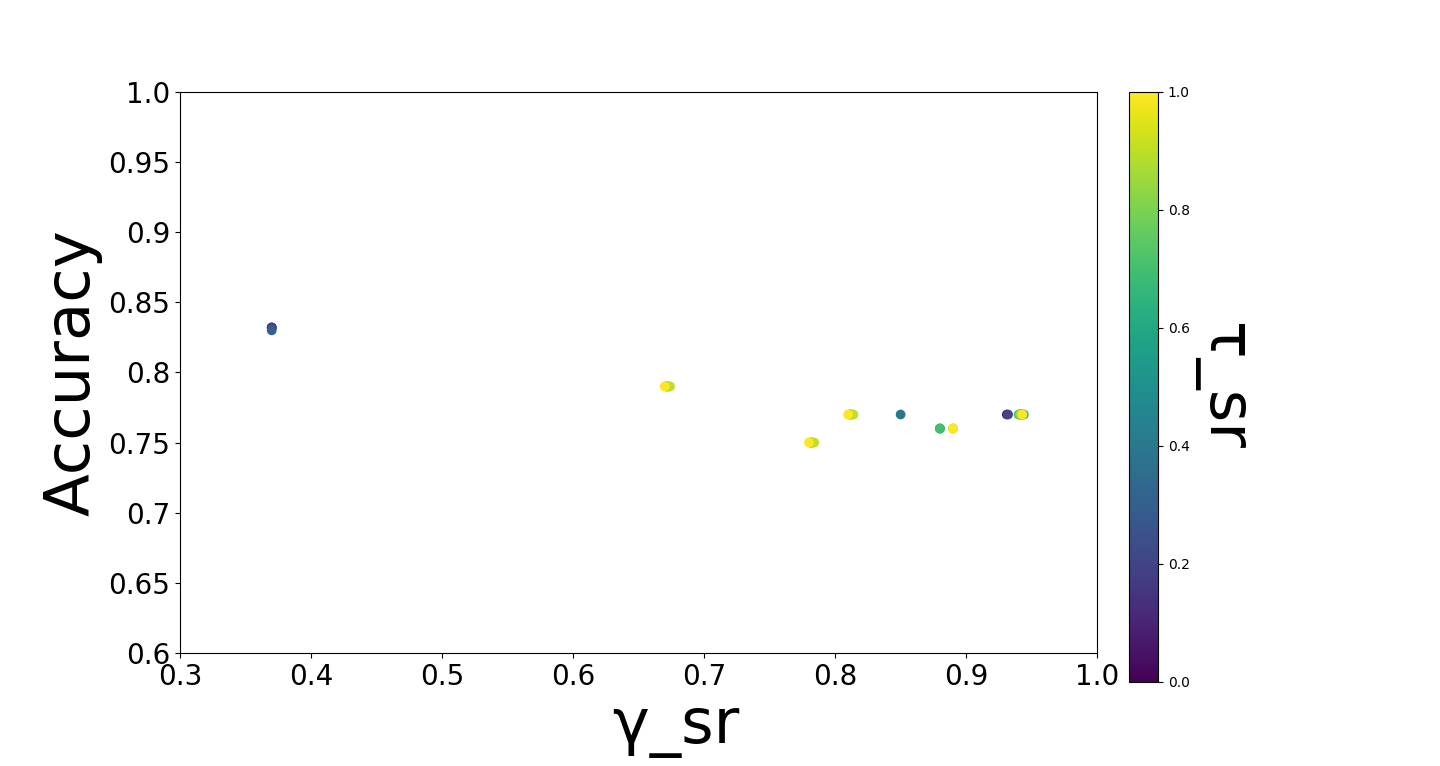}
		\caption{Acc. vs. $\gamma_{\rm sr}$. Algo~\ref{alg:meta_mult}-SR for different values of input $\tau_{\rm sr}$.}
		\label{fig:sr_tau}
		\end{figure}

%	\end{minipage}\hspace{0.05cm}
	\begin{figure}
		\centering
			\includegraphics[width=1\linewidth]{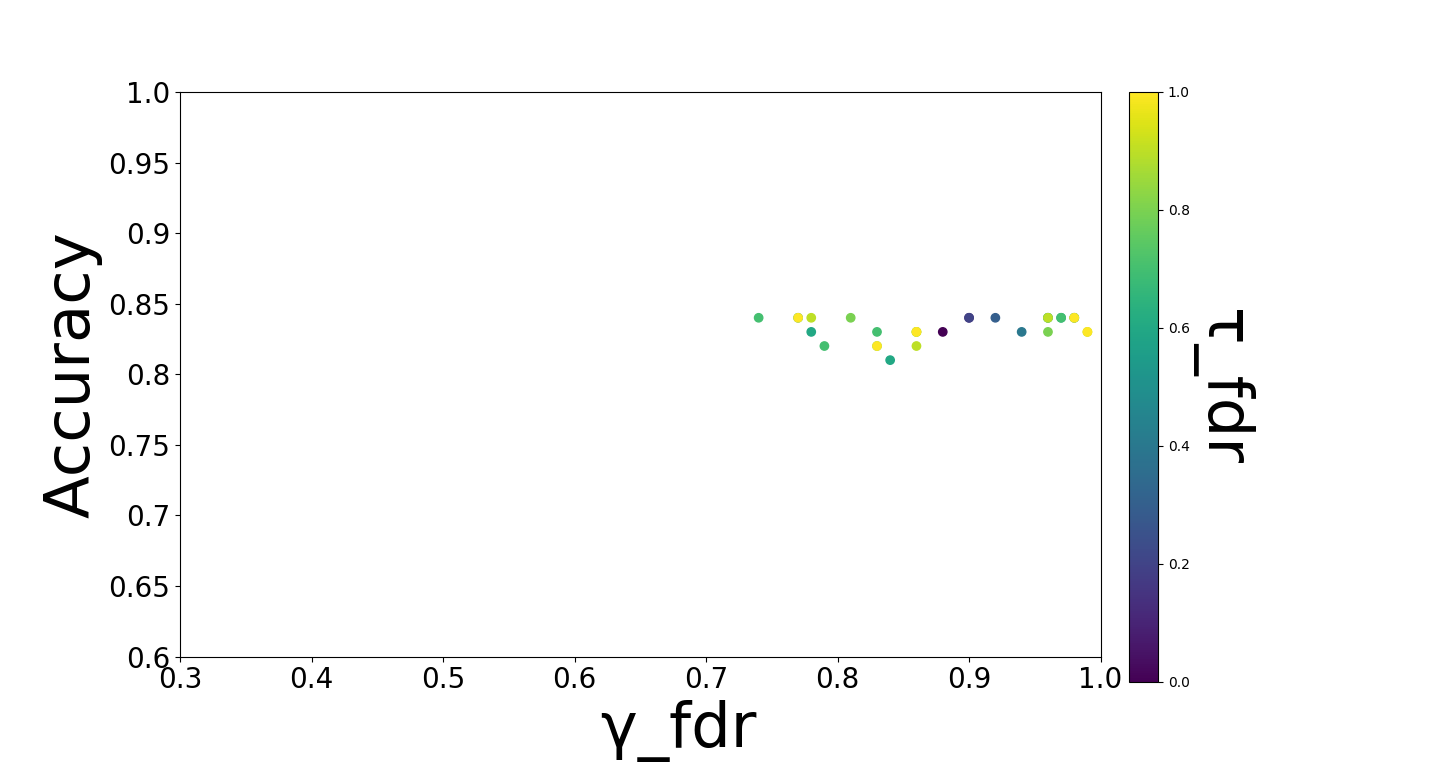}
		\caption{Acc. vs. $\gamma_{\rm fdr}$. Algo~\ref{alg:meta_mult}-FDR for different values of input $\tau_{\rm fdr}$.}
		\label{fig:fdr_tau}
%				\end{minipage}

\end{figure}

\section{Conclusion and Discussion}
\label{sec:conclusion}

We propose a framework for fair classification that can handle many existing fairness definitions in the literature. 
In particular, to the best of our knowledge, our framework is the first that can ensure predictive parity and has provable guarantees, which addresses an open problem proposed in~\cite{zafar2017fairness}.

This paper opens several possible directions for future work. 
Firstly, it would be important to evaluate this algorithm with other datasets in order to better evaluate the tradeoff between fairness and accuracy in a variety of real-world scenarios. 
We also believe it would be valuable to extend our framework to other commonly used loss functions (e.g., $l_2$-loss or AUC) and other classifiers (e.g., margin-based classifiers or score-based classifiers).
In this paper, we consider two fairness metrics $\DI$ and $\MD$.
Other examples like AUC and correlation (see the survey~\cite{zliobaite2017measuring}) could also be worth considering.
%

%\newpage
\bibliographystyle{plain}
\bibliography{references}

\appendix

\section{Proof of Lemma~\ref{lm:subgradient}}
\label{app:missing}

\begin{proof}
	By definition, we first rewrite $\tilde{g}(\lambda)$ as the sum of two vectors.
	Let $h\in \R^p$ denote a random vector where 
	\[
	h_i:=\I\left[s_{\lambda}(\dot{x})\geq 0\right]\cdot \sum_{j=1}^{k} \frac{\alpha^{(i)}_j\cdot  \eta^{(i)}_j(\dot{x})}{\pi^{(i)}_j}.
	\] 
	Also denote $h'\in \R^p$ to be 
	\[
	h'_i:= \alpha^{(i)}_0-u_i + \left(u_i-\ell_i\right)\cdot \I\left[\lambda_i>0\right].
	\]
	Note that
	\begin{eqnarray*}
		\begin{split}
			&\|\tilde{g}(\lambda)\|_2^2 & \\ 
			= & \|h+h'\|_2^2 & (\text{Defn. of $\tilde{g}(\lambda)$})\\
			\leq & \left(\|h'\|_2 + \left\|h\right\|_2\right)^2 & (\text{triangle ineq.}) \\
			\leq & 2\cdot \|h'\|_2^2+ 2\cdot\left\|h\right\|_2^2. &
		\end{split}
	\end{eqnarray*}
	Hence, we only need to bound the two terms $\|h'\|_2^2$ and $\left\|h\right\|_2^2$.
	We first bound $\|h'\|_2^2$. 
	For any $i\in [p]$,
	\begin{align*}
	|h'_i|= &\left|\alpha^{(i)}_0-u_i + \left(u_i-\ell_i\right)\cdot \I\left[\lambda_i>0\right] \right| \\
	\leq & \max\left\{\left|\alpha^{(i)}_0-u_i + u_i-\ell_i\right|,\left|\alpha^{(i)}_0-u_i  \right|\right\} \\
	\leq & u_i + |\alpha^{(i)}_0| \\
	\leq & 1+|\alpha^{(i)}_0|.
	\end{align*}
	Thus, we have $\left\|h'\right\|_2^2\leq \sum_{i\in [p]} \left(1+ |\alpha^{(i)}_0|\right)^2$ which is always bounded.
	On the other hand, we bound $ \Exp\left[\|h\|_2^2\right]$.
	By definition, we have
	\begin{eqnarray*}
		\begin{split}
			\label{eq:variance}
			&  \Exp\left[\|h\|_2^2\right] \\
			\leq &\Exp\left[\sum_{i\in [p]}\left( \sum_{j=1}^{k} \frac{\alpha^{(i)}_j\cdot  \eta^{(i)}_j(\dot{x})}{\pi^{(i)}_j}\right)^2\right] & (\text{Defn. of $h$})\\
			= & \int_{x\in \calX} \Pr\left[X=x\right]\cdot \sum_{i\in [p]}\left( \sum_{j=1}^{k} \frac{\alpha^{(i)}_j\cdot  \eta^{(i)}_j(x)}{\pi^{(i)}_j}\right)^2 & \\
			\leq & k\cdot \int_{x\in \calX} \Pr\left[X=x\right]\cdot \sum_{i\in [p]}\sum_{j=1}^{k} \left( \frac{\alpha^{(i)}_j\cdot  \eta^{(i)}_j(x)}{\pi^{(i)}_j}\right)^2 & \\
			\leq & k\cdot\left\|\alpha\right\|_{\infty}^2\cdot \int_{x\in \calX} \Pr\left[X=x\right]\cdot \sum_{i\in [p]}\sum_{j=1}^{k} \left( \frac{\eta^{(i)}_j(x)}{\pi^{(i)}_j}\right)^2 & \\
			= & k\cdot\left\|\alpha\right\|_{\infty}^2\cdot \int_{x\in \calX} \sum_{i\in [p]}\sum_{j=1}^{k} \frac{\Pr\left[X=x,G_i,\event^{(i)}_j\right]\cdot \Pr\left[G_i,\event^{(i)}_j\mid X=x\right]}{\Pr^2\left[G_i,\event^{(i)}_j\right]} & (\text{Defn. of $\eta^{(i)}_j$ and $\pi^{(i)}_j$})\\
			\leq & k\cdot\left\|\alpha\right\|_{\infty}^2\cdot \int_{x\in \calX} \sum_{i\in [p]}\sum_{j=1}^{k} \frac{\Pr\left[X=x,G_i,\event^{(i)}_j\right]}{\Pr^2\left[G_i,\event^{(i)}_j\right]} & \\
			\leq & k\cdot\left\|\alpha\right\|_{\infty}^2\sum_{i\in [p]}\sum_{j=1}^{k} \frac{1}{\Pr\left[G_i,\event^{(i)}_j\right]} & \\
			\leq & \frac{k^2p\cdot\left\|\alpha\right\|_{\infty}^2}{\min_{i\in [p],j\in [k]}\Pr\left[G_i,\event^{(i)}_j\right]}.
		\end{split}
	\end{eqnarray*}
	This completes the proof.
\end{proof}

\section{Existing Group Performance Functions are Linear-Fractional}
\label{app:fair_notion}

In this section, we discuss existing group performance functions listed in Table~\ref{tab:result}.
We prove that they are all linear-fractional and many of them are even linear.
We have the following two lemmas.

\begin{lemma}
	\label{lm:linear}
	$q$ is a linear group performance function for statistical/conditional statistical/true positive/false positive/true negative/false negative/accuracy rate.
\end{lemma}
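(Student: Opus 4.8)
The plan is to prove Lemma~\ref{lm:linear} by a direct case analysis over the seven rates, exhibiting for each the explicit linear form of Definition~\ref{def:frac_lin}, i.e.\ constants $\alpha^{(i)}_0,\dots,\alpha^{(i)}_k$ and $f$-independent events $\event^{(i)}_1,\dots,\event^{(i)}_k$ with $q_i(f)=\alpha^{(i)}_0+\sum_{j=1}^{k}\alpha^{(i)}_j\cdot\Pr[f=1\mid G_i,\event^{(i)}_j]$. First I would dispose of the four rates whose numerator event is $\calE=(f=1)$ and whose conditioning event $\calE'$ does not involve $f$: statistical ($\calE'$ trivial), conditional statistical ($\calE'=(X\in S)$), false positive ($\calE'=(Y=0)$) and true positive ($\calE'=(Y=1)$). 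In each of these $q_i(f)=\Pr[f=1\mid G_i,\calE']$ is already in the desired shape with $k=1$, $\alpha^{(i)}_0=0$, $\alpha^{(i)}_1=1$ and $\event^{(i)}_1=\calE'$ (taking $\event^{(i)}_1$ to be the almost-sure event when $\calE'=\emptyset$).

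Next I would treat false negative ($\calE=(f=0)$, $\calE'=(Y=1)$) and true negative ($\calE=(f=0)$, $\calE'=(Y=0)$) by the identity $\Pr[f=0\mid G_i,\calE']=1-\Pr[f=1\mid G_i,\calE']$; this yields the linear form with $k=1$, $\alpha^{(i)}_0=1$, $\alpha^{(i)}_1=-1$ and $\event^{(i)}_1=\calE'$. The one case that requires actual work, and which I expect to be the main (though mild) obstacle, is the accuracy rate, because there the numerator event $\calE=(f=Y)$ couples the classifier with the label and so is not of the form $(f=1)$ or $(f=0)$ directly. Here I would decouple by the law of total probability, conditioning on $Y$:
\[
\Pr[f=Y\mid G_i]=\sum_{y\in\{0,1\}}\Pr[Y=y\mid G_i]\cdot\Pr[f=y\mid G_i,Y=y],
\]
and then rewrite the $y=0$ summand using $\Pr[f=0\mid G_i,Y=0]=1-\Pr[f=1\mid G_i,Y=0]$. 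This lands exactly on the expression already displayed in Section~\ref{sec:model}, giving $k=2$, events $\event^{(i)}_1=(Y=1)$ and $\event^{(i)}_2=(Y=0)$, constant term $\alpha^{(i)}_0=\Pr[Y=0\mid G_i]$, and coefficients $\alpha^{(i)}_1=\Pr[Y=1\mid G_i]$, $\alpha^{(i)}_2=-\Pr[Y=0\mid G_i]$.

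Finally, in each case the only property left to verify --- and the one genuinely needed for membership in $\calQ_{\mathrm{lin}}$ rather than merely $\calQ_{\mathrm{linf}}$ --- is that every conditioning event $\event^{(i)}_j$ is independent of the choice of $f$ and every coefficient $\alpha^{(i)}_j$ depends only on $\Im$. This is immediate, since the conditioning events are built from $Y$, $X\in S$ and $G_i$, none of which reference $f$, and the coefficients are either absolute constants ($0,\pm1$) or the $f$-independent conditional probabilities $\Pr[Y=y\mid G_i]$. Since in all seven cases $l=0$ and the denominator constant $\beta^{(i)}_0$ equals $1$, each such $q$ lies in $\calQ_{\mathrm{lin}}$, completing the proof.
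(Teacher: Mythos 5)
Your proof is correct and follows essentially the same route as the paper: the rates with $\calE=(f=1)$ are immediate from Definition~\ref{def:frac_lin}, those with $\calE=(f=0)$ follow from the complement identity $\Pr[f=0\mid G_i,\calE']=1-\Pr[f=1\mid G_i,\calE']$, and accuracy is handled by conditioning on $Y$ and applying that identity to the $Y=0$ term, yielding the same coefficients $\alpha^{(i)}_0=\Pr[Y=0\mid G_i]$, $\alpha^{(i)}_1=\Pr[Y=1\mid G_i]$, $\alpha^{(i)}_2=-\Pr[Y=0\mid G_i]$ as the paper. If anything, your grouping of the ``immediate'' cases (statistical, conditional statistical, false positive, true positive, i.e.\ exactly those with $\calE=(f=1)$ in Table~\ref{tab:result}) tracks the definitions more faithfully than the paper's own wording, which swaps the roles of false positive and false negative in its case split.
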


\begin{proof}
	For statistical/conditional statistical/true positive/false negative rate, $q$ is obviously a linear group performance function by definition.
	For false positive and true negative rates, since
	\[
	\Pr\left[f=0\mid Y=1, G_i\right] = 1-\Pr\left[f=1\mid Y=1, G_i\right] \quad \text{and},
	\]
	\[
	\Pr\left[f=0\mid Y=0, G_i\right] = 1-\Pr\left[f=1\mid Y=0, G_i\right],
	\]
	$q$ is also linear.
	So we only need to prove that the case of accuracy rate.

	Recall that for accuracy rate, $q_i(f) = \Pr\left[f=Y\mid G_i\right]$ for $i\in [p]$ and $f\in \calF$.
	Then, we have
	\begin{align}
	\label{eq:accuracy}
	\begin{split}
	& q_i(f) = \Pr\left[f=Y\mid G_i\right] \\
	=& \Pr\left[Y=1\mid G_i\right]\cdot \Pr\left[f=1\mid Y=1,G_i\right] + \Pr\left[Y=0\mid G_i\right]\cdot \Pr\left[f=0\mid Y=0,G_i\right] \\
	= & \Pr\left[Y=1\mid G_i\right]\cdot \Pr\left[f=1\mid Y=1,G_i\right] +\Pr\left[Y=0\mid G_i\right]\cdot \left(1-\Pr\left[f=1\mid Y=0,G_i\right]\right) \\
	= & \Pr\left[Y=0\mid G_i\right]+\Pr\left[Y=1\mid G_i\right]\cdot \Pr\left[f=1\mid Y=1,G_i\right]\\
	&-\Pr\left[Y=0\mid G_i\right]\cdot \Pr\left[f=1\mid Y=0,G_i\right].
	\end{split}
	\end{align}
	Let $k:=2$, $\alpha^{(i)}_0:= \Pr\left[Y=0\mid G_i\right]$, $\alpha^{(i)}_1:= \Pr\left[Y=1\mid G_i\right]$, $\alpha^{(i)}_2:= -\Pr\left[Y=0\mid G_i\right]$, $G_i,\event^{(i)}_1:= (Y=1)$ and $G_i,\event^{(i)}_2:= (Y=0)$.
	Pluging the above values into Equality~\eqref{eq:accuracy}, we have that
	\[
	q_i(f) = \alpha^{(i)}_0+ \sum_{j=1}^{k} \alpha^{(i)}_j\cdot \Pr\left[f=1\mid G_i,\event^{(i)}_j\right],
	\]
	which completes the proof.
\end{proof}

\noindent
The remaining group performance functions in Table~\ref{tab:result} are not linear, but are still linear-fractional.

\begin{lemma}
	\label{lm:fractional}
	$q$ is a linear-fractional group performance function for false discovery/false omission/positive predictive/negative predictive rate.
\end{lemma}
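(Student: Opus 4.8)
The plan is to verify Definition~\ref{def:frac_lin} directly for each of the four rates, mimicking the false-discovery-rate computation already carried out in the main text. Each such rate has the form $q_i(f)=\Pr[\calE\mid G_i,\calE']$ where $\calE$ is an event on the label $Y$ (hence independent of $f$) and $\calE'$ involves the prediction $f$. First I would expand the conditional probability as
\[
q_i(f)=\frac{\Pr[\calE,\calE'\mid G_i]}{\Pr[\calE'\mid G_i]},
\]
and then push the $Y$-event $\calE$ out of the joint numerator so that all dependence on $f$ is isolated inside terms $\Pr[f=1\mid G_i,\cdot]$, matching the shape of \eqref{eq:fractional}.

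For positive predictive rate ($\calE=(Y=1)$, $\calE'=(f=1)$) and false discovery rate ($\calE=(Y=0)$, $\calE'=(f=1)$) this is immediate. Factoring the numerator as $\Pr[Y=y\mid G_i]\cdot\Pr[f=1\mid G_i,Y=y]$ and writing the denominator as $\Pr[f=1\mid G_i]$ gives
\[
q_i(f)=\frac{\Pr[Y=y\mid G_i]\cdot\Pr[f=1\mid G_i,Y=y]}{\Pr[f=1\mid G_i]},
\]
which is of the form \eqref{eq:fractional} with $k=l=1$, $\alpha^{(i)}_0=\beta^{(i)}_0=0$, $\alpha^{(i)}_1=\Pr[Y=y\mid G_i]$, $\event^{(i)}_1=(Y=y)$, $\beta^{(i)}_1=1$, and $\eventb^{(i)}_1=\emptyset$. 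For false omission rate ($\calE=(Y=1)$, $\calE'=(f=0)$) and negative predictive rate ($\calE=(Y=0)$, $\calE'=(f=0)$), the conditioning event is $f=0$ rather than $f=1$, so the only extra step is to remove it via the complement identities $\Pr[f=0\mid G_i,Y=y]=1-\Pr[f=1\mid G_i,Y=y]$ and $\Pr[f=0\mid G_i]=1-\Pr[f=1\mid G_i]$. This yields
\[
q_i(f)=\frac{\Pr[Y=y\mid G_i]\cdot\bigl(1-\Pr[f=1\mid G_i,Y=y]\bigr)}{1-\Pr[f=1\mid G_i]},
\]
again of the form \eqref{eq:fractional}, now with $\alpha^{(i)}_0=\Pr[Y=y\mid G_i]$, $\alpha^{(i)}_1=-\Pr[Y=y\mid G_i]$, $\event^{(i)}_1=(Y=y)$, $\beta^{(i)}_0=1$, $\beta^{(i)}_1=-1$, and $\eventb^{(i)}_1=\emptyset$.

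Throughout, the events appearing inside the $\Pr[f=1\mid\cdot]$ terms are $(Y=0)$, $(Y=1)$, or the trivial event $\emptyset$, all independent of the choice of $f$, and the coefficients $\alpha^{(i)}_\bullet,\beta^{(i)}_\bullet$ depend on $\Im$ only through the marginals of $Y$ restricted to $G_i$; this is precisely what Definition~\ref{def:frac_lin} demands. The argument is thus routine Bayes-rule bookkeeping, and I expect no real obstacle beyond correctly tracking signs in the two complement-based cases. Finally, I would note that none of these four rates is linear: each has a genuinely $f$-dependent denominator (a nontrivial $\beta^{(i)}_1$ paired with $\eventb^{(i)}_1=\emptyset$), so they lie in $\calQ_{\mathrm{linf} }\setminus\calQ_{\mathrm{lin} }$, which is exactly why they require the linear-fractional machinery rather than the linear case of Lemma~\ref{lm:linear}.
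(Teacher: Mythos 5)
Your proposal is correct and follows essentially the same route as the paper: expand $\Pr[\calE\mid G_i,\calE']$ by Bayes' rule so that all $f$-dependence sits inside terms $\Pr[f=1\mid G_i,\cdot]$, then read off the coefficients and events of \eqref{eq:fractional}. The paper only carries this out explicitly for the false discovery rate and dismisses the other three as ``similar''; you usefully spell out the complement identities needed for the $f=0$-conditioned cases, and your coefficient choices (normalized by $\Pr[G_i]$ rather than the paper's joint probabilities) differ only by a common factor, which is immaterial to the definition.
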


\begin{proof}
	For false discovery rate, recall that $q_i(f) = \Pr\left[Y=0\mid f=1, G_i\right]$ for any $f\in \calF$ and $i\in [p]$.
	Then, we have
	\begin{align}
	\label{eq:false_discovery}
	\begin{split}
	q_i(f) = & \Pr\left[Y=0\mid f=1,G_i\right] \\
	=& \frac{\Pr\left[Y=0, f=1,G_i\right]}{\Pr\left[ f=1,G_i\right]} \\
	= & \frac{\Pr\left[Y=0, G_i\right]\cdot \Pr\left[f=1\mid Y=0, G_i\right]}{\Pr\left[G_i\right]\cdot \Pr\left[ f=1\mid G_i\right]}.
	\end{split}
	\end{align}
	Let $\alpha^{(i)}_1:= \Pr\left[Y=0, G_i\right]$ and $\beta^{(i)}_1:= \Pr\left[G_i\right]$, $\event^{(i)}_1:=(Y=0)$ and $\eventb^{(i)}_1:=\emptyset$.
	Pluging the above values into Equality~\eqref{eq:false_discovery}, we have
	\[
	q_i(f) = \frac{ \alpha^{(i)}_1\cdot \Pr\left[f=1\mid G_i,\event^{(i)}_1\right]}{\beta^{(i)}_1\cdot \Pr\left[f=1\mid G_i,\eventb^{(i)}_1\right]},
	\]
	which is linear-fractional by \eqref{eq:fractional}.
	By a similar argument, we can also prove false omission/positive predictive/negative predictive rate is linear-fractional.
\end{proof}

\section{Algorithms for \ref{eq:progDI} with Multiple Sensitive Attributes and Multiple Group Benefit Functions}
\label{app:multi}

In practice, we can have multiple sensitive attributes, e.g., gender and ethnicity.
We may also want to compute a classifier satisfying multiple fairness metrics, e.g., statistical parity and equalized odds.
Suppose there are $m$ sensitive attributes $Z_1,\ldots,Z_m$ where each $Z_j\in \left\{1,2,\ldots,p_i\right\}$.
In this case, domain $\calD$ changes to $\calD:=\calX \times [p_1]\times \cdots \times [p_i]\times \left\{0,1\right\}$.
Let $\Im$ denote the joint distribution over $\calD$. 
Let $\calF$ denote the collection of all possible classifiers.
%.
% 
For $i\in [m]$ and $j\in [p_i]$, we define $G^{(i)}_j$ to be the set of all $(x,z_1,\ldots,z_m,y)\in \calD$ with $z_i=j$.
Note that all groups $G^{(i)}_j$ form an $m$-partitions of $\calD$.
For each sensitive attribute $Z_i$, we denote a linear-fractional group performance function $q^{(i)}:\calF\rightarrow [0,1]^{p_i}$.

Note that the above setting also encompasses the case of multiple group performance functions for a sensitive attribute, e.g., letting $Z_1$ and $Z_2$ represent the same sensitive attribute but correspond to different group performance functions $q^{(1)}$ and $q^{(2)}$.
Next, we generalize \ref{eq:progDI} by the above setting as follows.

\begin{definition}[\textbf{Multi-$\DI$-Fair}]
	\label{def:multi_DI}
	For any $f\in \calF$ and $i\in [m]$, define
	\begin{equation*}
	\label{eq:multi_DI}
	\rho_i(f) = \min_{j\in [p_i]} q^{(i)}_j(f)/ \max_{j\in [p_i]} q^{(i)}_j(f)  \in [0,1].
	\end{equation*}
	Define a fairness metric $\rho:\calF\rightarrow [0,1]^m$ as follows: for any $f\in \calF$, $\rho(f):=\left(\rho_1(f),\ldots,\rho_m(f)\right)$.
	Given $\tau\in [0,1]^m$, a classifier $f$ is said to satisfy $\tau$-rule if for any $i\in [m]$, $\rho_i(f)\geq \tau_i$. 
	We consider the following fair classification program introduced by $\tau$-rule:
	\begin{equation} \tag{Multi-$\DI$-Fair}
	\label{eq:multi_progDI}
	\begin{split}
	& \min_{f\in \calF} \Pr_\Im\left[f\neq Y\right] \\ 
	& s.t.,~ \rho_i(f)\geq \tau_i, \forall i\in [m].
	\end{split}
	\end{equation}
\end{definition}

\noindent
We also generalize \ref{eq:progfair} as follows.

\begin{definition}[\textbf{Classification with fairness constraints for multiple sensitive attributes and multiple group performance functions}]
	\label{def:multi}
	For each $i\in [m]$, given $\ell^{(i)}_j,u^{(i)}_j \geq 0$ for all $j\in [p_i]$, the fairness constraint for group $G^{(i)}_j$ is defined by
	\[
	\ell^{(i)}_j\leq q^{(i)}_j(f)\leq u^{(i)}_j.
	\]
	The classification problem with fairness constraints is defined by the following program:
	\begin{align} \tag{Multi-Group-Fair}
	\label{eq:multi_progfair}
	\begin{split}
	&\min_{f\in \calF} \Pr_\Im\left[f\neq Y\right] \\ 
	& s.t.,~ \ell^{(i)}_j\leq q^{(i)}_j(f)\leq u^{(i)}_j, ~ \forall i\in [m], j\in [p_i].
	\end{split}
	\end{align}
\end{definition}

\paragraph{Reduction from \ref{eq:multi_progDI} to \ref{eq:multi_progfair}.}
Assume that $m$ is constant.
Similar to Section~\ref{sec:reduction}, we show that if \ref{eq:multi_progfair} is polynomial-time solvable, then \ref{eq:multi_progDI} is also polynomial-time approximately solvable.
The difference is that we need to apply \ref{eq:multi_progfair} roughly $O(\eps^{-m})$ times.
We state the generalized theorems as follows.

\begin{theorem}[\textbf{Reduction from \ref{eq:multi_progDI} to \ref{eq:multi_progfair} }]
	\label{thm:multi_progDI}
	Given $\tau\in [0,1]^m$, let $f^\star_\tau $ denote an optimal fair classifier for \ref{eq:multi_progDI}. 
	Suppose there exists a $\beta$-approximate algorithm $A$ for \ref{eq:multi_progfair}.
	Then for any constant $0<\eps<1/2$, there exists an algorithm that computes a classifier $f\in \calF$ such that 
	\begin{enumerate}
		\item $\Pr_\Im\left[f\neq Y\right]\leq \beta \cdot \Pr_\Im\left[f^\star_\tau\neq Y\right]$;
		\item for any $i\in [m]$, $\min_{j\in [p_i]} q^{(i)}_j(f)\geq \tau_i \cdot \min_{j\in [p]} q^{(i)}_j(f)-\eps$.
	\end{enumerate}
	by applying $A$ at most $\prod_{i\in [m]} \lceil\frac{\tau_i}{\eps} \rceil$ times.
\end{theorem}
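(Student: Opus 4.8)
The plan is to mirror the one-dimensional reduction of Theorem~\ref{thm:progDI}, replacing the single grid over $[0,\tau]$ by an $m$-dimensional product grid, one axis per sensitive attribute. For each attribute $i\in[m]$ set $T_i:=\lceil\tau_i/\eps\rceil$ and, for $k\in[T_i]$, put $a^{(i)}_k:=(k-1)\eps$ and $b^{(i)}_k:=k\eps/\tau_i$. For each tuple $\mathbf{k}=(k_1,\dots,k_m)\in[T_1]\times\cdots\times[T_m]$ I would form one \ref{eq:multi_progfair} instance $P_{\mathbf{k}}$ whose bounds are $\ell^{(i)}_j:=a^{(i)}_{k_i}$ and $u^{(i)}_j:=b^{(i)}_{k_i}$ for all $i\in[m]$ and $j\in[p_i]$, run $A$ on each instance, and output the returned classifier $f$ of least prediction error. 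The number of calls to $A$ is $\prod_{i\in[m]}T_i=\prod_{i\in[m]}\lceil\tau_i/\eps\rceil$, matching the claim; since $m$ is constant this is $O(\eps^{-m})$, hence polynomial.

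For the fairness guarantee (item 2) I rely on the coordinatewise identity $a^{(i)}_k=\tau_i b^{(i)}_k-\eps$. The output $f$ is feasible for some $P_{\mathbf{k}}$, so for every attribute $i$ the box constraints force $\min_j q^{(i)}_j(f)\ge a^{(i)}_{k_i}$ and $\max_j q^{(i)}_j(f)\le b^{(i)}_{k_i}$; chaining through the identity gives $\min_j q^{(i)}_j(f)\ge\tau_i\max_j q^{(i)}_j(f)-\eps$ for each $i$. The crucial structural point is that a single \ref{eq:multi_progfair} instance enforces the boxes for all $m$ attributes at once, so the relaxed rule holds simultaneously for every $i$.

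For the accuracy guarantee (item 1) I must exhibit a tuple $\mathbf{k}$ for which $f^\star_\tau$ is feasible. Working one attribute at a time, choose $k_i\in[T_i]$ to be the cell satisfying $a^{(i)}_{k_i}\le\min_j q^{(i)}_j(f^\star_\tau)<k_i\eps$ when this minimum lies below $T_i\eps$, and $k_i:=T_i$ otherwise. Using $\rho_i(f^\star_\tau)\ge\tau_i$ one gets $\max_j q^{(i)}_j(f^\star_\tau)\le\min_j q^{(i)}_j(f^\star_\tau)/\tau_i<b^{(i)}_{k_i}$ in the first sub-case, while in the boundary sub-case $b^{(i)}_{T_i}=T_i\eps/\tau_i\ge1$ dominates any probability; the lower bound $a^{(i)}_{k_i}\le\min_j q^{(i)}_j(f^\star_\tau)$ holds in both. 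As these inequalities hold for every $i$, $f^\star_\tau$ is feasible for $P_{\mathbf{k}}$ with $\mathbf{k}=(k_1,\dots,k_m)$, so the $\beta$-approximation guarantee yields a returned classifier of prediction error at most $\beta\cdot\Pr[f^\star_\tau\ne Y]$, whence $\Pr[f\ne Y]\le\beta\cdot\Pr[f^\star_\tau\ne Y]$ since $f$ minimizes the error over all returned classifiers.

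The genuinely new ingredient relative to Theorem~\ref{thm:progDI} is the need for one index tuple to certify feasibility of $f^\star_\tau$ across all attributes at once; this goes through precisely because \ref{eq:multi_progfair} couples the attributes only through the shared classifier $f$ and imposes their constraints independently, so the per-attribute cell choices assemble into a single tuple. The principal cost, and the reason the hypothesis fixes $m$ constant, is the $O(\eps^{-m})$ blow-up of the product grid. The one place needing care—also latent in the single-attribute argument—is the boundary sub-case where $\min_j q^{(i)}_j(f^\star_\tau)$ exceeds the grid's top value $T_i\eps$, which the top cell $k_i=T_i$ absorbs thanks to $b^{(i)}_{T_i}\ge1$.
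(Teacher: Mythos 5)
Your proposal is correct and follows exactly the paper's construction: the same product grid $[T_1]\times\cdots\times[T_m]$ with $a^{(i)}_k=(k-1)\eps$, $b^{(i)}_k=k\eps/\tau_i$, one \ref{eq:multi_progfair} instance per tuple, and selection of the minimum-error output; the paper simply defers the verification to ``the same argument as for Theorem~\ref{thm:progDI},'' which you have filled in faithfully. Your explicit treatment of the boundary cell $k_i=T_i$ (where $\min_j q^{(i)}_j(f^\star_\tau)$ may exceed $T_i\eps$, absorbed by $b^{(i)}_{T_i}\geq 1$) is a welcome detail that the paper's one-dimensional argument leaves implicit.
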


\begin{proof}
	For each $i\in [m]$, let $T_i:=\lceil \frac{\tau_i}{\eps} \rceil$. 
	For each $i\in [m]$ and $j\in [T_i]$, denote $a^{(i)}_j := (i-1)\cdot \eps$ and $b^{(i)}_j := i\cdot \eps/\tau_i$.

	For each tuple $\Delta=(\delta_1,\ldots,\delta_m)\in [T_1]\times \cdots \times [T_m]$, we construct a program $P_{\Delta}$ to be \ref{eq:multi_progfair} with $\ell^{(i)}_j=a^{(i)}_{\delta_i}$ and $u^{(i)}_j=b^{(i)}_{\delta_i}$ ($i\in [m]$ and $j\in [p_i]$).
	Then we apply $A$ to compute $f_\Delta\in \calF$ as a solution of $P_{\Delta}$.
	Note that we apply $A$ at most $T_1\times \cdots \times T_m = \prod_{i\in [m]} \lceil\frac{\tau_i}{\eps} \rceil$ times.
	Among all $f_\Delta$, we output $f$ such that $\Pr_\Im\left[f\neq Y\right]$ is minimized.
	By the same argument as for Theorem~\ref{thm:progDI}, we can verify that $f$ satisfies the conditions of the theorem.
	It completes the proof.
\end{proof}

\paragraph{Algorithm for \ref{eq:multi_progDI}.}

Similar to Appendix~\ref{app:algorithm}, we can propose a polynomial-time algorithm that computes an optimal fair classifier for \ref{eq:multi_progfair} by Lagrangian principle.
The only difference is that we need at most $2\cdot\sum_{i\in [m]} p_i$ many Lagrangian parameters $\nu^{(i)}_j$ and $\zeta^{(i)}_j$, where each $\nu^{(i)}_j$ corresponds to the constraint $\ell^{(i)}_j\leq q^{(i)}_j(f)$ and each $\zeta^{(i)}_j$ corresponds to the constraint $ q^{(i)}_j(f)\leq u^{(i)}_j$.
The output classifier is again an instance-dependent threshold function, but with more Lagrangian parameters.

Then combining with Theorems~\ref{thm:multi_progDI}, we have a meta-algorithm that computes an approximately optimal fair classifier for \ref{eq:multi_progDI}.
We summarize the result by the following corollary.

\begin{corollary}[\textbf{Algorithm for \ref{eq:multi_progDI} }]
	\label{cor:multi_algorithm}
	Suppose $m$ is constant.
	For any $\tau\in [0,1]^m$, let $f^\star_\tau$ denote an optimal fair classifier for \ref{eq:multi_progDI}.
	Then for any constant $\eps>0$, there exists a polynomial-time algorithm that computes an approximately optimal fair classifier $f\in \calF$ satisfying that
	\begin{enumerate}
		\item $\Pr_\Im\left[f\neq Y\right]\leq \Pr_\Im\left[ f^\star_\tau\neq Y\right]$; 
		\item For any $i\in [m]$, $\min_{j\in [p_i]} q^{(i)}_j(f)\geq \tau_i \cdot \min_{j\in [p_i]} q^{(i)}_j(f)-\eps$.
	\end{enumerate}
\end{corollary}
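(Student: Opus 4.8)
The plan is to combine two ingredients: an exact polynomial-time algorithm for \ref{eq:multi_progfair}, and the reduction of Theorem~\ref{thm:multi_progDI} invoked with approximation factor $\beta=1$. First I would establish the multi-attribute analogue of Theorem~\ref{thm:not_linear}, namely that \ref{eq:multi_progfair} admits an optimal solution that is an instance-dependent threshold classifier computable in polynomial time. The argument mirrors Appendix~\ref{app:algorithm}: relax $f$ to a randomized classifier $f:\calX\to[0,1]$ (respectively $\calX\times\prod_i[p_i]\to[0,1]$), so that by Lemmas~\ref{lm:costrisk} and~\ref{lm:form} both the objective $\Pr_\Im[f\neq Y]$ and every $q^{(i)}_j(f)$ become linear in $f$; hence \ref{eq:multi_progfair} is a linear program over a convex feasible region. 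Introducing one dual variable $\nu^{(i)}_j\ge 0$ for each lower constraint $\ell^{(i)}_j\le q^{(i)}_j(f)$ and one $\zeta^{(i)}_j\ge 0$ for each upper constraint $q^{(i)}_j(f)\le u^{(i)}_j$ — at most $2\sum_{i\in[m]}p_i$ variables — strong LP duality shows the inner minimizer is a threshold classifier $\I[s(x)>0]$ for an aggregated score $s$ assembled from all the multipliers, exactly as in the derivation of~\eqref{eq:opt}. The optimal multipliers solve a convex program of the same form as~\ref{eq:lambda}, solvable by the stochastic subgradient method of~\cite{boyd2008stochastic}; the variance bound of Lemma~\ref{lm:subgradient} carries over since the number of summands grows only polynomially in the input size. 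This yields a polynomial-time exact (that is, $\beta=1$) solver for \ref{eq:multi_progfair}.

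With this solver in hand, I would invoke Theorem~\ref{thm:multi_progDI} with $\beta=1$. That theorem builds $f$ by running the \ref{eq:multi_progfair} solver on the grid of programs $P_\Delta$ indexed by $\Delta\in[T_1]\times\cdots\times[T_m]$ with $T_i=\lceil\tau_i/\eps\rceil$, and returning the feasible classifier of least prediction error. The two conclusions of the corollary are then precisely the two conclusions of Theorem~\ref{thm:multi_progDI} specialized to $\beta=1$: conclusion~(1) collapses to $\Pr_\Im[f\neq Y]\le\Pr_\Im[f^\star_\tau\neq Y]$, and conclusion~(2) is the relaxed $\tau_i$-rule for every $i\in[m]$. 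For the running time, note that since $m$ is a fixed constant and $\eps>0$ is constant, the number of grid programs is $\prod_{i\in[m]}T_i\le\lceil 1/\eps\rceil^{m}$, itself a constant; each program is solved in polynomial time by the first step, so the overall procedure is polynomial time.

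The main obstacle I expect lies not in the reduction, which is a direct specialization of Theorem~\ref{thm:multi_progDI}, but in verifying the first step rigorously. One must confirm that strong duality applies — which requires feasibility of each $P_\Delta$, i.e. nonemptiness of its convex feasible region — and that the dual convex program remains efficiently solvable once the number of simultaneous constraints grows to $2\sum_{i}p_i$. The delicate point is that the classifier produced by the Lagrangian argument is \emph{deterministic} and hence optimal not merely for the randomized relaxation but for the original integral \ref{eq:multi_progfair}; this ``for free'' integrality, together with the accompanying variance bound for the stochastic subgradient (the analogue of Lemma~\ref{lm:subgradient} with the enlarged multiplier vector), is where the substantive checking is concentrated.
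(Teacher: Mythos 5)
Your proposal follows essentially the same route as the paper: the paper also obtains Corollary~\ref{cor:multi_algorithm} by extending the Lagrangian/duality argument of Appendix~\ref{app:algorithm} to a solver for \ref{eq:multi_progfair} with at most $2\sum_{i\in[m]}p_i$ multipliers (yielding $\beta=1$), and then invoking the grid reduction of Theorem~\ref{thm:multi_progDI}, whose $\prod_{i\in[m]}\lceil\tau_i/\eps\rceil$ subproblems are constant in number since $m$ and $\eps$ are constants. The caveats you flag (primal feasibility for strong duality, integrality of the threshold classifier, and the enlarged variance bound) are exactly the points the paper handles, or footnotes as assumptions, in Theorem~\ref{thm:attribute} and Lemma~\ref{lm:subgradient}.
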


\begin{remark}
	Similar to Section~\ref{sec:relaxed_algorithm}, assume that we only have $N$ samples $\left\{(x_i,z_i,y_i)\right\}_{i\in [N]}$ drawn from $\Im$, instead of knowing $\Im$ directly. 
	We can propose an algorithm almost the same to Algorithm~\ref{alg:meta_mult}: first estimate $\Im$ by $\widehat{\Im}$ and then solve a family of \ref{eq:multi_progfair} based on $\widehat{\Im}$ by Theorems~\ref{thm:multi_progDI}.

	The quantification of our algorithm is exactly the same as Theorem~\ref{thm:quantification}, except that there are $m$ group performance functions $q^{(i)}$. 
\end{remark}

\begin{remark}
	Another fairness metric $\MD$ can also be generalized to multiple cases.
	For each $f\in \calF$ and $i\in [m]$, define $\delta_i(f):=\min_{j\in [p_i]}q^{(i)}_j(f)-\max_{j\in [p_i]}q^{(i)}_j(f)$.
	Given $\tau\in [-1,0]^m$, the goal is to compute a classifier $f$ that minimizes the prediction error and $\delta_i(f)\geq \tau_i$ for all $i\in [m]$.
	We call this problem \textsf{Multi-$\MD$-Fair}.
	For this fairness metric, we also have a corollary similar to Corollary~\ref{cor:multi_algorithm}.

	\begin{corollary}[\textbf{Algorithm for \textsf{Multi-$\MD$-Fair}}]
		\label{cor:multi_MD_algorithm}
		Suppose $m$ is constant.
		For any $\tau\in [0,1]^m$, let $f^\star_\tau$ denote an optimal fair classifier for \textsf{Multi-$\MD$-Fair}.
		Then for any constant $\eps>0$, there exists a polynomial-time algorithm that computes an approximately optimal fair classifier $f\in \calF$ satisfying that
		\begin{enumerate}
			\item $\Pr_\Im\left[f(X)\neq Y\right]\leq \Pr_\Im\left[ f^\star_\tau(X)\neq Y\right]$; 
			\item For any $i\in [m]$, $\delta_j(f)\geq \tau_i -\eps$.
		\end{enumerate}
	\end{corollary}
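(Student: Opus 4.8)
The plan is to mirror the development carried out for the $\DI$ metric (Theorem~\ref{thm:multi_progDI} together with Corollary~\ref{cor:multi_algorithm}), replacing the multiplicative discretization by the additive one sketched in Remark~\ref{remark:delta}. Two ingredients are needed. First, I would invoke the exact solver for \ref{eq:multi_progfair}: by the Lagrangian argument in the ``Algorithm for \ref{eq:multi_progDI}'' paragraph, using at most $2\sum_{i\in[m]}p_i$ dual variables one obtains in polynomial time an optimal (i.e.\ $\beta=1$) fair classifier for any box-constrained instance $\ell^{(i)}_j\le q^{(i)}_j(f)\le u^{(i)}_j$ when $\Im$ is given. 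Second, I would reduce \textsf{Multi-$\MD$-Fair} to a polynomial-size family of such box-constrained instances.

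For the reduction, for each attribute $i\in[m]$ I would lay down a grid of $L_i:=\lceil (1+\tau_i)/\eps\rceil$ points, and for $\delta\in[L_i]$ set the interval $[a^{(i)}_\delta, b^{(i)}_\delta] := [(\delta-1)\eps,\ \delta\eps-\tau_i]$ (since $\tau_i\le 0$, this has width $\eps-\tau_i$). Ranging over all tuples $\Delta=(\delta_1,\dots,\delta_m)\in[L_1]\times\cdots\times[L_m]$, I would build the instance $P_\Delta$ of \ref{eq:multi_progfair} with $\ell^{(i)}_j=a^{(i)}_{\delta_i}$ and $u^{(i)}_j=b^{(i)}_{\delta_i}$ for all $j\in[p_i]$, solve each exactly, and output the feasible classifier of minimum prediction error. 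Since $m$ is constant, the number of instances is $\prod_{i\in[m]}\lceil (1+\tau_i)/\eps\rceil=O(\eps^{-m})$, so the whole procedure runs in polynomial time.

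Correctness then splits exactly as in Theorem~\ref{thm:multi_progDI}. For the fairness guarantee (item~2), any classifier feasible for $P_\Delta$ has all $q^{(i)}_j(f)$ confined to an interval of width $b^{(i)}_{\delta_i}-a^{(i)}_{\delta_i}=\eps-\tau_i$, whence $\delta_i(f)=\min_j q^{(i)}_j(f)-\max_j q^{(i)}_j(f)\ge \tau_i-\eps$; this holds in particular for the returned $f$, which is feasible for the instance it was selected from. For the accuracy guarantee (item~1), the key claim is that $f^\star_\tau$ is feasible for at least one instance $P_{\Delta^\star}$: since $\delta_i(f^\star_\tau)\ge\tau_i$, all values $q^{(i)}_j(f^\star_\tau)$ lie in an interval of width at most $-\tau_i$, and because $\max_j q^{(i)}_j(f^\star_\tau)\le 1$ we get $\min_j q^{(i)}_j(f^\star_\tau)\le 1+\tau_i$; choosing $\delta^\star_i$ with $(\delta^\star_i-1)\eps\le \min_j q^{(i)}_j(f^\star_\tau)<\delta^\star_i\eps$ places the whole set of values inside $[a^{(i)}_{\delta^\star_i},b^{(i)}_{\delta^\star_i}]$. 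Because the solver is exact, the returned $f_{\Delta^\star}$ has prediction error at most that of $f^\star_\tau$, and the final output $f$ does no worse, giving $\Pr_\Im[f\ne Y]\le\Pr_\Im[f^\star_\tau\ne Y]$.

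The routine parts are the polynomial bound on the number of instances and the width computation. The one step needing care is the feasibility claim for $f^\star_\tau$: I must check that a grid of length $1+\tau_i$ (rather than $1$) genuinely covers every possible location of $\min_j q^{(i)}_j(f^\star_\tau)$, which relies on combining $\delta_i(f^\star_\tau)\ge\tau_i$ with the trivial bound $\max_j q^{(i)}_j(f^\star_\tau)\le 1$; were this coverage off by even one interval, the reduction could miss $f^\star_\tau$ and item~1 might fail. I expect this to be the main obstacle, although it is a direct additive analogue of the multiplicative feasibility argument already carried out in the proof of Theorem~\ref{thm:progDI}.
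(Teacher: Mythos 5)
Your overall route is exactly the paper's: the paper states this corollary without a separate proof, as the combination of the product-grid reduction of Theorem~\ref{thm:multi_progDI} with the additive intervals $[(\delta-1)\eps,\ \delta\eps-\tau_i]$ from Theorem~\ref{thm:progdelta}/Remark~\ref{remark:delta}, fed to the exact Lagrangian solver for \ref{eq:multi_progfair}. Your grid, your width computation, and your fairness bound $\delta_i(f)\ge \tau_i-\eps$ all match. (Incidentally, the corollary's ``$\tau\in[0,1]^m$'' is a typo for $[-1,0]^m$; you correctly work with $\tau_i\le 0$.)

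The one step you flag as delicate is indeed where your written justification fails. The inequality $\min_j q^{(i)}_j(f^\star_\tau)\le 1+\tau_i$ does \emph{not} follow from $\delta_i(f^\star_\tau)\ge\tau_i$ and $\max_j q^{(i)}_j\le 1$: the constraint $\min_j q - \max_j q\ge\tau_i$ bounds $\min_j q$ from \emph{below} in terms of $\max_j q$, not from above. Concretely, if every $q^{(i)}_j(f^\star_\tau)=1$ then $\delta_i(f^\star_\tau)=0\ge\tau_i$, yet $\min_j q^{(i)}_j=1>1+\tau_i$ whenever $\tau_i<0$. Since the half-open cells $[(\delta-1)\eps,\delta\eps)$, $\delta\in[L_i]$, cover only $[0,L_i\eps)$ with $L_i\eps\ge 1+\tau_i$ possibly strictly less than $1$, your prescribed $\delta^\star_i$ need not exist, and as written the reduction could miss $f^\star_\tau$. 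The patch is short: if $\min_j q^{(i)}_j(f^\star_\tau)\ge (L_i-1)\eps$, take $\delta^\star_i=L_i$; then $a^{(i)}_{L_i}=(L_i-1)\eps\le\min_j q^{(i)}_j$ and $b^{(i)}_{L_i}=L_i\eps-\tau_i\ge(1+\tau_i)-\tau_i=1\ge\max_j q^{(i)}_j$, so $f^\star_\tau$ is feasible for $P_{\Delta^\star}$ after all. (The paper's own proof of Theorem~\ref{thm:progdelta} silently assumes the covering index exists and needs the same fix, so you are in good company, but the argument you give for why coverage holds is the wrong one.) With that correction the rest of your proof goes through.
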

	
\end{remark}

\section{Another Algorithm for \ref{eq:progDI} with Linear Group Performance Functions}
\label{app:another_algorithm}

For the case that all $q^{(i)}\in \calQ_{\mathrm{lin} }$, \ref{eq:progDI} is itself a linear program of $f$.
Hence, we can also apply Lagrangian principle for \ref{eq:progDI}, instead of introducing fairness constraints.
This setting generalizes the fair classification problem with statistical parity or true positive parity considered in~\cite{menon2018the}, and moreover, encompasses all fairness metrics with linear group performance functions.
In this section, we propose another algorithm for this setting.
Our main theorem is as follows.

\begin{theorem}	[\textbf{Algorithm for \ref{eq:progDI} with linear group performance functions}]
	\label{thm:algorithm_linear}
	Assume that $m$ is a constant and $q^{(i)}\in \calQ_{\mathrm{lin} }$ for any $i\in [m]$. 
	There exists a polynomial-time algorithm for \ref{eq:progDI}.
\end{theorem}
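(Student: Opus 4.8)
The plan is to exploit the fact that, when every $q^{(i)}\in\calQ_{\mathrm{lin}}$, the ratio constraint defining \ref{eq:progDI} linearizes exactly, so that \ref{eq:progDI} becomes a single linear program in $f$ and no enumeration over a threshold grid (as in Theorem~\ref{thm:progDI}) is needed. The first step is the observation that the constraint $\rho_{q^{(i)}}(f)=\min_{j}q^{(i)}_j(f)/\max_{j}q^{(i)}_j(f)\geq\tau_i$ is equivalent, since each $q^{(i)}_j(f)\in[0,1]$ is nonnegative, to the family of pairwise inequalities $q^{(i)}_j(f)\geq\tau_i\cdot q^{(i)}_{j'}(f)$ ranging over all $i\in[m]$ and $j,j'\in[p]$. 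Because $q^{(i)}\in\calQ_{\mathrm{lin}}$, Lemma~\ref{lm:form} expresses each $q^{(i)}_j(f)$ as an affine functional of the (randomized) classifier $f\in[0,1]^\calX$, so each such inequality is an affine constraint in $f$; there are only $O(mp^2)$ of them, which is polynomially many since $m$ is constant. Together with Lemma~\ref{lm:costrisk}, which writes the objective $\Pr[f\neq Y]$ as an affine functional of $f$, this realizes \ref{eq:progDI} as a linear program over randomized classifiers.

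Next I would solve this linear program by the Lagrangian route used in the proof of Theorem~\ref{thm:attribute}. Introducing a nonnegative multiplier $\mu^{(i)}_{j,j'}$ for each pairwise constraint $q^{(i)}_j(f)-\tau_i q^{(i)}_{j'}(f)\geq 0$ and invoking strong duality for linear programs, the inner minimization $\min_{f\in[0,1]^\calX}$ of the Lagrangian collects all the $f$-dependent terms into a single expectation $\Exp_X[(\cdots)\cdot f(X)]$. As in Equation~\eqref{eq:opt}, the bracketed score has the form $\eta(X)-0.5$ plus a correction term assembled from the multipliers $\mu^{(i)}_{j,j'}$ and the conditional densities appearing in the expansion of Lemma~\ref{lm:form}. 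Hence the inner optimum is attained by the deterministic threshold classifier $\I[s_\mu(X)>0]$ for the corresponding aggregated score $s_\mu$, showing that the optimal fair classifier has the same instance-dependent threshold form as in Theorem~\ref{thm:not_linear}, and that the exact $\tau_i$-rule is met (with no additive $\eps$ slack).

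To finish, substituting $f=\I[s_\mu(X)>0]$ back into the Lagrangian turns the dual into a convex minimization over $\mu$ in the nonnegative orthant, exactly analogous to \ref{eq:lambda} in Lemma~\ref{lm:not_sensitive_find_lambda}; the only change is that there are now $O(mp^2)$ multipliers rather than $p$. I would then compute the optimal $\mu^\star$ in polynomial time by the stochastic subgradient method, constructing an unbiased subgradient estimator from a single sample $(\dot{x},\dot{z},\dot{y})\sim\Im$ and bounding its second moment exactly as in Lemma~\ref{lm:subgradient}; that bound scales with $1/\min_{i,j}\Pr[G_i,\event^{(i)}_j]$ and is essentially unaffected by the larger number of multipliers. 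Outputting $\I[s_{\mu^\star}(X)>0]$ then yields an optimal classifier for \ref{eq:progDI}, up to the accuracy of the convex solver in the same sense as Theorem~\ref{thm:not_linear}.

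The main obstacle is not the linearization itself but justifying strong duality over the possibly infinite-dimensional space of randomized classifiers $f\in[0,1]^\calX$ and confirming that the optimal randomized solution is realized by a deterministic threshold rule; this is handled exactly as in the proof of Theorem~\ref{thm:attribute}, which relies on feasibility of the primal, i.e.\ nonemptiness of the constraint set. A secondary point to verify is the validity of the pairwise equivalence above, which presumes $\max_j q^{(i)}_j(f)>0$; this holds whenever the $\tau_i$-rule is meaningfully imposed, and the degenerate cases in which some group performance is identically zero can be excluded or treated separately.
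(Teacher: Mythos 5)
Your proposal is correct and follows essentially the same route as the paper: it linearizes the $\tau$-rule into the $p(p-1)$ pairwise constraints $q^{(i)}_j(f)\geq\tau_i\, q^{(i)}_{j'}(f)$, applies Lagrangian duality over randomized classifiers to obtain an instance-dependent threshold classifier, and computes the multipliers via a convex stochastic-subgradient program, exactly as in Theorem~\ref{thm:char_linear}. Your explicit attention to primal feasibility and to the degenerate case $\max_{j} q^{(i)}_j(f)=0$ addresses points the paper leaves implicit, but does not change the argument.
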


\noindent
For simplicity, we again consider the case that $\calF=\left\{0,1\right\}^{\calX}$ and $m=1$.
By Definition~\ref{def:frac_lin}, assume that $q_i(f)=\alpha^{(i)}_0+ \sum_{j=1}^{k} \alpha^{(i)}_j\cdot \Pr\left[f=1\mid G_i,\event^{(i)}_j\right]$ for $f\in \calF$ and $i\in [p]$.
Similar to Appendix~\ref{sec:algorithm_provide}, this setting can be extended to $\calF=\left\{0,1\right\}^{\calX\times [p]}$.
On the other hand, by Appendix~\ref{app:multi}, this setting can also be generalized to multiple sensitive attributes and multiple group performance functions ($m>1$).

The main idea is still to apply Lagrangian principle.
Note that $\DI(f)\geq \tau$ is equivalent to the following $p(p-1)$ constraints: for any $i\neq j\in [p]$,
\begin{align}
\label{eq:DI_constraint}
\begin{split}
& q_i(f)=\alpha^{(i)}_0+ \sum_{l=1}^{k} \alpha^{(i)}_l\cdot \Pr\left[f=1\mid G_i,\event^{(i)}_l\right] \\
\geq & \tau\cdot q_j(f) = \tau \cdot \alpha^{(i)}_0+ \sum_{l=1}^{k} \alpha^{(i)}_l\cdot \Pr\left[f=1\mid G_i,\event^{(i)}_l\right].
\end{split}
\end{align}
By rearranging, the above inequality is expressible as a linear constraint $a^\top f+b\leq 0$.
Then \ref{eq:progDI} is a linear program of $f$ and an optimal fair classifier should be still an instance-dependent threshold function.
We introduce a Lagrangian parameter $\lambda_{ij}$ for each constraint~\eqref{eq:DI_constraint}. 
Recall that for any $x\in \calX$, $i\in [p]$ and $j\in [k]$, we denote $\eta(x) := \Pr\left[Y=1\mid X=x\right]$ and $\eta^{(i)}_j(x) := \Pr\left[G_i,\event^{(i)}_j\mid X=x\right]$.
Similarly, we define a threshold function $s_\lambda:\calX\rightarrow \R$ for any $\lambda\in \R^{(p-1)p}$ as follows:
\[
s_\lambda(x):= \eta(x)-0.5+\sum_{i\neq j\in [p]} \lambda_{ij}\cdot \left(\sum_{l\in [k]}\frac{\alpha^{(i)}_l}{\pi^{(i)}_l}\cdot \eta^{(i)}_l(x) -\tau\cdot\sum_{l\in [k]}\frac{\alpha^{(j)}_l}{\pi^{(j)}_l}\cdot \eta^{(j)}_l(x) \right).
\]

\begin{theorem}[\textbf{Characterization and computation for \ref{eq:progDI} with linear group performance functions}]
	\label{thm:char_linear}
	Suppose $m$ is a constant and each $q^{(i)}\in \calQ_{\mathrm{lin} }$.
	Given any parameters $\tau_1,\ldots,\tau_m\in [0,1]$, there exists $\lambda^\star\in \R_{\geq 0}^{(p-1)p}$ such that
	$
	\I[s_{\lambda^\star}(x)>0]
	$
	is an optimal fair classifier for \ref{eq:progDI}. 
	Moreover, we can compute the optimal Lagrangian parameters $\lambda^\star$ in polynomial time as a solution of the following convex program:
	\begin{align*}
	\lambda^\star = \arg\min_{\lambda \in \R_{\geq 0}^{(p-1)p}} \Exp_X\left[ (s_{\lambda}(X))_+ \right] + \sum_{i\neq j\in [p]} \lambda_{ij} \cdot \left(\alpha^{(i)}_0-\tau\cdot \alpha^{(j)}_0\right).
	\end{align*}
\end{theorem}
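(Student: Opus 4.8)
The plan is to mirror the proof of Theorem~\ref{thm:not_linear} (its two ingredients Theorem~\ref{thm:attribute} and Lemma~\ref{lm:not_sensitive_find_lambda}), replacing the two-sided box constraints $\ell_i\le q_i(f)\le u_i$ by the $p(p-1)$ one-sided constraints~\eqref{eq:DI_constraint}. First I would record that, since each $q_i(f)\in[0,1]$, the requirement $\DI(f)\ge\tau$ is equivalent to $q_i(f)\ge\tau\cdot q_j(f)$ for every ordered pair $i\ne j\in[p]$ (take the indices attaining the min and the max on either side). Relaxing to randomized classifiers $f\in[0,1]^\calX$ and invoking Lemma~\ref{lm:costrisk} and Lemma~\ref{lm:form}, both the objective $\Pr[f\neq Y]$ and each $q_i(f)$ become affine functionals of $f(X)$, so \ref{eq:progDI} is a linear program over $f\in[0,1]^\calX$.

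For the characterization, I would attach a multiplier $\lambda_{ij}\ge 0$ to each constraint $q_i(f)-\tau\cdot q_j(f)\ge 0$ and, assuming primal feasibility, apply strong LP duality to exchange the outer maximization over the multipliers with the inner minimization over $f$. Substituting the affine forms from Lemma~\ref{lm:form} and collecting the coefficient of $f(x)$, the inner objective equals $\pi-2\,\Exp_X[s_\lambda(X)\,f(X)]$ up to the $f$-independent constant $-2\sum_{i\ne j}\lambda_{ij}(\alpha^{(i)}_0-\tau\alpha^{(j)}_0)$, exactly as in~\eqref{eq:opt}; here the normalization is the same factor-of-two rescaling of the raw multipliers used in Theorem~\ref{thm:attribute}. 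Because the coefficient of $f(x)$ is $-2s_\lambda(x)$, the pointwise inner minimizer is the threshold classifier $\I[s_\lambda(x)>0]$, and since the dual-optimal randomized solution is already deterministic it is optimal among deterministic classifiers as well, giving the first claim.

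For the computation, I would plug $f(x)=\I[s_\lambda(x)>0]$ back into the Lagrangian, obtaining the dual function $\pi-2\,\Exp_X[(s_\lambda(X))_+]-2\sum_{i\ne j}\lambda_{ij}(\alpha^{(i)}_0-\tau\alpha^{(j)}_0)$; maximizing this over $\lambda\in\R^{(p-1)p}_{\ge 0}$ is, after negating, dividing by two, and discarding the constant $\pi$, precisely the stated convex minimization. Note that because each constraint is one-sided there is a single nonnegative multiplier per pair and hence no $(\lambda_{ij})_+$ correction term, in contrast to the box-constrained program~\ref{eq:lambda}. Convexity follows since $s_\lambda$ is affine in $\lambda$, so $(s_\lambda(X))_+$ is convex and the remaining term is linear; thus $\lambda^\star$ is computable in polynomial time by the stochastic subgradient method of Section~\ref{sec:computation}, under the same boundedness assumptions on $\|\lambda^\star\|_2$ and on $\min_{i\in[p],l\in[k]}\Pr[G_i,\event^{(i)}_l]$ (the variance bound of Lemma~\ref{lm:subgradient} carries over with $\tau$-weighted summands, worsening the constant only by a bounded factor).

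The main obstacle I anticipate is not algebraic but the justification of strong duality and attainment when $\calX$ is infinite, so that $f$ ranges over the infinite-dimensional set $[0,1]^\calX$; this is handled exactly as in Theorem~\ref{thm:attribute}, where the primal is treated as a linear program and feasibility of the constraint set is assumed. A secondary point is confirming convergence of the stochastic subgradient method, i.e.\ that the second-moment bound $G$ remains finite after the $\tau$-weighting, which again reduces to the estimate of Lemma~\ref{lm:subgradient}.
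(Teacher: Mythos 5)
Your proposal is correct and follows exactly the route the paper intends: the paper omits the proof of Theorem~\ref{thm:char_linear}, stating only that it is ``similar to Theorem~\ref{thm:attribute} and Lemma~\ref{lm:not_sensitive_find_lambda},'' and your argument fills in precisely those steps --- rewriting $\rho_q(f)\geq\tau$ as the $p(p-1)$ one-sided linear constraints~\eqref{eq:DI_constraint}, applying strong LP duality over randomized classifiers with the same factor-of-two rescaling of the multipliers, reading off the threshold classifier $\I[s_\lambda(x)>0]$ from the coefficient $-2s_\lambda(x)$ of $f(x)$, and substituting back to obtain the stated dual convex program (correctly noting that one-sided constraints yield a single nonnegative multiplier per pair and hence no $(\lambda_{ij})_+$ term). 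Your handling of the stochastic subgradient convergence via the analogue of Lemma~\ref{lm:subgradient} likewise matches the paper's treatment in Section~\ref{sec:computation}.
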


\noindent
The proof is similar to Theorem~\ref{thm:attribute} and Lemma~\ref{lm:not_sensitive_find_lambda}, and hence we omit it.
Then Theorem~\ref{thm:algorithm_linear} is a direct conclusion of Theorem~\ref{thm:char_linear}.

We also consider the setting that only $N$ samples $\left\{(x_i,z_i,y_i)\right\}_{i\in [N]}$ drawn from $\Im$ are given instead of knowing $\Im$. 
Different from Algorithm~\ref{alg:meta_mult}, we only need to estimate $\Im$ by $\widehat{\Im}$ and apply the algorithm stated in Theorem~\ref{thm:char_linear}.
We call this algorithm \textsf{Meta2}.
The quantification of \textsf{Meta2} is almost the same to Theorem~\ref{thm:quantification} except that the error parameter $\eps$ should not exist, i.e., the output classifier $f$ of \textsf{Meta2} satisfies $\min_{i\in [p]}q_i(f)\geq \tau\cdot \max_{i\in [p]}q_i(f)-\kappa$. 

\begin{remark}[\textbf{Comparison between Algorithms~\ref{alg:meta_mult} and~\textsf{Meta2}}]
	Both Algorithm~\ref{alg:meta_mult} and~\textsf{Meta2} can handle multiple linear group performance functions.
	We analyze their advantages as follows:
	\begin{itemize}
		\item (Advantages of Algorithm~\ref{alg:meta_mult}.) Algorithm~\ref{alg:meta_mult} can also handle linear-fractional group performance functions but \textsf{Meta2} can not.
		On the other hand, \textsf{Meta2} introduces $(p-1)p$ Lagrangian parameters, while Algorithm~\ref{alg:meta_mult} only introduces $p$ Lagrangian parameters by applying Algorithm~\ref{alg:plugin}.
		When the number $p$ is large, introducing more Lagrangian parameters will increase the running time a lot and make the formulation of the output classifier more complicated.
		\item (Advantages of \textsf{Meta2}.) \textsf{Meta2} does not introduce an additional error parameter $\eps$ as Algorithm~\ref{alg:plugin}.
		Moreover, \textsf{Meta2} only needs to solve a convex optimization problem, while Algorithm~\ref{alg:plugin} solves around $\eps^{-1}$ optimization problems.
		If we require the error parameter $\eps$ to be extremely small, the running time of Algorithm~\ref{alg:plugin} can be much longer than \textsf{Meta2}.
	\end{itemize}
\end{remark}

\section{Details of Remark~\ref{remark:delta} for Fairness Metric $\MD$}
\label{app:delta}

In this section, we discuss another fairness metric $\MD$.
Assume there is only one sensitive attribute $Z\in \left\{1,2,\ldots,p\right\}$.
Also assume that $\calF=\left\{0,1\right\}^\calX$ and $q^\Im\in \calQ_{\mathrm{lin} }$ is a linear-fractional group performance function.
Given $\tau\in [-1,0]$, we consider the following fair classification problem induced by $\MD$, which captures existing constrained optimization problems~\cite{zafar2017fair,zafar2017fairness,menon2018the} as special cases.
\begin{equation} \tag{$\delta$-Fair}
\label{eq:progdelta}
\begin{split}
& \min_{f\in \calF} \Pr_\Im\left[f\neq Y\right] \\ 
& s.t.,~ \delta_{q^\Im}(f)=\min_{i\in [p]}q^\Im_i(f)- \max_{i\in [p]}q^\Im_i(f)\geq \tau.
\end{split}
\end{equation}

\noindent
Similar to Theorem~\ref{thm:progDI}, we first reduce \ref{eq:progdelta} to \ref{eq:progfair} by the following theorem.
Then combining with Algorithm~\ref{alg:plugin} (designed for \ref{eq:progfair}), we prove the existence of an efficient algorithm that computes an approximately optimal classifier for \ref{eq:progdelta}.

\begin{theorem}[\textbf{Reduction from \ref{eq:progdelta} to \ref{eq:progfair}}]
	\label{thm:progdelta}
	Given $\tau\in [-1,0]$, let $f^\star_\tau$ denote an optimal fair classifier for \ref{eq:progdelta}.
	% let $f^\star_\tau := \arg\min_{f\in \calF: \DI(f)\geq \tau} L(f;\Im)$. 
	%
	Given a $\beta$-approximate algorithm $A$ for \ref{eq:progfair} ($\beta\geq 1$) and any $\eps>0$, there exists an algorithm that applies $A$ at most $\lceil (1+\tau)/\eps \rceil$ times and computes a classifier $f\in \calF$ such that 1) $\Pr\left[f\neq Y\right]\leq \beta\cdot\Pr\left[f^\star_\tau\neq Y\right]$; 2) $\MD(f)\geq \tau-\eps$.
\end{theorem}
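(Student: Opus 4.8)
The plan is to mirror the discretization argument from the proof of Theorem~\ref{thm:progDI}, adapting the grid of fairness bounds to the additive metric $\MD$ exactly as indicated in Remark~\ref{remark:delta}. I set $L := \lceil (1+\tau)/\eps \rceil$, and for each $i \in [L]$ let $a_i := (i-1)\cdot \eps$ and $b_i := i\cdot \eps - \tau$. For each $i$ I build a \ref{eq:progfair} program $P_i$ with the uniform bounds $\ell_j = a_i$ and $u_j = b_i$ for all $j \in [p]$, run the $\beta$-approximate algorithm $A$ on $P_i$ to obtain $f_i$, and finally output the $f_i$ of least prediction error, call it $f$. Since $A$ is invoked exactly $L = \lceil (1+\tau)/\eps \rceil$ times, the complexity claim is immediate.

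To verify condition (2), I note that any feasible solution of $P_i$ satisfies $\min_{j} q_j(f_i) \geq a_i$ and $\max_{j} q_j(f_i) \leq b_i$ simultaneously, so
\[
\MD(f_i) = \min_{j} q_j(f_i) - \max_{j} q_j(f_i) \geq a_i - b_i = (i-1)\eps - (i\eps - \tau) = \tau - \eps.
\]
Because this bound is uniform in $i$, the output $f$ inherits $\MD(f) \geq \tau - \eps$. This is the routine half of the argument.

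The step needing care, and the main obstacle, is showing that $f^\star_\tau$ is itself feasible for one of the programs $P_j$ with $j \in [L]$, which is what licenses the accuracy bound. Here I select the index from the \emph{maximum} group performance rather than the minimum, taking $j = \max\{1, \lceil (\max_{i} q_i(f^\star_\tau) + \tau)/\eps \rceil\}$. Feasibility of $f^\star_\tau$ for \ref{eq:progdelta} gives $\max_{i} q_i(f^\star_\tau) \leq \min_{i} q_i(f^\star_\tau) - \tau$, which forces both $a_j \leq \min_{i} q_i(f^\star_\tau)$ and $\max_{i} q_i(f^\star_\tau) \leq b_j$; the nontrivial point is that $j$ stays within the range $[L]$, and this is exactly where I invoke $\max_{i} q_i(f^\star_\tau) \leq 1$, so that $\lceil(\max_i q_i(f^\star_\tau)+\tau)/\eps\rceil \leq \lceil(1+\tau)/\eps\rceil = L$. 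This tighter-than-obvious control on the number of calls is the one feature distinguishing the additive case from the multiplicative case of Theorem~\ref{thm:progDI}, where indexing by the minimum sufficed.

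Once $f^\star_\tau$ is known to be feasible for $P_j$, the definition of the $\beta$-approximate algorithm $A$ together with the choice of $f$ yields $\Pr\left[f \neq Y\right] \leq \Pr\left[f_j \neq Y\right] \leq \beta \cdot \Pr\left[f^\star_\tau \neq Y\right]$, which is condition (1). This completes the reduction.
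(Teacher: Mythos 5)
Your proposal is correct and follows essentially the same route as the paper's proof: the same grid $a_i=(i-1)\eps$, $b_i=i\eps-\tau$ over $L=\lceil(1+\tau)/\eps\rceil$ instances of \ref{eq:progfair}, the same output rule, and the same two-part verification. The only divergence is that you locate the feasible index $j$ via $\max_i q_i(f^\star_\tau)$ with an explicit clamp into $[L]$ using $\max_i q_i(f^\star_\tau)\leq 1$, whereas the paper picks the bucket containing $\min_i q_i(f^\star_\tau)$; both work, and your version makes the range check $j\in[L]$ explicit where the paper leaves it implicit.
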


\begin{proof}
	Let $T:=\lceil (1+\tau)/\eps \rceil $. 
	For each $i\in [T]$, denote $a_i := (i-1)\cdot \eps$ and $b_i := i \cdot \eps-\tau$.
	For each $i\in [T]$, we construct a \ref{eq:progfair} program $P_i$  with $\ell_j=a_i$ and $u_j=b_i$ for all $j\in [p]$.
	Then we apply $A$ to compute $f_i\in \calF$ as a solution of $P_i$.
	Among all $f_i$, we output $f$ such that $\Pr\left[f\neq Y\right]$ is minimized.
	Next, we verify that $f$ satisfies the conditions in the theorem.

	Note that $a_i\geq b_i+\tau-\eps$ for each $i\in [T]$. 
	We have $\MD(f)=\min_{i\in [p]}q_i(f)-\max_{i\in [p]}q_i(f)\geq \tau-\eps$.
	On the other hand, assume that $$(j-1)\cdot \eps\leq \min_{i\in [p]}q_i(f^\star_\tau) < j\cdot \eps$$ for some $j\in [T]$.
	Since $f^\star_\tau$ is a feasible solution of \ref{eq:progDI},  
	$\max_{i\in [p]}q_i(f^\star_\tau)\leq \min_{i\in [p]}q_i(f^\star_\tau)-\tau < j\cdot \eps-\tau$.
	Hence, $f^\star_\tau$ is a feasible solution of Program $P_j$.
	By the definitions of $A$ and $f$, we have 
	$\Pr\left[f\neq Y\right]\leq \Pr\left[f_j\neq Y\right]\leq \beta\cdot \Pr\left[f^\star_\tau\neq Y\right]$.
\end{proof}

\noindent
By Theorem~\ref{thm:algorithm_linear}, there exists a 1-approximate algorithm for \ref{eq:progfair}.
Thus, there exists an efficient algorithm to approximately solve \ref{eq:progdelta}.
Similar to Appendix~\ref{app:multi}, this conclusion can also be generalized to multiple sensitive attributes and multiple group performance functions.

We also consider the practical setting that only $N$ samples $\left\{(x_i,z_i,y_i)\right\}_{i\in [N]}$ drawn from $\Im$ are given, instead of $\Im$.
Similar to Algorithm~\ref{alg:meta_mult}, we propose the following algorithm \textsf{Meta-$\MD$}: first estimate $\Im$ by $\widehat{\Im}$ on samples $\left\{(x_i,z_i,y_i)\right\}_{i\in [N]}$, and then compute a classifier based on $\widehat{\Im}$ by solving a family of Programs \ref{eq:progfair} as stated in Theorem~\ref{thm:progdelta}.
The only difference from Algorithm~\ref{alg:meta_mult} is that Line 2 of Algorithm~\ref{alg:meta_mult} should be changed to $L :=\lceil \frac{1+\tau}{\eps} \rceil$, $a_i := (i-1)\cdot \eps$ and $b_i := i \cdot \eps-\tau$.
Similar to Theorem~\ref{thm:quantification}, we have the following quantification result for \textsf{Meta-$\MD$}.

\begin{theorem}
	\label{thm:quantification_delta}
	Let $\kappa:=2\cdot \max_{i\in [p], f\in \calF}\left|q^{\widehat{\Im}}_i(f)-q^\Im_i(f) \right|$.
	Define an approximately optimal fair classifier $f^\star$ for \ref{eq:progdelta} by
	\begin{align*}
	\begin{split}
	f^\star:= & \arg\min_{f\in \calF} \Pr_\Im\left[f\neq Y\right] \\
	& s.t.,\delta_{q^\Im}(f)\geq \tau+\kappa.
	\end{split}
	\end{align*}
	Then Algorithm \textsf{Meta-$\MD$} outputs a classifier $f$ such that 
	\begin{enumerate}
		\item $\Pr_\Im\left[f\neq Y\right]\leq \Pr_\Im\left[ f^\star\neq Y\right]+2\cdot d_{TV}(\widehat{\Im},\Im) $,
		\item $\delta_{q^\Im}(f)\geq \tau-\eps-\kappa$.
	\end{enumerate}
\end{theorem}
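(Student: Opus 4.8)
The plan is to mirror the proof of Theorem~\ref{thm:quantification} exactly, replacing the multiplicative fairness metric $\DI$ with the additive metric $\MD$ throughout. The structure of the argument has two parts: first establish that the optimal classifier $f^\star$ for \ref{eq:progdelta} under $\Im$ remains feasible for the perturbed problem under the estimated distribution $\widehat{\Im}$, and then invoke the reduction (now Theorem~\ref{thm:progdelta} in place of Theorem~\ref{thm:progDI}) to transfer accuracy and fairness guarantees from $\widehat{\Im}$ back to $\Im$.

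First I would prove the feasibility claim $\delta_{q^{\widehat{\Im}}}(f^\star)\geq \tau$. By definition of $f^\star$ we have $\delta_{q^\Im}(f^\star)\geq \tau+\kappa$, i.e. $q^\Im_i(f^\star)-q^\Im_j(f^\star)\geq \tau+\kappa$ for the relevant pair of groups. The key estimate is the additive analogue of Inequality~\eqref{eq:3}: using $\kappa=2\max_{i,f}|q^{\widehat{\Im}}_i(f)-q^\Im_i(f)|$, for any two groups
\begin{align*}
q^\Im_i(f^\star)-q^\Im_j(f^\star)\leq q^{\widehat{\Im}}_i(f^\star)-q^{\widehat{\Im}}_j(f^\star)+2\max_{i,f}\left|q^{\widehat{\Im}}_i(f)-q^\Im_i(f)\right|.
\end{align*}
Here the additive case is actually cleaner than the multiplicative one, since the factor $(1+\tau)$ appearing in \eqref{eq:3} is replaced by the constant $2$, and no $\tau$-scaling enters the perturbation term. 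Combining this with $\delta_{q^\Im}(f^\star)\geq\tau+\kappa$ yields $\delta_{q^{\widehat{\Im}}}(f^\star)\geq\tau$, so $f^\star$ is feasible for \ref{eq:progdelta} under $\widehat{\Im}$.

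Given feasibility, I would apply Theorem~\ref{thm:progdelta} with $\beta=1$ (the $1$-approximate algorithm for \ref{eq:progfair} guaranteed by Theorem~\ref{thm:algorithm_linear}) to the distribution $\widehat{\Im}$. This gives $\Pr_{\widehat{\Im}}[f\neq Y]\leq\Pr_{\widehat{\Im}}[f^\star\neq Y]$ and $\delta_{q^{\widehat{\Im}}}(f)\geq\tau-\eps$. To move the fairness bound back to $\Im$, I would again use the additive perturbation estimate to obtain $\delta_{q^\Im}(f)\geq\delta_{q^{\widehat{\Im}}}(f)-\kappa\geq\tau-\eps-\kappa$, establishing conclusion (2). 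For the accuracy bound (1), the argument is identical to Theorem~\ref{thm:quantification}: the total variation distance controls the difference in prediction error between the two distributions, giving $\Pr_\Im[f\neq Y]\in\Pr_{\widehat{\Im}}[f\neq Y]\pm d_{TV}(\widehat{\Im},\Im)$ and similarly for $f^\star$, so chaining through $\Pr_{\widehat{\Im}}[f\neq Y]\leq\Pr_{\widehat{\Im}}[f^\star\neq Y]$ yields the $2\cdot d_{TV}(\widehat{\Im},\Im)$ loss.

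I expect the only genuine obstacle to be bookkeeping around the definition of $\MD$ as a difference over \emph{min minus max}: since $\delta$ involves both the minimizing and maximizing group, the feasibility transfer must be argued for the pair achieving these extrema under the relevant distribution, and one must be careful that the pair may differ between $\Im$ and $\widehat{\Im}$. This is handled exactly as in the symmetry step of Theorem~\ref{thm:quantification}: the perturbation bound holds uniformly over all groups, so it applies to whichever pair is extremal. Everything else is a direct substitution of the additive reduction Theorem~\ref{thm:progdelta} for the multiplicative Theorem~\ref{thm:progDI}, and I would note that the proof is otherwise word-for-word the same as Theorem~\ref{thm:quantification}.
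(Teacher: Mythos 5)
Your proposal is correct and follows the paper's own proof essentially verbatim: establish $\delta_{q^{\widehat{\Im}}}(f^\star)\geq\tau$ via the uniform additive perturbation bound $q^{\widehat{\Im}}_i(f^\star)-q^{\widehat{\Im}}_j(f^\star)\geq q^\Im_i(f^\star)-q^\Im_j(f^\star)-\kappa$, invoke Theorem~\ref{thm:progdelta} under $\widehat{\Im}$, and transfer back to $\Im$ paying $\kappa$ in fairness and $2\cdot d_{TV}(\widehat{\Im},\Im)$ in accuracy. Your observation that the extremal pair may differ between $\Im$ and $\widehat{\Im}$ but is handled by the uniformity of the bound over all pairs is exactly the right bookkeeping.
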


\begin{proof}
	The proof is almost the same to Theorem~\ref{thm:quantification}.
	We first prove that $\delta_{q^{\widehat{\Im}}}(f^\star)\geq \tau$.
	The key is to show the following inequality: for any $i,j\in [p]$,
	\[
	q^{\widehat{\Im}}_i(f^\star) - q^{\widehat{\Im}}_j(f^\star) \geq q^{\Im}_i(f^\star) - q^{\Im}_j(f^\star) - \kappa \geq \delta_{q^{\Im}}(f^\star)-\kappa \geq \tau.
	\]
	Then we apply Theorem~\ref{thm:progdelta} to relate the quantification of $f$ with $f^\star$ under $\widehat{\Im}$.
	More concretely, we prove that 

	1) $\Pr_{\widehat{\Im}}\left[f\neq Y\right]\leq \Pr_{\widehat{\Im}}\left[ f^\star\neq Y\right]$; 
	
	2) $\delta_{q^{\widehat{\Im}}}(f)\geq \tau-\eps$.

	\noindent
	Finally, we transform the distribution $\widehat{\Im}$ back to $\Im$, which introduces the additional errors $2\cdot d_{TV}(\widehat{\Im},\Im)$ and $\kappa$ as stated in the theorem.
	It completes the proof.
\end{proof}

\section{Price of Fairness Constraints}
\label{sec:price}

An important concern of fair classification is the tradeoff between accuracy and fairness. 
We would like to quantify the tradeoff by measure of the underlying distribution $\Im$.
This quantification may also guide us to select appropriate fairness constraints such that the loss in accuracy is not too much.
We first define the price of fairness constraints.

\begin{definition}[\textbf{Price of fairness constraints}]
	\label{def:price}
	Given $\ell,u\in \R^{p}$, define $f_{\ell,u}^\star$ to be an optimal fair classifier for \ref{eq:progfair}.
	The \emph{price of fairness constraints} is defined by
	\[
	F(\ell,u)=\Pr_\Im\left[f^\star_{\ell,u}\neq Y\right]-\Pr_\Im\left[f^\star_{0,1}\neq Y\right],
	\] 
	i.e., the increase of the prediction error by introducing fairness constraints.
\end{definition}

\noindent
Since requiring tighter fairness constraints can never improve the performance of the optimal fair classifier, $F: \R^{2p}\rightarrow \R_{\geq 0}$ is non-decreasing as $\ell_i$ increases or $u_i$ decreases $(i\in [p])$.

For simplicity, we again consider the case of $\calF=\left\{0,1\right\}^{\calX}$, a single sensitive attribute and a single group performance function $q\in \calQ_{\mathrm{lin} }$.
Similar to Appendixs~\ref{app:algorithm} and~\ref{app:multi}, the following theorem can be extended to $\calF=\left\{0,1\right\}^{\calX\times [p]}$ and generalized to multiple sensitive attributes and multiple group performance functions $q^{(i)}\in \calQ_{\mathrm{linf} }$.

\begin{theorem}[\textbf{Price of fairness constraints}]
	\label{thm:price}
	Suppose $\calF:=\left\{0,1\right\}^\calX$ and $q\in \calQ_{\mathrm{lin} }$.
	For any $x\in \calX$, define $\eta(x):=\Pr\left[Y=1\mid X=x\right]$.
	Given $\ell,u\in [0,1]^p$, define $\lambda^\star$ to be the optimal solution of~\eqref{eq:main_lambda} and $s_{\lambda^\star}$ as in~\eqref{eq:s_lambda}.
	% For any $\lambda\in \R^p$, define $s_\lambda(x)=\eta(x)-0.5+\sum_{i\in [p]}\lambda_i\cdot \left(\sum_{j=1}^{k} \frac{\alpha^{(i)}_j}{\pi^j_i}\cdot \eta^j_i(x)\right)$.
	%
	Then 
	$$
	F(\ell,u)=2\cdot \Exp_X\left[B_{\lambda^\star}\left(X\right)\right],
	$$ 
	where
	$$
	B_{\lambda^\star}\left(x\right) = \left|\eta(x)-c\right|\cdot \I\left[\left(\eta(x)-0.5\right)\cdot s_{\lambda^\star}(x)\leq 0 \right].
	$$
\end{theorem}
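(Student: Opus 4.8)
The plan is to compute the prediction error of the optimal fair classifier $f^\star_{\ell,u}=\I[s_{\lambda^\star}(X)>0]$ and subtract the prediction error of the unconstrained optimal classifier $f^\star_{0,1}=\I[\eta(X)-0.5>0]$, exploiting the fact that both are threshold functions on $\calX$ that differ only pointwise on the set where their thresholds disagree. First I would invoke Lemma~\ref{lm:costrisk} to write $\Pr[f\neq Y]=\pi+\Exp_X[(1-2\eta(X))\cdot f(X)]$ for any classifier $f$. Since both classifiers are indicator functions of the form $\I[\cdot>0]$, the difference $F(\ell,u)=\Pr[f^\star_{\ell,u}\neq Y]-\Pr[f^\star_{0,1}\neq Y]$ equals $\Exp_X[(1-2\eta(X))\cdot(\I[s_{\lambda^\star}(X)>0]-\I[\eta(X)-0.5>0])]$, because the common $\pi$ cancels.

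Next I would analyze the integrand pointwise in $x$. The factor $1-2\eta(x)=-2(\eta(x)-0.5)$, so each term contributes $-2(\eta(x)-0.5)\cdot(\I[s_{\lambda^\star}(x)>0]-\I[\eta(x)-0.5>0])$. The indicator difference is nonzero exactly when the two thresholds $s_{\lambda^\star}(x)$ and $\eta(x)-0.5$ have opposite sign behavior at the cutoff $0$, i.e.\ when $(\eta(x)-0.5)\cdot s_{\lambda^\star}(x)\leq 0$; this is precisely the event appearing in the indicator $\I[(\eta(x)-0.5)\cdot s_{\lambda^\star}(x)\leq 0]$ in the statement. On this event, a short case check (splitting on the sign of $\eta(x)-0.5$) shows that the contribution of $-2(\eta(x)-0.5)\cdot(\I[s_{\lambda^\star}(x)>0]-\I[\eta(x)-0.5>0])$ equals $2|\eta(x)-0.5|$. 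Thus each nonzero term equals $2|\eta(x)-0.5|\cdot\I[(\eta(x)-0.5)\cdot s_{\lambda^\star}(x)\leq 0]$, which matches the claimed formula with $c=0.5$, giving $B_{\lambda^\star}(x)=|\eta(x)-c|\cdot\I[(\eta(x)-0.5)\cdot s_{\lambda^\star}(x)\leq 0]$.

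The main obstacle — and the only subtle point — is the sign bookkeeping in the case analysis and the role of the constant $c$. When $\eta(x)-0.5>0$ but $s_{\lambda^\star}(x)\leq 0$, the fair classifier predicts $0$ while the unconstrained one predicts $1$, so the indicator difference is $-1$ and the contribution is $-2(\eta(x)-0.5)\cdot(-1)=2(\eta(x)-0.5)=2|\eta(x)-0.5|$; symmetrically when $\eta(x)-0.5<0$ but $s_{\lambda^\star}(x)>0$, the difference is $+1$ and the contribution is $-2(\eta(x)-0.5)=2|\eta(x)-0.5|$ again. One must also confirm that boundary cases (where $\eta(x)=0.5$ or $s_{\lambda^\star}(x)=0$) contribute zero, which they do since $|\eta(x)-0.5|=0$ there. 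I would close by taking expectations over $X$ to obtain $F(\ell,u)=2\cdot\Exp_X[B_{\lambda^\star}(X)]$, which completes the proof; here $c=0.5$ is the natural threshold value inherited from the unconstrained Bayes-optimal rule, and the factor $2$ reflects the $1-2\eta$ weighting in Lemma~\ref{lm:costrisk}.
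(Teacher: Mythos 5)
Your proposal is correct and follows essentially the same route as the paper's proof: invoke the characterization $f^\star_{\ell,u}=\I[s_{\lambda^\star}(X)>0]$ and $f^\star_{0,1}=\I[\eta(X)-0.5>0]$, apply Lemma~\ref{lm:costrisk} so that the common constant cancels, and resolve the resulting integrand by a pointwise sign case analysis, identifying $c=0.5$. The only (shared) blemish is the boundary case $s_{\lambda^\star}(x)=0$ with $\eta(x)<0.5$, where the stated formula for $B_{\lambda^\star}$ and the actual indicator difference disagree --- a measure-zero issue that the paper's own case analysis glosses over in the same way.
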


\begin{proof}
	By Lemma~\ref{lm:not_sensitive_find_lambda}, there exists some $\lambda^\star$ of the form as stated in the theorem, such that $\CS(f^\star_{\ell,u}; 0.5)= \CS(\I\left[s_{\lambda^\star}>0\right]; 0.5) $, and $f^\star_{0,1} = \CS(\I\left[\eta-0.5>0\right]; 0.5 )$.
	Recall that $\Pr\left[f\neq Y\right]=2\cdot\CS(f;0.5)$ for any $f$.
	Then by Lemma~\ref{lm:costrisk}, 
	\begin{align*}
	& \frac{1}{2}F(\ell,u) =  \CS(f^\star_{\ell,u};0.5)-\CS(f^\star_{0,1};0.5) \\
	= & \CS(\I\left[s_{\lambda^\star}>0\right];0.5)-\CS(\I\left[\eta-0.5>0\right];0.5) \\
	= & (1-0.5)\cdot \pi+ \Exp_X\left[\left(0.5-\eta(X)\right)\cdot \I\left[s_{\lambda^\star}(X)>0\right]\right] \\
	& - (1-0.5)\cdot \pi- \Exp_X\left[\left(0.5-\eta(X)\right)\cdot \I\left[\eta(X)-0.5>0\right]\right] \\
	= & \Exp_X\left[\left(0.5-\eta(X)\right)\cdot \left(\I\left[s_{\lambda^\star}(X)>0\right]-\I\left[\eta(X)-0.5>0\right]\right)\right] \\
	= & \Exp_X\left[|\eta(X)-0.5|\cdot \I\left[(\eta(X)-0.5)\cdot s_{\lambda^\star}(X)\leq 0\right]\right] \\
	= & \Exp_X\left[B_{\lambda^\star}\left(X\right)\right].
	\end{align*}
	The second last equality can be verified as follows. 
	For any fix $x\in \calX$, consider two terms 
	\[
	\alpha:=\left(0.5-\eta(x)\right)\cdot \left(\I\left[s_{\lambda^\star}(x)>0\right]-\I\left[\eta(x)-0.5>0\right]\right),
	\] 
	and
	\[
	\beta:=|\eta(x)-0.5|\cdot \I\left[(\eta(x)-0.5)\cdot s_{\lambda^\star}(x)\leq 0\right].
	\] 
	\begin{enumerate}
		\item If $\eta(x) = 0.5$, then $\alpha = \beta = 0$.
		\item If $\eta(x)-0.5 >0$ and $s_{\lambda^\star}(x)>0$, then $\alpha = \beta = 0$.
		\item If $\eta(x)-0.5 >0$ and $s_{\lambda^\star}(x)\leq 0$, then $\alpha = \beta = \eta(x)-0.5$.
		\item  If $\eta(x)-0.5 <0$ and $s_{\lambda^\star}(x)\geq 0$, then $\alpha = \beta = c-\eta(x)$.
		\item If $\eta(x)-0.5 < 0$ and $s_{\lambda^\star}(x)< 0$, then $\alpha = \beta = 0$.
	\end{enumerate}
\end{proof}

\noindent
Observe that if $\lambda^\star=0$, then $B_{\lambda^\star}\left(x\right) = 0$ for any $x\in \calX$.
It implies that there is no price of fairness, i.e., $F(\ell,u)=0$.
By the definition of $s_{\lambda^\star}$, observe that if $\eta(x)-0.5$ and 
$$\eta'(x):=\sum_{i\in [p]}\lambda^\star_i \cdot \left(\sum_{j=1}^{k} \frac{\alpha^{(i)}_j}{\Pr\left[G_i,\event^{(i)}_j \right]}\cdot \Pr\left[G_i,\event^{(i)}_j\mid X=x\right]\right)$$ have the same sign, then $B_{\lambda^\star}(x)=0$.
Hence, if two random variables $\eta(X)-0.5$ and $\eta'(X)$ have the same sign with high probability, then the price of fairness $F(\ell,u)$ is small.
Thus, $F(\ell,u)$ depends on the alignment between the target variable $\eta(X)-0.5$ and the variable $\eta'(X)$ relating to the sensitive attribute.

\section{Other Experiments}
\label{sec:other_experiments}
For German and COMPAS datasets, we present here the plots for accuracy vs output fairness for Algo~\ref{alg:meta_mult}-SR and Algo~\ref{alg:meta_mult}-FDR and other algorithms. 

\begin{figure}
%		\begin{minipage}{.47\textwidth}
		\centering
		\includegraphics[width=1\linewidth]{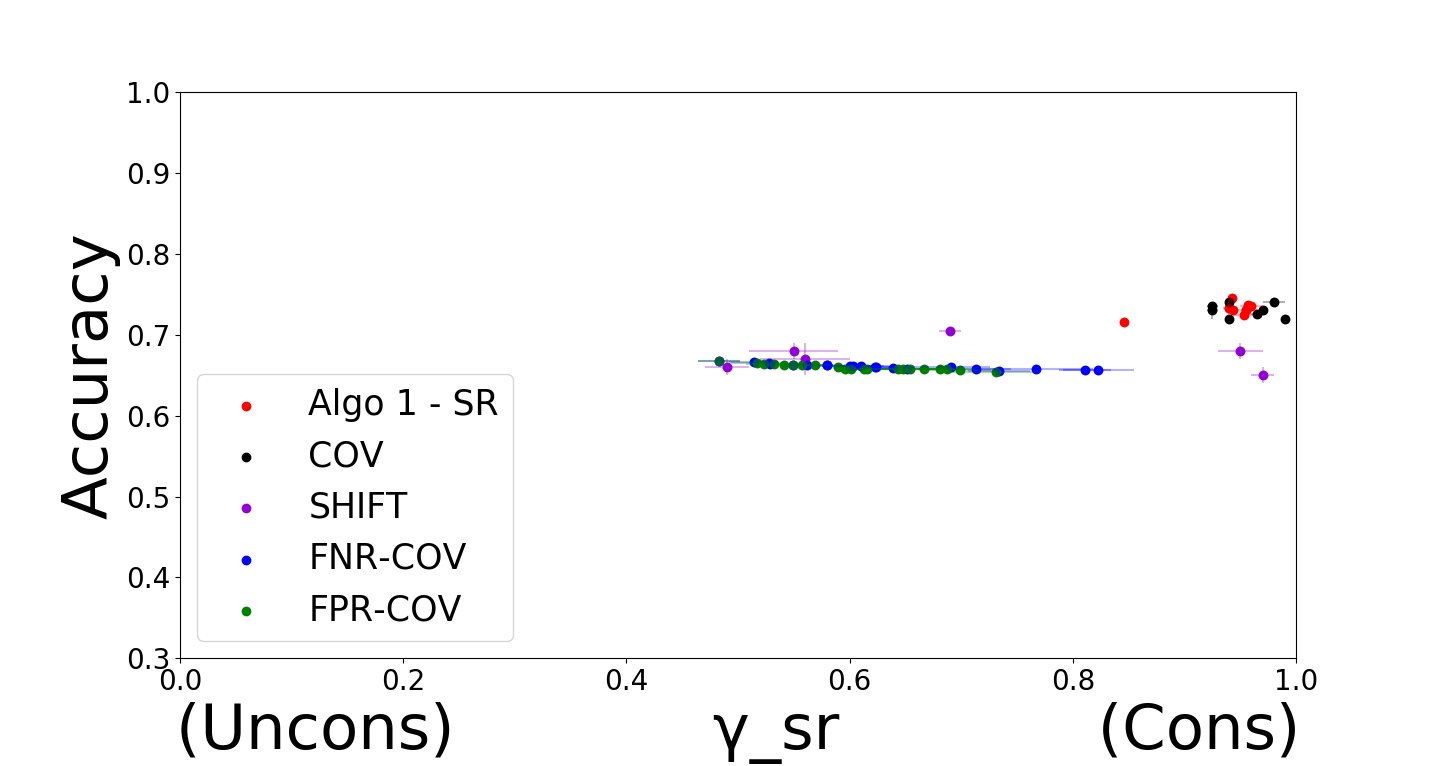}
		\caption{Acc. vs. $\gamma_{\rm sr}$. Algo~\ref{alg:meta_mult}-SR can achieve better fairness with respect to SR  for German dataset and is indistinguishable with respect to accuracy.}
		\label{fig:german_sr}
\end{figure}
\begin{figure}
		\centering
			\includegraphics[width=1\linewidth]{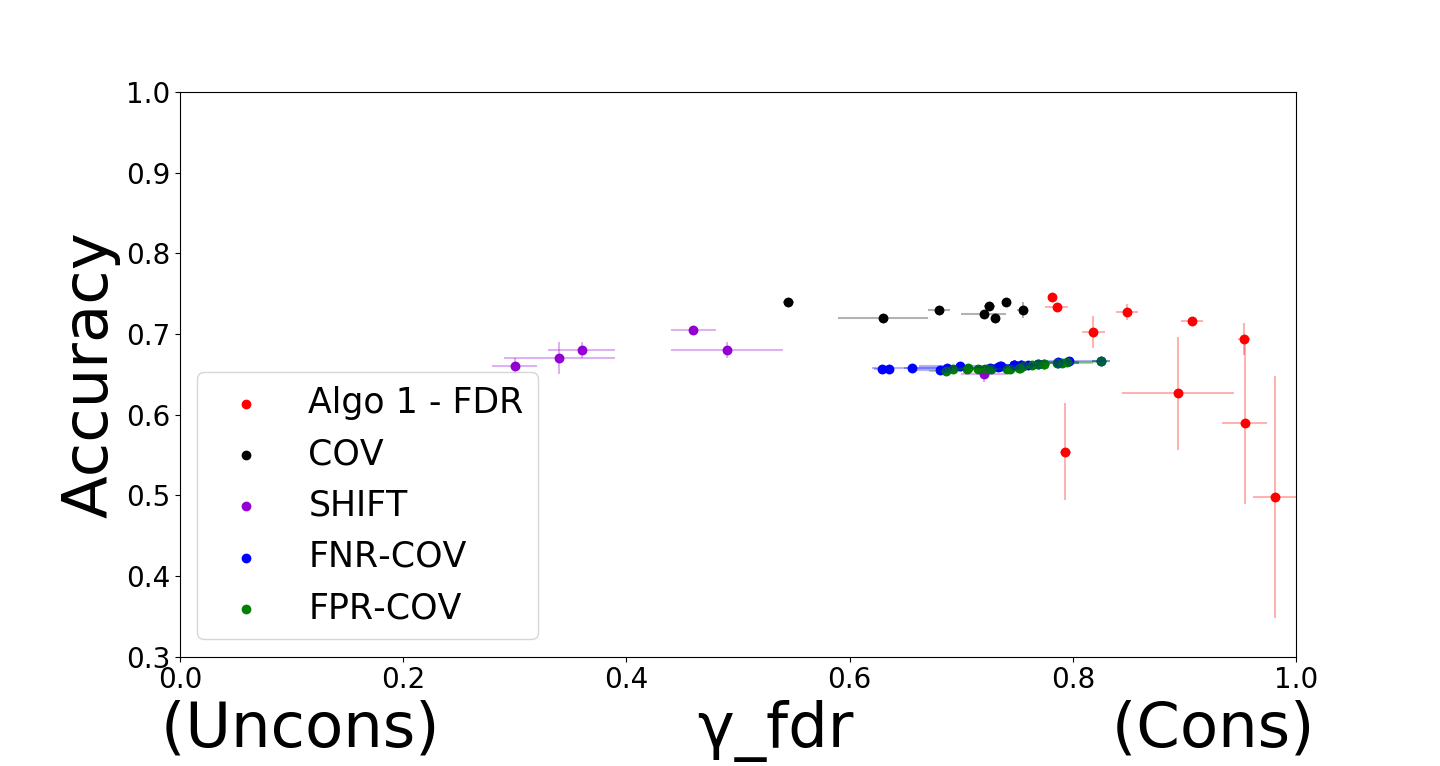}
		\caption{Acc. vs. $\gamma_{\rm fdr}$. Algo~\ref{alg:meta_mult}-FDR Algo~\ref{alg:meta_mult}-FDR achieves better fairness with respect to FDR  for German dataset and is indistinguishable with respect to accuracy.}
		\label{fig:german_fdr}
%		\end{minipage}
\end{figure}

\begin{figure}
%		\begin{minipage}{.47\textwidth}

		\centering
		\includegraphics[width=1\linewidth]{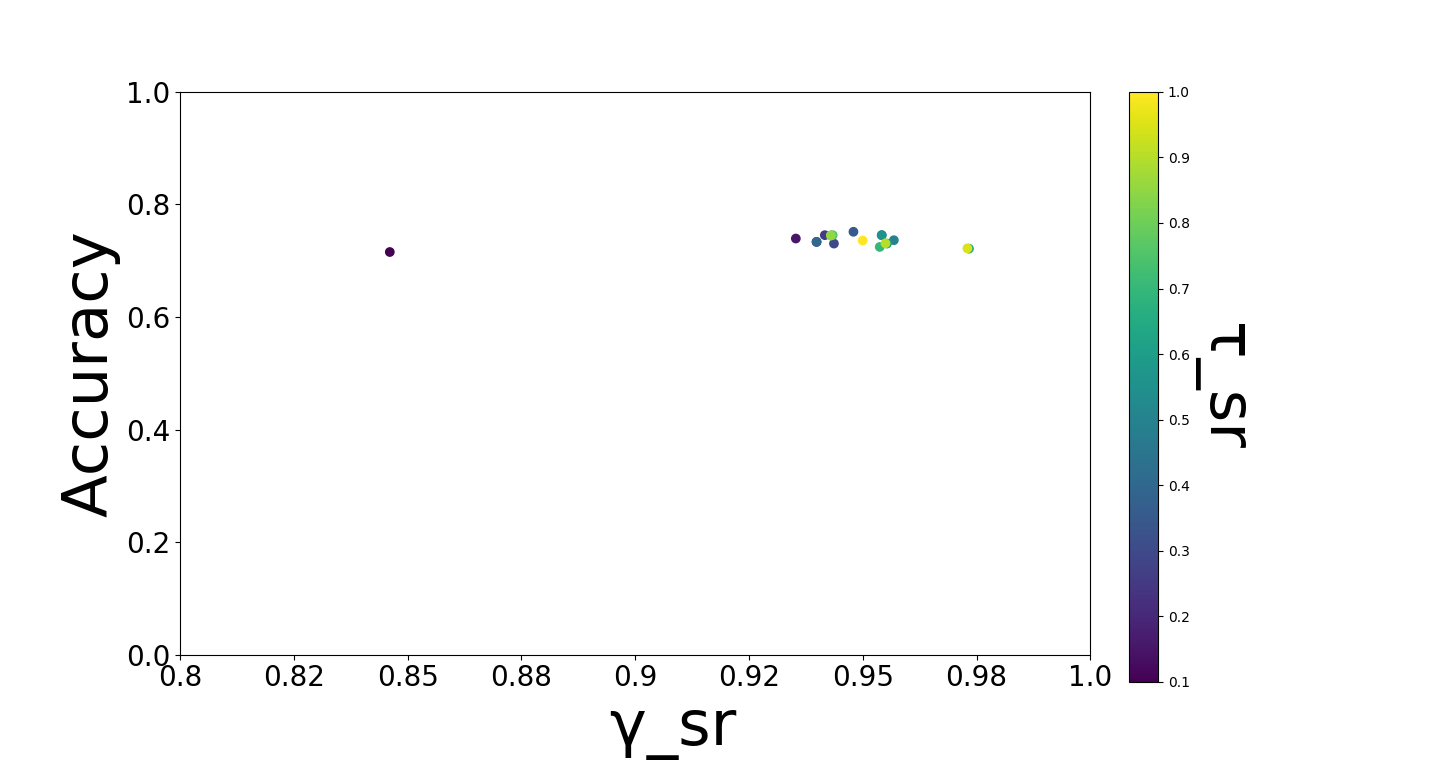}
		\caption{Acc. vs. $\gamma_{\rm sr}$ in Algo~\ref{alg:meta_mult}-SR for different values of input $\tau_{\rm sr}$ for German dataset.}
		\label{fig:german_sr_tau}
	%
%\end{minipage}\hspace{0.05cm}
%		\begin{minipage}{.47\textwidth}
\end{figure}

\begin{figure}

		\centering
			\includegraphics[width=1\linewidth]{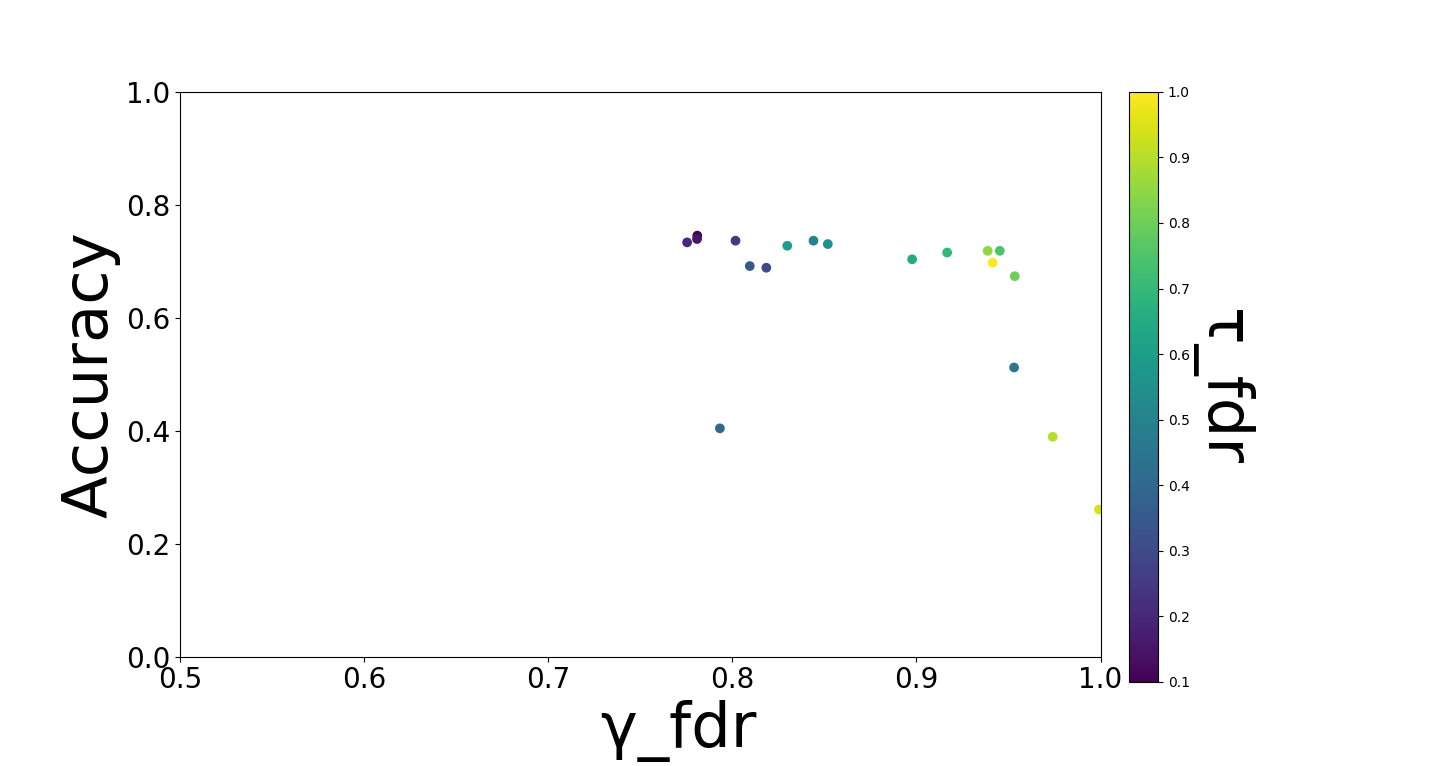}
		\caption{Acc. vs. $\gamma_{\rm fdr}$ in Algo~\ref{alg:meta_mult}-FDR for different values of input $\tau_{\rm fdr}$ for German dataset.}
		\label{fig:german_fdr_tau}
%		\end{minipage}
\end{figure}

\begin{figure}
%		\begin{minipage}{.47\textwidth}
		\centering
		\includegraphics[width=1\linewidth]{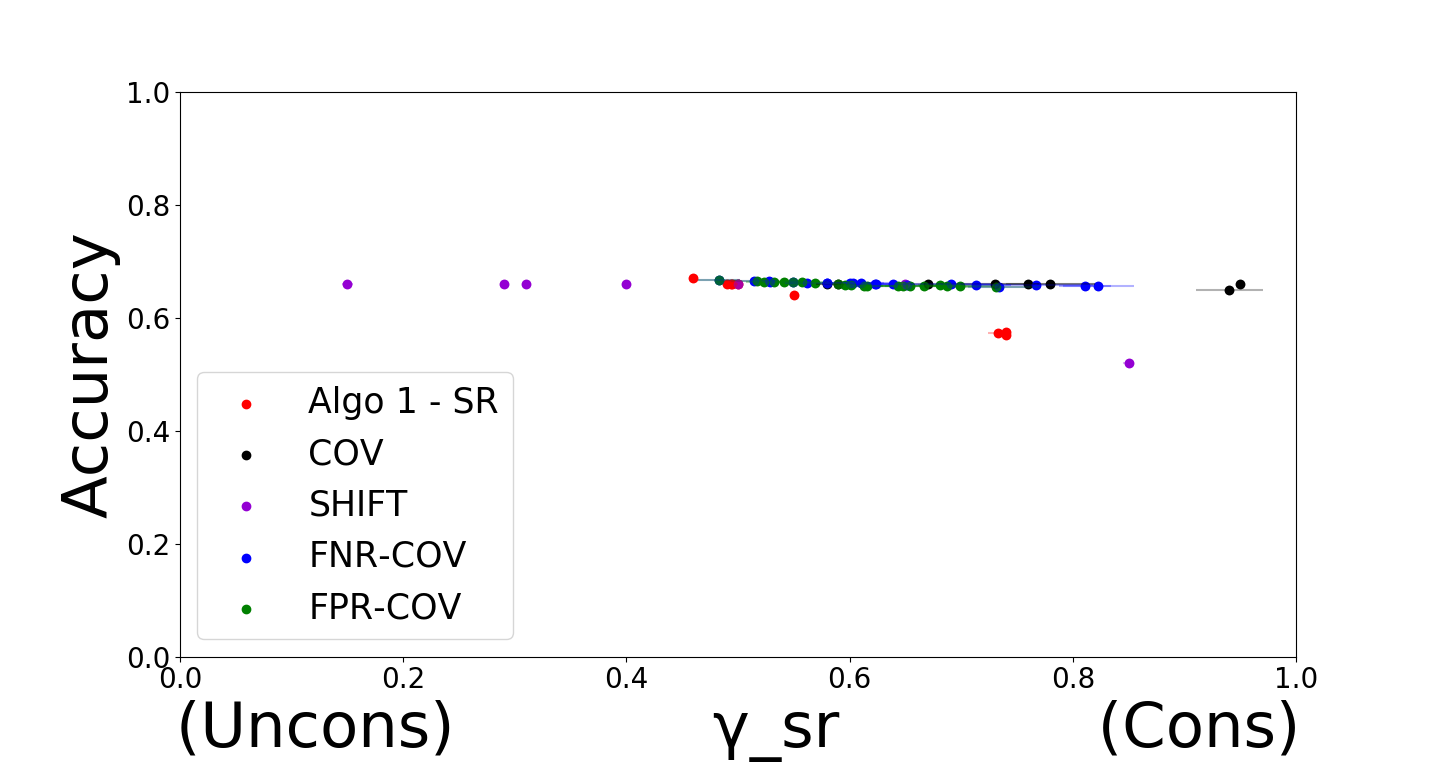}
		\caption{Acc. vs. $\gamma_{\rm sr}$. Algo~\ref{alg:meta_mult}-SR can achieve better fairness with respect to SR than any other method for COMPAS dataset, albeit at a loss to accuracy.}
		\label{fig:compass_sr}
\end{figure}

\begin{figure}
		\centering
			\includegraphics[width=1\linewidth]{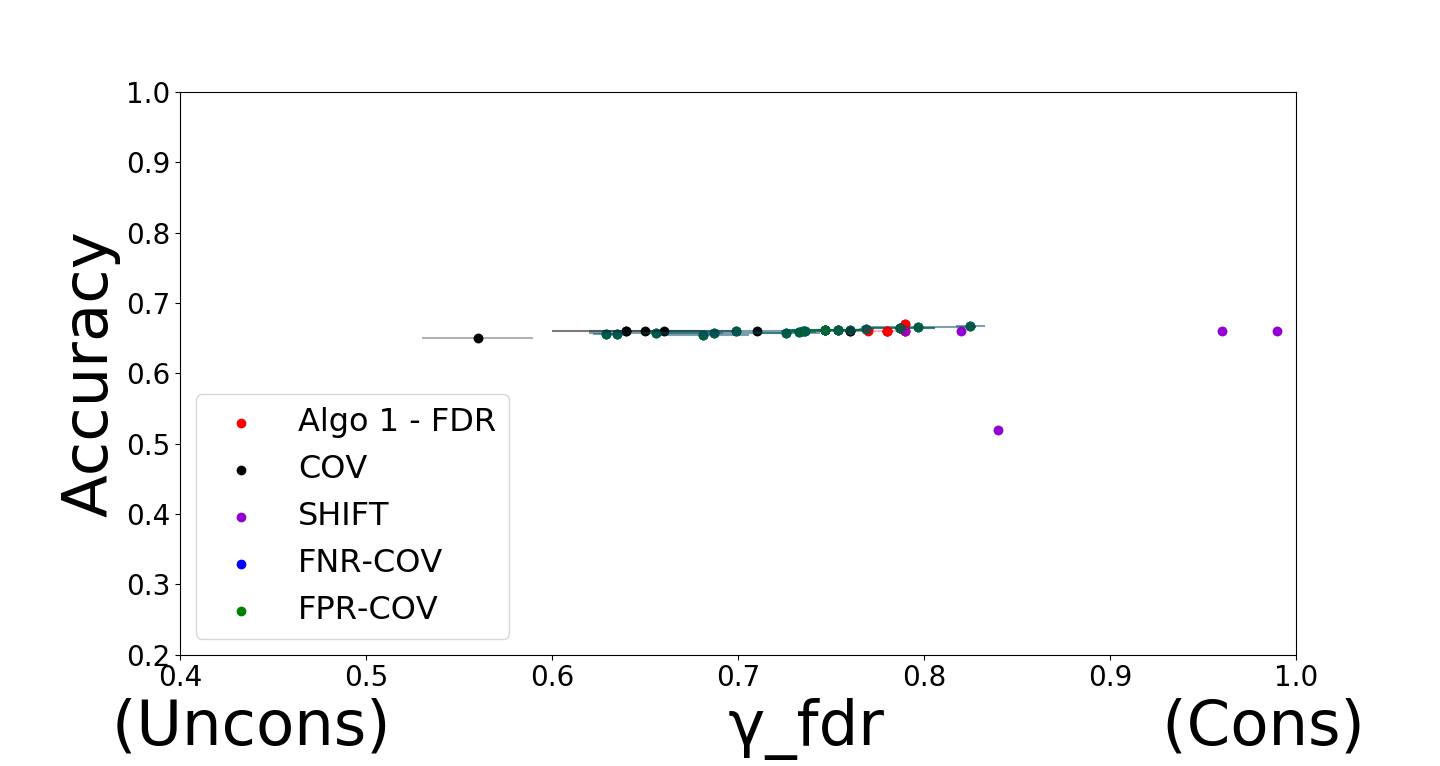}
		\caption{Acc. vs. $\gamma_{\rm fdr}$. Algo~\ref{alg:meta_mult}-FDR Algo~\ref{alg:meta_mult}-FDR achieves better fairness with respect to FDR  for COMPAS dataset and is indistinguishable with respect to accuracy.}
		\label{fig:compass_fdr}
%				\end{minipage}
\end{figure}

\begin{figure}
%		\begin{minipage}{.47\textwidth}
		\centering
		\includegraphics[width=1\linewidth]{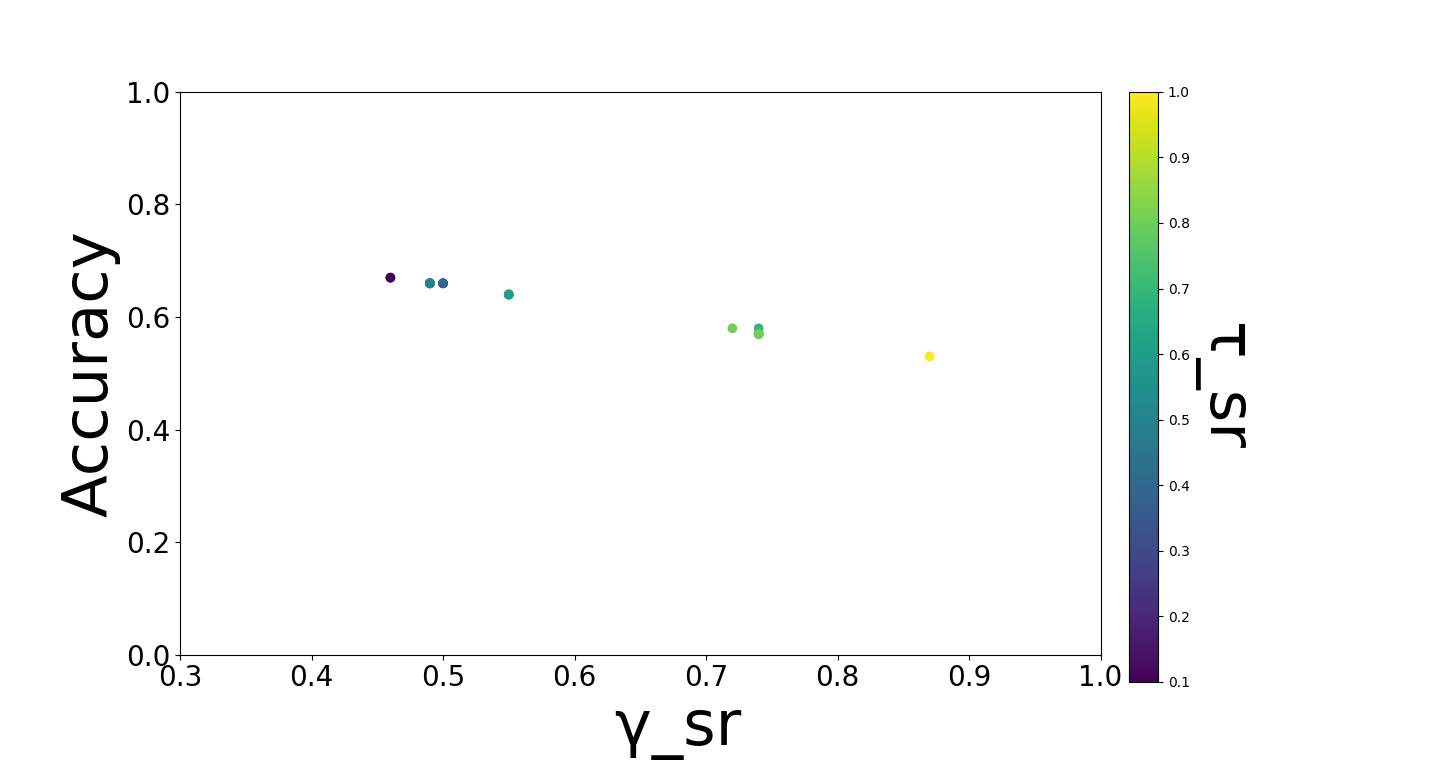}
		\caption{Acc. vs. $\gamma_{\rm sr}$ in Algo~\ref{alg:meta_mult}-SR for different values of input $\tau_{\rm sr}$ for COMPAS dataset.}
		\label{fig:compass_sr_tau}
\end{figure}

\begin{figure}
		\centering
			\includegraphics[width=1\linewidth]{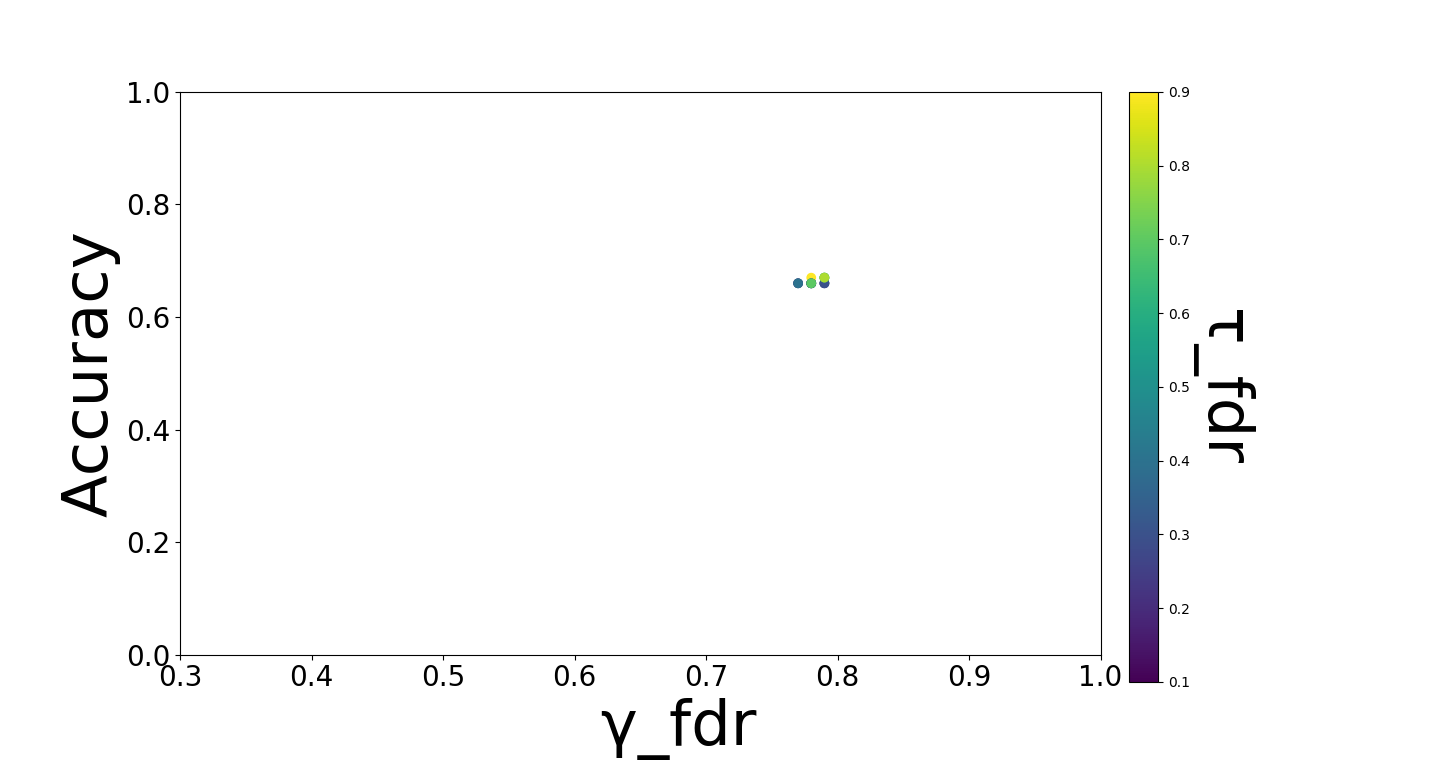}
		\caption{Acc. vs. $\gamma_{\rm fdr}$ in Algo~\ref{alg:meta_mult}-FDR for different values of input $\tau_{\rm fdr}$ for COMPAS dataset.}
		\label{fig:compass_fdr_tau}
%		\end{minipage}
\end{figure}

\end{document}